\documentclass{article}
\usepackage{graphicx} % Required for inserting images

\title{Para-Gaussian Approximations of Random Deep Neural Networks}
\author{Mihai Nica and Janosch Ortmann}
\date{October 2026}

\usepackage[utf8]{inputenc} % allow utf-8 input

\usepackage[style=alphabetic, maxbibnames=99]{biblatex}
\usepackage{float}

\usepackage[margin=1in]{geometry}
\usepackage{amsmath}
\usepackage{amsthm}
\usepackage{amssymb}
\usepackage{enumerate}

\usepackage[T1]{fontenc}    % use 8-bit T1 fonts
\usepackage{hyperref}       % hyperlinks
\hypersetup{
    colorlinks=true,
    linkcolor=blue,
    filecolor=magenta,      
    urlcolor=cyan}
\usepackage{url}            % simple URL typesetting
\usepackage{booktabs}       % professional-quality tables

\usepackage{amsfonts}       % blackboard math symbols
\usepackage{nicefrac}       % compact symbols for 1/2, etc.
\usepackage{microtype}      % microtypography
\usepackage{xcolor}         % colors
\usepackage{graphicx}
\usepackage{bm}

\renewbibmacro{in:}{%
  \ifentrytype{article}
    {}
    {\printtext{\bibstring{in}\intitlepunct}}}

\DeclareFieldFormat[article, inproceedings]{title}{#1}

\global\long\def\cov{\mathbf{Cov}}%
\global\long\def\var{\mathbf{Var}}%
\global\long\def\e{\mathbf{E}}%
\global\long\def\norm#1{\left\Vert #1\right\Vert }%
\global\long\def\abs#1{\left|#1\right|}%
\global\long\def\given#1{\left|#1\right.}%
\global\long\def\bE{\mathbb{E}}%
\global\long\def\bN{\mathbb{N}}%
\global\long\def\bR{\mathbb{R}}%
\global\long\def\bZ{\mathbb{Z}}%

\global\long\def\cE{\mathcal{E}}%
\global\long\def\cF{\mathcal{F}}%
\global\long\def\d{\text{d}}%

\global\long\def\al{\alpha}%
\global\long\def\be{\beta}%
\global\long\def\ga{\gamma}%
\global\long\def\de{\delta}%
\global\long\def\ka{\kappa}%
\global\long\def\la{\lambda}%
\global\long\def\si{\sigma}%
\global\long\def\vp{\varphi}%
\global\long\def\Si{\Sigma}%

\global\long\def\dequal{\stackrel{d}{=}}%

\newcommand{\rb}[1]{\left(#1\right)}
\newcommand{\ab}[1]{\left[#1\right]}
\newcommand{\set}[1]{\left\{#1\right\}}
\newcommand{\wt}[1]{\widetilde{#1}}

\theoremstyle{plain}
\newtheorem{thm}{Theorem}[section]
\newtheorem{defn}[thm]{Definition}
\newtheorem{assumption}[thm]{Assumption}
\theoremstyle{remark}
\newtheorem{rem}[thm]{Remark}

\theoremstyle{plain}
\newtheorem{prop}[thm]{Proposition}
\theoremstyle{plain}
\newtheorem{lem}[thm]{Lemma}
\theoremstyle{plain}
\newtheorem{cor}[thm]{Corollary}

\newcommand{\nneuron}{n}

\newcommand{\dataset}{A}

\newcommand{\pg}{para-Gaussian}
\newcommand{\Pg}{Para-Gaussian}
\newcommand{\pgdk}[4]{\mathcal{P}^{(#1,#2,#4)}_{#3}}
\newcommand{\pgd}[3]{\mathcal{P}^{(#1,#2)}_{#3}}
\newcommand{\pglk}[4]{\mathcal{P}^{(#1,#2,#4)}_{#3}}
\newcommand{\pgl}[3]{\mathcal{P}^{(#1,#2)}_{#3}}

\newcommand{\GaussExp}[1]{\bm{\cE}_{#1}}

\newcommand{\pgfourierk}[4]{\phi^{(#1,#2,#4)}_{#3}}

\begin{document}
	
	\maketitle
	
	\begin{abstract}
    It is known that randomly initialized neural networks have Gaussian behaviour in the $n\to \infty$ limit. Unfortunately, this approximation can be poor for finite $n$ and gets worse as one goes deeper into the network. In this work, we develop a more accurate approximation that is \emph{slightly} non-Gaussian, which we call the \emph{\pg}\ distribution and is constructed from the Hermite polynomials. We  show how this para-Gaussian approximation can lead to a practical recursive method to approximate the deeper layers of the network. Finally, we apply this theory to linear neural networks (where things can be solved exactly) and to ReLU neural networks (where things can be approximated).
			\end{abstract}

	\section{Introduction}
\begin{figure}[H]
    \centering
    \includegraphics[width=0.95\linewidth]{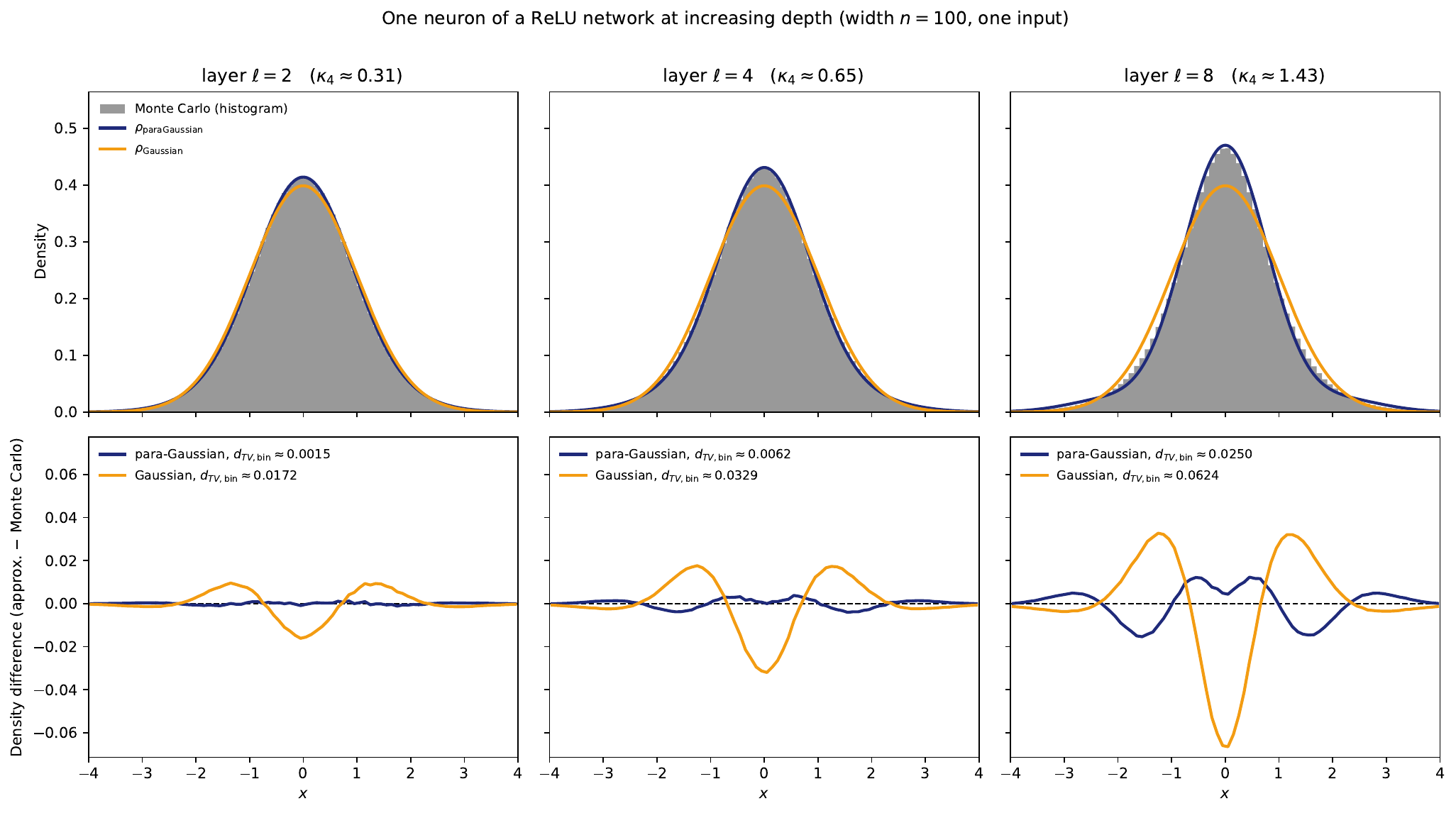}
    \caption{\footnotesize The distribution of a single neuron in a randomly initialized ReLU network ($\vp(x)=\sqrt{2}\max\{x,0\}$, width $n=100$, one input) at
    layers $\ell=2,4,8$, estimated from $2^{24}$ Monte Carlo samples per layer. \textbf{Top row:} Comparing the Monte Carlo estimates (grey), to the the Gaussian approximation $\mathcal{N}(0,1)$ (yellow) and the para-Gaussian law (blue) with density
    $\frac{1}{\sqrt{2\pi}}e^{-x^2/2}\big[1+\frac{\ka_4}{4!}\mathrm{He}_4(x)\big]$, where $\mathrm{He}_4(x)=x^4-6x^2+3$ and $\ka_4=3\big[(1+5/n)^\ell-1\big]$.
    \textbf{Bottom row:} the \emph{difference} between each approximation and the Monte Carlo density, estimated by a histogram with bin width $0.1$,
    with both approximations averaged over the same bins. The legend shows the binned total variation distance
    $d_{TV,\mathrm{bin}}=\frac12\sum_{k}\abs{p_k-q_k}$, where $p_k$ is the fraction of Monte Carlo samples in the $k$-th bin, $q_k$ is the mass the approximation assigns to that bin. Notice that the deviation from the Gaussian grows with depth. Theorem \ref{thm:TV-gauss-pg} and Theorem \ref{thm:main} generalize this picture to multiple inputs and outputs to obtain the \emph{joint} para-Gaussian distribution that approximates the distribution of multiple neurons in a neural network.}
    \label{fig:intro-not-quite-gaussian}
\end{figure}

    \subsection{Gaussians and not-quite-Gaussians}
    It is a staple of statistics to approximate complicated distributions by the humble Gaussian law. Randomly initialized large neural networks are no exception and many theoretical works  have studied how as the number of neurons, $n$, in the hidden layers grows large, $n \to \infty$, there is convergence to a Gaussian process \cite{Lee2018NNGP,Matthews2018,Yang2019,Hanin2023GP}. However, to invoke another statistical trope, one may inquire: how large of an $n$ is large enough? This question is of particular interest in neural networks because the Gaussian approximation gets \emph{worse} the deeper one looks in the network (see Figure \ref{fig:intro-not-quite-gaussian} for a simple Monte Carlo experiment illustrating this). In other words, the errors in the Gaussian approximation stack up as more layers are added to the network. To understand truly \emph{deep} neural networks, we must go beyond the Gaussian law.

    %This is because each output is a sum of independent random variables, so the Central Limit Theorem can be applied. In other words, the vector $\vec X$ of outputs of a layer has \textbf{approximate (?)} characteristic function
    %\begin{align}
    %    \phi(\vec u): = \bE[e^{i \vec u \cdot \vec X}] = \exp\set{-\frac12 \vec u^T \Sigma u}
    %\end{align}
    %where $\Sigma$ is the covariance matrix between the neurons.

    %This is approximation is the \emph{NN Gaussian process} (NNGP) \textbf{TODO: references?}. 
   %But: $n\to \infty$ limit is not quite right. (e.g. depth degeneracy and whatnot).

    %While this picture holds cleanly for a fixed depth of the network as the number of neurons in a layer goes to infinity, the infinite-depth-and-with view has received much attention recently. (\textbf{TODO: references}). [\textbf{TODO:} give some reasons here already why the approximation is insufficient? References]

    %Since the NNGP theory follows from the central limit theorem, can we take lower orders of that result into account  to obtain a better approximation? In \cite{NicaOrtmann2024M3L} we hint at the fact that we can, see Figure \ref{fig:intro-not-quite-gaussian} which shows the distribution of a single neuron in a ReLU network.

 One classical method beyond the Gaussian is the \emph{Edgeworth expansion} around the Central Limit Theorem (CLT) for sums of i.i.d. random variables. Concretely,  let $Y^1,Y^2,\ldots$ be i.i.d. centred $\bR^d$-valued random vectors of some law $Y$ with covariance matrix $\Si$ and let $X^n:=\frac1{\sqrt n}\sum_{j=1}^nY^j$ be their normalized sum\footnote{Note that we use superscripts for sequences of vectors like $X^n$ in order to save subscripts for the components. Don't be mixed up with exponents!}. Assume also that the law of $Y$ has bounded density w.r.t Lebesgue\footnote{This avoids discrete distributions that complicate things; a more general condition that is also sufficient is that the characteristic function satisfies $\intop |\phi_{Y}(u)|^p\d u <\infty $ for some $p\geq 1$.} and enough finite moments. The classical  proof of the CLT shows the only surviving part of the characteristic function of $X^n$ is the Gaussian part, whereas the Edgeworth expansion keeps the next order vanishing term, precisely one has (see \cite{NicaOrtmann2026} for more details),
    \begin{align}
	 \label{eq:Fourier-PG-k}
	\mathbb{E}\left[ e^{\imath \langle X^n, \vec u \rangle} \right] = \phi_{X^n} (\vec{u})
	&\approx \exp\left( -\frac{1}{2}\vec{u}^T \Sigma \vec{u} + \frac{1}{\sqrt{n}^{k-2}} \sum_{\substack{k_1,\ldots,k_d\\ k_1+\ldots+k_d = k}} \frac{\kappa_{k_1,\ldots,k_d}}{k_1!\cdots k_d!} (\imath \vec{u})^{\vec{k}} \right), 
\end{align}
where $\vec{u}^{\vec{k}} = {u_1}^{k_1}\cdots u_d^{k_d}$ and $\kappa_{k_1,\ldots,k_n} = \kappa\big( \underbrace{Y_1,\ldots,Y_1}_{k_1\text{ times}},\ldots, \underbrace{Y_d,\ldots,Y_d}_{k_d\text{ times}}  \big)$ are the \emph{joint cumulants} of the components of the random vector $Y$, and $k \geq 3$ is the order of the \emph{first non-trivial higher cumulant} of $Y$ (i.e. the smallest $k$ for which this is non-zero). For symmetric random variables (i.e. $Y \dequal -Y$), this first non-trivial higher order is typically $k=4$.

The characteristic function \eqref{eq:Fourier-PG-k} is a not-quite-Gaussian characteristic function. Modulo the approximation $\exp \frac{1}{n^p} x \approx 1 + \frac{1}{n^p}x$, when we invert the characteristic function \eqref{eq:Fourier-PG-k} to obtain a density for the sum $X^n$, we find the following not-quite-Gaussian density function involving Hermite polynomials. We call this a para-Gaussian\footnote{``para'' here being the Greek prefix meaning nearby, alongside, beyond, or beside. The common word ``paranormal'', meaning slightly outside the normal laws of physics, has a similar etymological decomposition. } distribution.

    \begin{defn}
    \label{def:PG}
    Fix $d\in \mathbb{N}$ and a covariance matrix $\Sigma \in \mathbb{R}^{d \times d}$. For $\vec{k}=(k_1,\ldots,k_d) \in \mathbb{Z}_{\geq 0}^d$, the {\bf \emph{multivariable Hermite polynomial}} $H_{k_1,\ldots,k_d}(\cdot ;\Sigma) : \mathbb{R}^d \to \mathbb{R}$ is the polynomial that satisfies
    \begin{align}\label{eq:H_is_iu_to_the_n}
			H_{k_1,\ldots,k_d} \rb{x_1,\ldots,x_d; \Si}\vp_\Si(\vec{x}) & = \frac1{(2\pi)^d}\int_{\bR^d} (\imath \vec{u})^{\vec{k}}\, \exp\set{-\frac12 \vec{u}^T \Si \vec{u}-\imath \,\vec{x}\cdot \vec{u}} \, d\,\vec{u} = (-1)^{|\vec{k}|} \partial_{\vec{k}} \varphi_\Si(\vec{x}),
		\end{align}
where $\varphi_\Sigma(\vec{x}) ={\sqrt{(2\pi)^d \det{\Sigma}}}^{-1}  e^{-\frac12 \vec{x}^T \Sigma^{-1} \vec{x}}$ is the multi-variable Gaussian density of covariance $\Sigma$. This can alternatively be written as a Gaussian expectation
    \begin{align*}
        H_{k_1,\ldots,k_d} \rb{x_1,\ldots,x_d; \Si} & = \bE_{\vec Z\sim \mathcal{N}(0,\Si^{-1})}\ab{\prod_{a=1}^d \Big(\rb{\Si^{-1}\vec x}_a +\imath Z_a\Big)^{k_a}}
    \end{align*}
		The {\bf\emph{\pg\ measure}} $\pgdk \Sigma\ka n k$ on $\bR^d$ of order $k$ with covariance kernel $\Sigma$, $k$-cumulants $\kappa=\{\kappa_{k_1,\ldots,k_d}\}_{k_1 + \ldots +k_d = k}$ and scale parameter $n$ has density with respect to the Lebesgue measure on $\bR^d$ given by 
		\begin{align}
        \label{eq:paraGaussian-density}
			\pgdk \Sigma\ka n k (\vec{x}) &= 
			\varphi_{\Si}(\vec{x})\Bigg(1 + \frac{1}{\sqrt{n}^{k-2}} \sum_{\substack{k_1,\ldots,k_d\\ k_1+\ldots+k_d = k}} \frac{\kappa_{k_1, k_2,\ldots,k_d}}{k_1!k_2!\cdots k_d!} H_{k_1,\ldots,k_p}(\vec{x},\Sigma)\Bigg)
		\end{align}
        This density has the following Fourier transform (aka characteristic function):
        \begin{align}
        \label{eq:paraGaussian-transform}
			\intop e^{\imath \langle{\vec{x},  \vec u}\rangle }{\pgdk \Sigma\ka n k} (\vec{x}) \d \vec{x}  &=\exp\left( -\frac{1}{2}\vec{u}^T \Sigma \vec{u}\right)\left(1 + \frac{1}{\sqrt{n}^{k-2}} \sum_{\substack{k_1,\ldots,k_d\\ k_1+\ldots+k_d = k}} \frac{\kappa_{k_1,\ldots,k_d}}{k_1!\cdots k_d!} (\imath \vec{u})^{\vec{k}} \right),
		\end{align}
        We abuse notation and interchangeably use the notation $\pgdk \Sigma\ka n k$ to refer to the density or the measure whenever it is convenient. When $k$ is understood from context (for example, the main case studied in this article has $k=4$ everywhere) we omit the notation $k$ and simply write $\pgd\Si\ka n$. 
	\end{defn}

See Appendix \ref{sec:pg-theory} for some more useful facts about para-Gaussians. For instance, like the classical Gaussian, the para-Gaussian family is closed under linear combinations.

With this definition in hand, the classical Edgeworth expansion result can be summarized as \emph{``Sums of i.i.d. random variables are approximately para-Gaussian''}. The main purpose of this article is to discuss how \pg\ distributions \emph{also} arise in deep fully connected neural networks when they are randomly initialized. The \pg\ law shown in Figure \ref{fig:intro-not-quite-gaussian} illustrates the simplest single-input-single-output version of this idea, where the para-Gaussian law matches the Monte Carlo simulations more accurately than the plain Gaussian. Our main results generalize Figure \ref{fig:intro-not-quite-gaussian} to multiple inputs and multiple outputs (more Monte Carlo simulations showing 2 inputs or 2 outputs can also be found in Sections \ref{sec:linear-application} and \ref{sec:relu-application}).

\subsection{Outline and Contributions}

The rest of the paper is structured as follows. 

\begin{itemize}
\item[\S\ref{sec:tv-distance}] We use the total variation distance to precisely introduce our notation of ``asymptotic rate of convergence''
\item[\S\ref{sec:neural-nets-defn}] We precisely define the randomly initialized neural networks we study here. We state one of our main results, Theorem \ref{thm:TV-gauss-pg}, that the para-Gaussian approximation with $k=4$ has a better rate of convergence to neural networks than the ordinary Gaussian, $O(n^{-2})$ vs. $O(n^{-1})$
\item[\S\ref{sec:recursions}] We show how the para-Gaussian approximation can be used to obtain a recursion relation that estimates the 4-th cumulant in neural networks in terms of the multi-variable Hermite polynomials and state this result as Theorem \ref{thm:main}.
\item[\S\ref{sec:literature}] We discuss how our results fit into the related literature. 
\item[\S\ref{sec:higher_cumulants}] We state and prove Theorem \ref{thm:exact-general-cumulants} that gives the connection between cumulants of the neural network and the random covariance matrix $\Si$
\item[\S\ref{sec:linear-application}] We use our theory to exactly solve the covariance and 4th cumulant in linear networks, and thereby compute the para-Gaussian approximation for these. This is tested against Monte Carlo experiments.
\item[\S \ref{sec:relu-application}] We apply our theory to get approximations for ReLU Neural Networks. This is tested against Monte Carlo experiments. 
\item[\S \ref{sec:para-proof}] Proof details for Theorem \ref{thm:TV-gauss-pg} on the convergence rates of para-Gaussian approximations.
\item[\S \ref{sec:proof-main}] Proof details for Theorem \ref{thm:main} on the recursions for approximating neural networks
\end{itemize}

Our main contributions are Theorem \ref{thm:TV-gauss-pg}, Theorem \ref{thm:main}, and Theorem \ref{thm:exact-general-cumulants}, as described above, as well as the exact solution for the para-Gaussian approximation to linear neural networks of any depth described in \S \ref{sec:linear-application}.

%The characteristic function can be used to calculate expected values and higher moments, see [Boris Hanin and others]. physics types of things, differentiate relu, tempered distributions.

    %However, for other considerations the probability distribution is required. Note that in the one-dimensional setting, Hermite polynomials turn up [\textbf{TODO:} give a reference for edgeworth expansions?] and that a theory of multi-dimensional Hermite polynomials exist. See for example \PSpaper. In particular, the Hermite polynomial with respect to a covariance matrix $\Si$ is given by

    %In \PSpaper it was shown (see Section 7.4 for the one-dimensional case and Section 7.5 for the multidimensional case), that the Fourier transform of \eqref{eq:Fourier-PG} can be inverted to get density for Edgeworth expansions. The key step is to use the approximation $\exp(x)\approx 1+x$ for small $x$. Note that this approximation also appears in the one-dimensional Edgeworth expansions.
    %Applying this approximation, we obtain what we call the \pg\ measure.

\subsection{Total Variation Distance and Asymptotic Rates of Convergence}
\label{sec:tv-distance}

One of our main results is that the para-Gaussian law is a better approximation that the purely Gaussian law in terms of total varaition distance, which we make precise with the following notion of convergence rate.

\begin{defn}
\label{def:TV}
Let $\mu,\nu$ be finite signed measures on $\bR^d$. The {\bf\emph{total variation distance}} between them is the worst-case
difference of integrals over measurable test functions bounded by one,
\[
d_{TV}(\mu,\nu) := \frac{1}{2}\sup_{f:\,\norm{f}_\infty\leq1} \abs{\intop f\, \d\mu - \intop f\, \d\nu},
\]
where the supremum is over measurable $f:\bR^d\to\bR$. For random vectors $X,Y$ we write $d_{TV}(X,Y)$ for the distance between
their laws, so that $d_{TV}(X,Y)=\frac12 \sup_{\norm{f}_\infty\leq1}\abs{\mathbb{E}[f(X)]-\mathbb{E}[f(Y)]}$, and similarly $d_{TV}(X,\nu)$ for
the distance between the law of $X$ and a signed measure $\nu$. We will freely mix and match between random variables and their
laws whenever it is convenient.
\end{defn}

\begin{defn}
\label{def:asymptotically-distributed}
Let $(\mu_n)_{n\geq1}$ be a sequence of finite signed measures on $\bR^d$, and let $\alpha>0$. A sequence $(X_n)_{n\geq1}$ of
$\bR^d$-valued random vectors is {\bf\emph{asymptotically distributed like $\mu_n$ with rate $n^{-\alpha}$}} if
\[
d_{TV}(X_n,\mu_n)=O\left(n^{-\al}\right),\qquad\text{that is,}\qquad \limsup_{n\to\infty}\,n^{\alpha}\,d_{TV}(X_n,\mu_n)<\infty.
\]
When $\mu_n=\mathcal{N}(0,\Si_n)$ we say that $X_n$ is {\bf\emph{asymptotically Gaussian}} with rate $n^{-\alpha}$, and when
$\mu_n=\pglk {\Si_n}{\ka_n}{n}k$ is a \pg\ law %(see Proposition \ref{prop:integrability})
we say that $X_n$ is {\bf\emph{asymptotically \pg}}
with rate $n^{-\alpha}$.
\end{defn}

\begin{rem}
\label{rem:iid-sums}
The classical example is a normalized sum of i.i.d.\ random vectors as in \eqref{eq:Fourier-PG-k}. This sequence is \emph{both} asymptotically Gaussian and
asymptotically \pg, but with different rates. 
\begin{itemize}
\item $X^n$ is asymptotically Gaussian, with $\Si_n=\Si$, with rate $n^{-1/2}$.
\item When the third cumulants of the law $Y$ vanish, then the leading correction to the Gaussian approximation is of order $\frac1n$ and is
given by the fourth cumulants: $X^n$ is asymptotically para-Gaussian with
\[
\Si_n=\Si,\qquad \ka=\ka^{(4)}(Y),
\]
where $\ka^{(4)}(Y)_{abcd}=\ka\big(Y_{a},Y_{b},Y_{c},Y_{d}\big)$ is the fourth cumulant tensor of the law $Y$, with rate $n^{-3/2}$.
\end{itemize}
If all the odd cumulants of $Y$ vanish, for instance if $Y$ is symmetric, both rates improve: $X^n$ is asymptotically Gaussian
with rate $n^{-1}$ and asymptotically para-Gaussian with rate $n^{-2}$. These are exactly the rates in Theorem
\ref{thm:TV-gauss-pg}. See \cite[Theorems 7.9 and 7.10]{NicaOrtmann2026} for a derivation of these expansions via Hermite polynomials.
% TODO: check the exact statement numbers in Bhattacharya--Rao (Theorem 19.2 gives the weighted sup-norm density bound,
% which integrates to an L^1 = total variation bound; Corollary 19.6 was the reference in the earlier draft).
\end{rem}

\subsection{Random fully connected neural networks on initialization and their rates of convergence}
\label{sec:neural-nets-defn}

We now precisely define the fully connected neural networks studied in this paper. 

\begin{defn}
\label{def:neural_net_defn}
    Fix a finite set of inputs $\dataset$, and with each input $\alpha\in\dataset$ a 
vector $x_\alpha\in\bR^{n_{\mathrm{in}}}$, a hidden layer width $n$, and a non-linearity $\vp:\bR\longrightarrow\bR$. The network is defined layer by layer with each layers output indexed by $\{ z_{\alpha,k}^\ell\}_{\alpha \in A, k \in [n]}$: the
neurons of layer $0$ are
\[
z^0_{\alpha,k}=\frac1{\sqrt{n_{\mathrm{in}}}}\sum_{i=1}^{n_{\mathrm{in}}}W^0_{k,i}\,x_{\alpha,i},\qquad k\in[n],
\]
and for $\ell\geq1$ the value of neuron $k\in[n]$ on input $\alpha\in\dataset$ in layer $\ell$ is
\begin{align}
\label{eq:NN-recursion}
z_{\al,k}^\ell = \frac1{\sqrt {n}} \sum_{j=1}^{\nneuron} W^{\ell}_{k,j}\, \varphi\rb{z^{\ell-1}_{\alpha,j}},
\end{align}
where all the weights $W^\ell_{k,j}$ are independent standard Gaussians, independent across layers. 
\end{defn}
\begin{rem}
    The results here can easily be generalized to different hidden layer size $n_1,n_2,\ldots$ but we use $n$ for all of them for notational convenience in this article.
\end{rem}
We are interested in the joint
distribution of all the neurons and all the inputs, layer by layer. A convenient normalization is to choose $\vp$ with
$\mathbb{E}_{G\sim\mathcal{N}(0,1)}\big[\vp(G)^2\big]=1$, so that the variance of a neuron is preserved from layer to layer when the
input has unit norm. Examples are the identity $\vp(x)=x$ and the normalized ReLU $\vp(x)=\sqrt2\max\set{x,0}$. In order to prove any results, we need some regularity on the non-linearity $\vp$. We take the following standard assumption on the non-linearity from \cite[Assumption 1]{Hanin2024}
\begin{assumption}
\label{ass:nonlinearity}
We assume that there exists an $r\in \mathbb{N}$ so that the non-linearity $\varphi:\mathbb{R}\to \mathbb{R}$ is $r$ times differentiable almost everywhere, and grows at most polynomially. 
\end{assumption}

\begin{defn}
\label{def:Si}\textbf{(Random covariance matrix)}
For each layer $\ell$, define the random matrix $\Si^\ell \in \mathbb{R}^{A \times A}$ by:
\begin{equation}
\label{eq:Sigma-def-intro}
\Si^\ell(\alpha,\beta):=\frac1n\sum_{j=1}^n\vp\rb{z^{\ell-1}_{\alpha,j}}\vp\rb{z^{\ell-1}_{\beta,j}},\qquad \alpha,\beta\in\dataset.
\end{equation}
\end{defn}
\begin{rem}
Note that, perhaps confusingly, $\Si^\ell$ is defined in terms of the vectors in layer $\ell-1$. This choice of definition is because the matrix $\Si^\ell$ importantly controls the distribution of layer $\ell$ as in \eqref{eq:conditional_gauss}, so we have chosen to name it after the layer $\ell$ it is \emph{used for} (rather than the layer $\ell-1$ which it is \emph{defined from}.\footnote{And indeed other authors (for example \cite{Hanin2024}) make the reasonable opposite choice to index this by $\ell-1$. Watch out!})
\end{rem}

    \begin{rem}
\label{rem:cond-gauss}
     From the definition of the network \eqref{eq:NN-recursion}, since the weights in each layer are Gaussian, we see that conditionally on the values previous layer $z^{\ell-1}$, the next layer is \emph{conditionally} Gaussian with this covariance matrix 
     \begin{equation}
     \label{eq:conditional_gauss}
     \big(z^\ell_{\alpha,j}\big)_{\alpha\in A,\,j\in[n]} \sim \mathcal{N}\left(\vec{0},I_n \otimes \Si^\ell\right) \text{ conditionally on } \big(z^{\ell-1}_{\alpha,j}\big)_{\alpha\in A,\,j\in[n]}, 
     \end{equation}
     where the covariance $I_n \otimes \Si^\ell$ means that $\e\left[z^\ell_{\alpha,j}\,z^\ell_{\be,k}\middle| \big(z^{\ell-1}_{\alpha,j}\big)_{\alpha\in A,\,j\in[n]}\right]=\de_{j,k}\Si^{\ell}\left(\alpha,\beta\right)$ and $\delta_{a,b}$ is the Kronecker delta. 
     \end{rem}

     As $n\to\infty$, $\Si^\ell$ concentrates around the deterministic infinite width kernel $K^\ell$, defined by the input kernel
$K^0(\alpha,\beta):=\frac1{n_{\mathrm{in}}}\langle x_\alpha,x_\beta\rangle$ and recursively afterwards by
\begin{equation}
\label{eq:K-def}
K^\ell(\alpha,\beta):=\mathbb{E}_{z\sim\mathcal{N}(0,K^{\ell-1})}\big[\vp(z_\alpha)\vp(z_\beta)\big],
\end{equation}
and the infinite width network converges to a Gaussian
process \cite{Lee2018NNGP,Hanin2024} with this kernel. Many of the results are contingent on this infinite-width kernel being non-degenerate, which we state as an assumption.
\begin{assumption}
\label{ass:non-degen}
For every layer $\ell'\leq\ell$, the infinite width kernel $K^{\ell'}$ defined in \eqref{eq:K-def} is non-degenerate (strictly positive definite)
on the inputs $A$.
\end{assumption}

The fluctuations of $\Si^\ell$ around its mean $$\bar\Si^\ell(\alpha,\beta):=\mathbb{E}[\Si^\ell(\alpha,\beta)],$$ are of order $n^{-1/2}$, and
are the source of the non-Gaussian corrections at finite width.

  In fact, more precisely, the higher cumulants of collections of neurons $z^\ell$ can be calculated directly in terms of the higher cumulants of the random matrix $\Si^\ell$, and this is carried out in Theorem \ref{thm:exact-general-cumulants}. To state our main result we need only the second and fourth cumulant, which we record in the following corrollary.

\begin{cor}[to Theorem \ref{thm:exact-general-cumulants}] \label{cor:cumulant_two_and_four} The second and fourth cumulant of the neurons in layer $\ell$ can be written in terms of the random matrix $\Si^\ell$ as follows.

\underline{Covariance:} The covariance of two neurons is given in terms of the expectation of $\Si^\ell$, namely:

    \begin{equation}
    \label{eq:cov-z}
\cov\left[z^\ell_{\alpha_1,j_1},z^\ell_{\alpha_2,j_2}\right] = \delta_{j_1,j_2} \mathbb{E}[\Si^\ell(\alpha_1,\alpha_2)].
    \end{equation}

    For notational compactness, \eqref{eq:cov-z} is saying the covariance of $\{z^\ell_{\alpha,j}\}_{\alpha\in A,1\leq j \leq m}$ is $I_m \otimes \bar\Si^\ell$ for $m \in \mathbb{N}$

    \underline{4th cumulant:} The fourth cumulant for four neurons (not necessarily distinct) is given in terms of covariances of $\Si^\ell$, namely

    \begin{align}
    \label{eq:kappa_from_C}
    \kappa\left( z_{\alpha_1,j_1}^\ell ,z_{\alpha_2,j_2}^\ell,z_{\alpha_3,j_3}^\ell ,z_{\alpha_4,j_4}^\ell \right) & = \delta_{j_1,j_2}\delta_{j_3,j_4} \cov \left[ \Si^\ell(\alpha_1,\alpha_2),\Si^\ell(\alpha_3,\alpha_4)\right]\\ \notag &+ \delta_{j_1,j_3}\delta_{j_2,j_4} \cov \left[ \Si^\ell(\alpha_1,\alpha_3),\Si^\ell(\alpha_2,\alpha_4)\right]\\ \notag &+ \delta_{j_1,j_4}\delta_{j_2,j_3} \cov \left[ \Si^\ell(\alpha_1,\alpha_4),\Si^\ell(\alpha_2,\alpha_3)\right]. \notag
    \end{align}
\end{cor}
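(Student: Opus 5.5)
The plan is to condition on the previous layer and use the law of total cumulance. The key input is Remark \ref{rem:cond-gauss}. Conditionally on $\mathcal{F}^{\ell-1}:=\sigma\big(z^{\ell-1}_{\alpha,j}\big)$, the vector $\big(z^\ell_{\alpha,j}\big)$ is a centred Gaussian with covariance $I_n\otimes \Si^\ell$. Consequently all conditional cumulants of order $\neq 2$ vanish, and the conditional second cumulant is
\[
\kappa\big(z^\ell_{\alpha_1,j_1},z^\ell_{\alpha_2,j_2}\,\big|\,\mathcal{F}^{\ell-1}\big)=\delta_{j_1,j_2}\Si^\ell(\alpha_1,\alpha_2).
\]
We then apply Brillinger's law of total cumulance:
\[
\kappa(X_1,\ldots,X_m)=\sum_{\pi}\kappa\Big(\kappa(X_B:B\in b_1\mid\mathcal{F}),\ldots,\kappa(X_B:B\in b_{|\pi|}\mid\mathcal{F})\Big),
\]
where the sum runs over set partitions $\pi=\{b_1,\ldots,b_{|\pi|}\}$ of $[m]$. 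Since only blocks of size two give nonzero conditional cumulants, only pair partitions survive. This is presumably exactly the content of Theorem \ref{thm:exact-general-cumulants}, and if it is available I would simply specialise it to $m=2$ and $m=4$. Otherwise I would carry out the computation below directly. Finiteness of all the moments involved follows from Gaussianity of the weights and the polynomial growth in Assumption \ref{ass:nonlinearity}, applied inductively over layers.

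\underline{Covariance ($m=2$).} The only partition with all blocks of size two is $\{\{1,2\}\}$. The outer cumulant of a single random variable is its mean. Hence
\[
\cov\big[z^\ell_{\alpha_1,j_1},z^\ell_{\alpha_2,j_2}\big]=\mathbb{E}\big[\delta_{j_1,j_2}\Si^\ell(\alpha_1,\alpha_2)\big],
\]
which is \eqref{eq:cov-z}. Equivalently, one can write $\mathbb{E}[z z']=\mathbb{E}\big[\mathbb{E}[zz'\mid\mathcal{F}^{\ell-1}]\big]$ and note that the means vanish because the weights are centred.

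\underline{Fourth cumulant ($m=4$).} The surviving partitions are $\{\{1,2\},\{3,4\}\}$, $\{\{1,3\},\{2,4\}\}$, $\{\{1,4\},\{2,3\}\}$, and the single block $\{1,2,3,4\}$. The single block contributes $\mathbb{E}$ of a conditional fourth cumulant of a Gaussian, which is zero. Each pair partition contributes the second cumulant of the two conditional covariances, for example
\[
\cov\big[\delta_{j_1,j_2}\Si^\ell(\alpha_1,\alpha_2),\,\delta_{j_3,j_4}\Si^\ell(\alpha_3,\alpha_4)\big].
\]
The Kronecker deltas are deterministic and can be pulled out, which gives the three terms of \eqref{eq:kappa_from_C}.

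As a check independent of total cumulance, I would also expand $\kappa_4=\mathbb{E}[z_1z_2z_3z_4]-\sum_{\text{pairings}}\mathbb{E}[z_az_b]\,\mathbb{E}[z_cz_d]$, using that the means and third moments vanish. Conditioning and applying Wick's/Isserlis' theorem gives
\[
\mathbb{E}[z_1z_2z_3z_4]=\sum_{\text{pairings}}\mathbb{E}\big[\delta\delta\,\Si\,\Si\big],
\]
and subtracting $\mathbb{E}[\delta\Si]\,\mathbb{E}[\delta\Si]$ pairing by pairing leaves exactly the three covariances.

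The main obstacle is only the bookkeeping of the total cumulance formula: making sure that it applies to the conditional structure here, and that the mixed blocks vanish. The Wick-based verification sidesteps this, so I would present that one.
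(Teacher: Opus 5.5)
Your proposal is correct. Your first argument is the paper's own: the corollary is the $p=1,2$ case of Theorem \ref{thm:exact-general-cumulants}, which the paper proves by conditioning on layer $\ell-1$ and applying the law of total cumulance. There only pair partitions survive, and the Kronecker deltas come out by multilinearity. The appendix gives a second proof by expanding $\ln\mathbb{E}[e^{-Y_n}]$ for the quadratic form $Y_n=\frac12\vec\la^T(I_n\otimes\Si^\ell)\vec\la$.

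The Wick/Isserlis computation you say you would actually present is a more elementary route, and it is also correct. Because all means vanish, $\kappa_4$ is the fourth moment minus the three pairing products of second moments. Third moments never enter, so you do not need them to vanish. Conditional Isserlis writes the fourth moment as a sum of $\delta\delta\,\mathbb{E}[\Si\Si]$ over the same three pairings. Subtracting pairing by pairing then leaves exactly the three covariances.

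This route uses only conditional Isserlis and $\mathbb{E}[\Si^\ell(\alpha,\beta)^2]<\infty$. Your polynomial-growth remark supplies that integrability, which the paper leaves implicit. It also spares you from checking how Brillinger's formula applies here, which you yourself flag as the main bookkeeping risk.

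What it does not give is the general $2p$ statement. Extending it would require inverting the full moment--cumulant relation, which is the combinatorics the total-cumulance formula packages. The paper's route therefore yields all even orders and the vanishing of all odd cumulants at once.
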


One of the main purposes of this paper is to establish the following result, which shows that the para-Gaussian law is a better approximation than the pure Gaussian law. 

\begin{thm}
\label{thm:TV-gauss-pg}We fix a layer $\ell$ and a number $m$ of neurons, and
study the random vector $Z^\ell:=\set{z^\ell_{\alpha,j}}_{\alpha\in\dataset,\,j\in[m]}\in\bR^{\dataset\times[m]}$ as the width $n\to\infty$. Suppose that $\vp$ satisfies the smoothness condition Assumption \ref{ass:nonlinearity} and we have the non-degeneracy condition Assumption \ref{ass:non-degen}. Then as $n\to\infty$:
\begin{enumerate}
\item[(i)] The covariance of $Z^\ell$ is given as in \eqref{eq:cov-z}, $\cov(Z^\ell) = I_m \otimes \bar \Si^\ell$, and $Z^\ell$ is asymptotically Gaussian with rate $n^{-1}$:
\[
d_{TV}\Big(Z^\ell,\ \mathcal{N}\big(0,\,I_m\otimes\bar\Si^\ell\big)\Big)=O\left(n^{-1}\right).
\]
\item[(ii)] Let $\kappa^\ell_4$ be the $4$-th cumulant of $Z^\ell$ as given in \eqref{eq:kappa_from_C}. Then $Z^\ell$ is asymptotically para-Gaussian $\pgl{I_m\otimes\bar\Si^\ell}{n\ka_4^\ell}n$\footnote{Note that since $\pgl{\Si}{\ka}{n}$ already builds in a factor of $n$ into the cumulant, we need to feed in the rescaled cumulant  $n\kappa^\ell_4$. It turns out $n\kappa^\ell_4$ is order one, see Remark \ref{rem:C-is-O1}.} with rate $n^{-2}$:
\[
d_{TV}\Big(Z^\ell,\ \pgl{I_m\otimes\bar\Si^\ell}{n \ka_4^\ell}n  
\Big)=O\left(n^{-2}\right).
\]
%where the fourth cumulant tensor $\ka^\ell$ is given in terms of the \emph{covariances} of the random matrix $\Si^\ell$ by
%\begin{align*}
%\ka^\ell\big((\alpha,i),(\beta,j),(\gamma,k),(\delta,l)\big)
%&=\de_{ij}\de_{kl}\cov\big(\Si^\ell(\alpha,\beta),\Si^\ell(\gamma,\delta)\big)\\
%&+\de_{ik}\de_{jl}\cov\big(\Si^\ell(\alpha,\gamma),\Si^\ell(\beta,\delta)\big)\\
%&+\de_{il}\de_{jk}\cov\big(\Si^\ell(\alpha,\delta),\Si^\ell(\beta,\gamma)\big).
%\end{align*}

\end{enumerate}
\end{thm}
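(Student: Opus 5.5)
The natural route is through the conditional Gaussianity of Remark \ref{rem:cond-gauss}. Conditionally on layer $\ell-1$, the vector $Z^\ell$ is exactly $\mathcal{N}(0,I_m\otimes\Si^\ell)$. Hence its density is the mixture
\[
p_{Z^\ell}(\vec x)=\bE\big[\varphi_{I_m\otimes\Si^\ell}(\vec x)\big],
\]
with the randomness entering only through the $A\times A$ matrix $\Si^\ell$. I would write $\Si^\ell=\bar\Si^\ell+\Delta$, where $\Delta:=\Si^\ell-\bar\Si^\ell$. The analysis then has two parts: moment control of $\Delta$, and a Taylor expansion of the Gaussian density in its covariance parameter.

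\textbf{Step 1: moments of $\Delta$.} Using Assumption \ref{ass:nonlinearity} and the known concentration results for $\Si^\ell$ (e.g.\ \cite{Hanin2024}, or the cumulant structure behind Theorem \ref{thm:exact-general-cumulants}), I would show that $\bE|\Delta|^p=O(n^{-p/2})$ for every $p$. I would also show that the joint cumulants of the entries of $\Delta$ of order $r$ are $O(n^{-(r-1)})$. In particular $\bE[\Delta^{\otimes 3}]=O(n^{-2})$, while $\cov(\Si^\ell)$ is of exact order $n^{-1}$. Finally, by Assumption \ref{ass:non-degen}, $\bar\Si^\ell\to K^\ell$, which is strictly positive definite. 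So on an event $E$ with $\Pr(E^c)=O(n^{-N})$ for every $N$, the matrix $\Si^\ell$ stays in a compact set of positive definite matrices. On $E^c$ the total variation contribution is trivially bounded by $\Pr(E^c)$.

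\textbf{Step 2: Taylor expansion in the covariance.} On $E$ I would expand $\Si\mapsto\varphi_{I_m\otimes\Si}(\vec x)$ around $\bar\Si^\ell$ to third order with integral remainder. The key identity is the heat-equation relation $\partial_{\Si(\alpha,\beta)}\varphi_{\Si}=\tfrac12\partial_{x_\alpha}\partial_{x_\beta}\varphi_\Si$ (summed over the neuron index $j$, with a factor for the symmetric off-diagonal entries). Taking expectations term by term:
\begin{itemize}
\item The first-order term vanishes because $\bE\Delta=0$.
\item The second-order term is $\tfrac18\sum\bE[\Delta_{\alpha\beta}\Delta_{\gamma\delta}]\,\partial^4\varphi_{\bar\Si}$. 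After symmetrizing over the three pairings of four indices, this is exactly $\tfrac{1}{4!}\sum\ka_4\,\partial^4\varphi$, with $\ka_4$ given by \eqref{eq:kappa_from_C}. By \eqref{eq:H_is_iu_to_the_n} it equals $\varphi_{I_m\otimes\bar\Si^\ell}\cdot\frac1n\sum\frac{n\ka}{\vec k!}H_{\vec k}$, i.e.\ precisely the para-Gaussian correction. One could equally verify this step on the Fourier side using \eqref{eq:paraGaussian-transform}, since $\bE e^{-\frac12 u^T\Si u}$ expands the same way.
\item The remainder is controlled in $L^1(\d\vec x)$. The $L^1$ norm of $\partial^{2r}\varphi_\Si$ is uniformly bounded over the compact set of covariances from Step 1. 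Consequently the remainder is bounded by $C\,\bE|\Delta|^r$ for a fixed constant $C$.
\end{itemize}

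\textbf{Conclusion of the two parts.} For (i), I stop at second order. The remainder is then $O(\bE|\Delta|^2)=O(n^{-1})$, and the covariance identity is \eqref{eq:cov-z}. For (ii), I expand to fourth order. The third-order term involves $\bE[\Delta^{\otimes3}]=O(n^{-2})$, and the fourth-order remainder involves $\bE|\Delta|^4=O(n^{-2})$. Together these give total variation distance $O(n^{-2})$, since total variation is half the $L^1$ distance between the densities.

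\textbf{Main obstacle.} I expect the hardest step to be Step 1, specifically the third-cumulant bound $\kappa_3(\Si^\ell)=O(n^{-2})$ (rather than the naive $O(n^{-3/2})$ from moments), uniformly in depth $\ell$. Here I would induct on $\ell$. The matrix $\Si^\ell$ is an average of $n$ terms that are conditionally i.i.d.\ given $\Si^{\ell-1}$, so by the law of total cumulance its cumulants of order $r$ are $n^{1-r}$ times conditional cumulants, plus cumulants of smooth functions of $\Si^{\ell-1}$. The latter inherit the bound inductively by Taylor expanding those smooth functions; this is the place where the a.e.\ differentiability in Assumption \ref{ass:nonlinearity} and the non-degeneracy of $K^{\ell-1}$ are used.
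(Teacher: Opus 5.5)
Your proposal is correct and is essentially the paper's proof written in dual form. The paper Taylor expands $g_f(t)=G_f\big(I_m\otimes(\bar\Si^\ell+t\Delta^\ell)\big)$ over bounded test functions, and turns the $t$-derivatives into Hermite polynomials via the heat equation and Gaussian integration by parts; your $L^1$ bound on $\partial^{2r}\varphi_\Si$ is exactly Lemma~\ref{lem:hermite-moment-bound}. It then matches the second-order term to the para-Gaussian law through the pairing sum (Lemma~\ref{lem:pairing-sum}, Corollary~\ref{cor:PG-two-forms}), and simply imports $\bE\norm{\Delta^\ell}^q=O(n^{-q/2})$ and the $O(n^{-2})$ third moments from \cite{Hanin2024} rather than proving them by induction.

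The only substantive difference is how large fluctuations are handled. You discard an event $E^c$. Note that this makes your first- and second-order terms $\bE[\Delta\,\mathbf{1}_E]$ and $\bE[\Delta\otimes\Delta\,\mathbf{1}_E]$, not $0$ and the true covariance. You must bound that discrepancy by $C\,\bE\norm{\Delta}^{N+1}$, using that $\norm{\Delta}$ is bounded below on $E^c$. The paper's Lemma~\ref{lem:taylor-remainder} instead bounds the large-fluctuation case deterministically by $C\norm{f}_\infty\big(\norm{\Delta^\ell}/\lambda_{\min}(\bar\Si^\ell)\big)^{N+1}$, so it needs only the $(N+1)$-th moment of $\Delta^\ell$.
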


%\begin{rem}
%\label{rem:main-thm-parameters}
%The parameters appearing in Theorem \ref{thm:TV-gauss-pg} are exactly the true second and fourth cumulants of $Z^\ell$: the covariance
%of $Z^\ell$ is $I_m\otimes\bar\Si^\ell$, and $\ka^\ell$ is its fourth cumulant tensor (Theorem \ref{thm:exact-general-cumulants}). In particular,
%$\ka^\ell$ is of order $\frac1n$, so the para-Gaussian law differs from the Gaussian law by a correction of order $\frac1n$, and part
%(ii) shows that including this correction improves the accuracy of the approximation from $n^{-1}$ to $n^{-2}$. The Kronecker
%deltas show that the non-Gaussian corrections couple different neurons: although the neurons of a layer are uncorrelated, they are
%not independent at order $\frac1n$. Explicit recursions in $\ell$ for $\bar\Si^\ell$ and $\ka^\ell$, accurate to the orders needed in
%Theorem \ref{thm:TV-gauss-pg}, are given in Section \ref{...}.
%\end{rem}

\begin{rem}
\label{rem:part-i-known}
Note that the rate for Gaussian approximation in part (i) of Theorem \ref{thm:TV-gauss-pg} has been previously proven, in dimension one in \cite{FavaroHaninMarinucciNourdinPeccati2025}, and in any dimension in \cite{CelliPeccati2025}, where
the rate $n^{-1}$ is also shown to be optimal. We include part (i) to contrast it with part (ii), showing how the error rate improves to $n^{-2}$ from the order 4 Hermite correction in the \pg\ law. Our proof of part (ii) shares the
interpolation between the mean covariance and the random covariance with \cite{BasteriTrevisan2024,CelliPeccati2025}, and
carries the expansion one order further (concretely, this appears in our proof as keeping more terms of a particular Taylor series expansion, see Remark \ref{rem:proof-sketch}).

Instead of using only the order $k=4$ cumulants in our law,  it would be possible to extend to \pg\ law to include all cumulants  up to and including order $2p$ to create ``part $p$)'' of the theorem which achieves an error rate $n^{-p}$. The proof of this extenesion would simply go by extending the Taylor series expansion to include $2p$ terms. For concreteness and to improve the exposition of this article, we choose to focus on the effect of the fourth cumulant only here.
\end{rem}

\begin{rem}[Proof sketch of Theorem \ref{thm:TV-gauss-pg}]
\label{rem:proof-sketch}
While the details of the proof involve delicate analysis and presented in Section \ref{sec:para-proof}, the main idea of the proof of Theorem \ref{thm:TV-gauss-pg} is an instructive Taylor series expansion, so we sketch it out here.

Fix a bounded test function $f$. Conditionally on layer $\ell-1$, the vector $Z^\ell$ is exactly Gaussian with the  covariance
$I_m\otimes\Si^\ell$, so all of the non-Gaussianity of $Z^\ell$ comes from the fluctuation $\Delta^\ell:=\Si^\ell-\bar\Si^\ell$ of this (conditionally random) covariance matrix $\Si^\ell$ around its
mean. To understand the effect of the random fluctuation, consider as follows: let $g_f(t)$ be the Gaussian expectation of $f$ at the covariance $I_m\otimes(\bar\Si^\ell+t\Delta^\ell)$, $g_f(t):=\mathbb{E}_{X\sim\mathcal{N}(0,I_m\otimes(\bar\Si^\ell+t\Delta^\ell))}[f(X)]$,
which interpolates between the non-random mean $\bar\Si^\ell$ at $t=0$ (and consequently a perfectly Gaussian approximation) and the full quantity of interest $\bar\Si^\ell+(1)\Delta^\ell =\Si^\ell$ at $t=1$, in the sense that
$\mathbb{E}[g_f(1)]=\mathbb{E}[f(Z^\ell)]$. Taylor expanding $g_f(1)$ around $t=0$ and taking expectation gives
\[
\mathbb{E}[f(Z^\ell)]=\mathbb{E}[g_f(1)]=\mathbb{E}\Big[g_f(0)+g_f^\prime(0)+\tfrac12g_f^{\prime\prime}(0)+\tfrac16g_f^{\prime\prime\prime}(0)\Big]+\text{(remainder)}.
\]
The expectation of the first two terms is exactly the Gaussian law, since $g_f(0)=\mathbb{E}_{X\sim\mathcal{N}(0,I_m\otimes\bar\Si^\ell)}[f(X)]$ and
$\mathbb{E}[g_f^\prime(0)]=0$ (Lemma \ref{lem:g-prime-mean-zero}). It turns out that the expectation of the first three terms $\mathbb{E}\Big[g_f(0)+g_f^\prime(0)+\tfrac12g_f^{\prime\prime}(0)]$ is exactly the expectation under the para-Gaussian law
$\int f\,\d \pgl{I_m\otimes\bar\Si^\ell}{n\ka^\ell}{n}$ (Proposition \ref{prop:second-order-is-PG}), with the second derivative of $\frac12\mathbb{E}[g_f''(0)]$ corresponding exactly to the 4th cumulant and Hermite terms in the \pg\ law. The fourth term is of order $n^{-2}$
(Lemma \ref{lem:third-order}). The Taylor remainders are controlled by Lemma \ref{lem:taylor-remainder}, using the moment bounds on
$\Delta^\ell$ from \cite{Hanin2024} (see Lemma \ref{lem:hanin-estimates}). 

Altogether, truncating after the first two terms gives part (i)
of Theorem \ref{thm:TV-gauss-pg} with error $O(\norm{f}_\infty n^{-1})$, and truncating after the first four terms gives part (ii) with
error $O(\norm{f}_\infty n^{-2})$ (Corollary \ref{cor:error-summary}). Since these errors depend on $f$ only through $\norm{f}_\infty$,
taking the supremum over $\norm{f}_\infty\leq1$ gives the bounds in total variation.

\end{rem}

\subsection{Recursions for calculating the covariance and fourth cumulant}
\label{sec:recursions}
Theorem \ref{thm:TV-gauss-pg} shows that the \pg\ approximation is much better than the Gaussian. But operationally speaking, how can one calculate the 4th cumulants needed as an input into this law? The key observation is that from Corollary \ref{cor:cumulant_two_and_four} the 4th cumulant in layer $\ell+1$ depends only on the covariances from the layer-$\ell$-measurable random matrix $\Si^{\ell+1}$, through the following tensor $C^\ell$ defined as follows.

    \begin{defn}
    \label{def:C}

     We define the tensor $C^\ell: \left( A\times A \right) \times \left( A \times A \right) \to \mathbb{R}$ to be the covariances of $\Si^\ell$
    \begin{align}
    \label{eq:defCell}
      C^\ell(\alpha,\beta;\gamma,\delta): = n \cov \left[ \Si^\ell(\alpha,\beta),\Si^\ell(\gamma,\delta)\right] 
      \end{align}
  Note that from the symmetry $\cov(X,Y)=\cov(Y,X)$, we see that $C^\ell$ has three fundamental symmetries: $C^\ell(\alpha,\beta;\gamma,\delta)=C^\ell(\beta,\alpha;\gamma,\delta)$,  $C^\ell(\alpha,\beta;\gamma,\delta)=C^\ell(\alpha,\beta;\delta,\gamma)$, and finally $C^\ell(\alpha,\beta;\gamma,\delta)=C^\ell(\gamma,\delta;\alpha,\beta)$.

    \end{defn}

    \begin{rem}
    \label{rem:C-is-O1}
    As $n\to \infty$, the 4th cumulant will go to zero like $1/n$, and it is this rescalled $C^\ell$ tensor that remains order 1 as $n \to \infty$. This is why we rescale it in this way.
    \end{rem}

 Calculating the covariance that define $C^\ell$ amounts to computing expected values of neurons in each layer. This creates a virtuous cycle: by approximating the law of layer $\ell$ as a para-Gaussian, we may use the para-Gaussian law in layer $\ell$ to compute the parameters of the para-Gaussian approximation in layer $\ell+1$. 

Moreover, calculating expectations with respect to the para-Gaussian law is also easy, because the density of the para-Gaussian density \eqref{eq:paraGaussian-density} factors as a Gaussian times $\times \big(1+\text{(Hermite terms)} \big)$, which means that any para-Gaussian expectation is actually a Gaussian expectation with a slightly modified integrand, namely for any test function $f$, one has from the definition \eqref{eq:paraGaussian-density} that para-Gaussian expectations in general satisfy

$$ \mathbb{E}_{Z \sim \pgdk{\Sigma}\kappa n k }[ f(Z) ] = \GaussExp{Z \sim \mathcal{N}(0,\Si)}\left[ f(Z)\left( 1 + \frac{1}{\sqrt{n}^{k-2}} \sum_{\substack{k_1,\ldots,k_d\\ k_1+k_2+\ldots+k_d = k}} \frac{\kappa_{k_1, k_2,\ldots,k_d}}{k_1!k_2!\cdots k_d!} H_{k_1,\ldots,k_p}(Z;\Sigma) \right) \right],$$
where $\GaussExp{Z^\ell \sim \mathcal{N}(0,\Si)}$ denotes the expectation where $Z^\ell$ has a \emph{purely Gaussian law}. We use the (somewhat unsual) notation $\GaussExp{}$ for this purely Gaussian measure\footnote{Other authors, for example \cite{Hanin2024}, use notations like $\langle\cdot\rangle$ for this kind of pure Gaussian expectation.}.

As a final source of serendipity, there is an additional exchangability between various neurons in a given layer, which further simplifies the bookeeping needed. (To wit, the quantities we track $\bar \Si^\ell$ and $C^\ell$ are indexed by the set of inputs $A$ only, and the full distribution among inputs and neurons $A\times [n]$ are derived from these.) For example, by the exchangability of neurons in any layer, from the definition of $\Si$, the $C^\ell$ can be written as

    \begin{equation}
    \label{eq:C-as-exchangable}
        C^\ell(\alpha,\beta;\gamma,\delta) = \cov\left[ \vp(z_{\alpha,1}^{\ell-1})\vp(z_{\beta,1}^{\ell-1}),\vp(z_{\gamma,1}^{\ell-1})\vp(z_{\delta,1}^{\ell-1})\right] + (n-1)\cov\left[ \vp(z_{\alpha,1}^{\ell-1})\vp(z_{\beta,1}^{\ell-1}),\vp(z_{\gamma,2}^{\ell-1})\vp(z_{\delta,2}^{\ell-1})\right]
    \end{equation}
With these simplification in hand, and taking into account the \emph{size} of the errors leads to the following theorem that records these recursion relations for $\bar \Si^\ell$ and $C^\ell$ that allow us to compute the para-Gaussian approximation to error size $O(1/n^2)$. The details of this are carried out in Section \ref{sec:proof-main} and the final recursion for $\bar\Si^\ell$ and $C^\ell$ are stated in the below theorem.

 \begin{thm}
		\label{thm:main}
		%Let layer $0$ be Gaussian, $(z^0_{\al,k})\sim N(0,I_n\otimes K)$. For every fixed $\ell\ge 0$, the law of the random vector $(z^\ell_{\al,k})_{\al\in A, k\in [n]}$ is asymptotically \pg\ on $\bR^{n \abs{A}}$ of order $4$, with the following recursive descritpion of its covariance and 4th cumulant.

        Under Assumptions \ref{ass:nonlinearity} and \ref{ass:non-degen}, suppose we are given the average covariance matrix $\bar\Si^{\ell}$ and the tensor $C^\ell$ in layer $\ell$. Then the average covariance $\bar \Si^{\ell+1}$ and the quantity $C^{\ell+1}$ obey the following recursions:
        
        {\underline{Mean Covariance.}} 
        
        \begin{align}
        \label{eq:mainSigmaExpansion}
        \bar\Si^{\ell+1}(\al,\be)
        = \GaussExp{}\ab{\vp^{\ell}_\al\vp^{\ell}_\be}
          + \frac{1}{8n}\sum_{\ga_1,\ga_2,\ga_3,\ga_4\in A} C^{\ell}(\ga_1,\ga_2;\ga_3,\ga_4)\,
            \GaussExp{}\ab{\vp^{\ell}_\al\vp^{\ell}_\be\,H^{\ell}_{\ga_1\ga_2\ga_3\ga_4}}
          + \mathcal{O}\left(\frac{1}{n^2}\right),
        \end{align}
        
       {\underline{Fourth Cumulant.}} To order $\mathcal{O}(\frac{1}{n})$, the $C^{\ell+1}$ satisfies the recursion relation

        %\begin{align}
        %\label{eq:mainC}
        %C_\ell(\al,\al';\be,\be') &= \cov\big(\vp_\al\vp_{\al'},\,\vp_\be\vp_{\be'}\big)
        % + \frac14\sum_{\ga_1,\ga_2,\ga_3,\ga_4\in A} C_{\ell-1}(\ga_1,\ga_2;\ga_3,\ga_4)\,
        %   \bE\ab{\vp_\al\vp_{\al'}\,H_{\ga_1\ga_2}}\,\bE\ab{\vp_\be\vp_{\be'}\,H_{\ga_3\ga_4}}.
        %\end{align}

        \begin{align}
\label{eq:mainC}
C^{\ell+1}(\al,\al';\be,\be')
&= \GaussExp{}\ab{\vp^{\ell}_\al\vp^{\ell}_{\al'}\vp^{\ell}_\be\vp^{\ell}_{\be'}}
 - \GaussExp{}\ab{\vp^{\ell}_\al\vp^{\ell}_{\al'}}\,
   \GaussExp{}\ab{\vp^{\ell}_\be\vp^{\ell}_{\be'}} \notag\\
&\quad + \frac14\sum_{\ga_1,\ga_2,\ga_3,\ga_4\in A} C^{\ell}(\ga_1,\ga_2;\ga_3,\ga_4)\,
   \GaussExp{}\ab{\vp^{\ell}_\al\vp^{\ell}_{\al'}\,H^{\ell}_{\ga_1\ga_2}}\,
   \GaussExp{}\ab{\vp^{\ell}_\be\vp^{\ell}_{\be'}\,H^{\ell}_{\ga_3\ga_4}} + \mathcal{O}\left(\frac{1}{n}\right),
\end{align}
where:
\begin{itemize}
  \item $Z^{\,\ell}_1=(z^{\ell}_{\ga,1})_{\ga\in A} \in \mathbb{R}^{|A|}$ is the vector of outputs of neuron $1$ of
        layer $\ell$ over all inputs (by exchangeability any single neuron index $k$ has the same  distribution).
  \item $\GaussExp{}$ denotes expectation treating $ Z^{\,\ell}_1$ as \emph{pure Gaussian} with
        mean $0$ and covariance $\bar \Si^{\ell}$.
  \item $\vp^{\ell}_\al := \vp\rb{z^{\ell}_{\al,1}}$ for $\al\in A$.
\item $H^{\ell}_{\ga_1\ga_2\ga_3\ga_4} := H_{\vec e_{\ga_1}+\vec e_{\ga_2}+\vec e_{\ga_3}+\vec e_{\ga_4}}
        \rb{Z^{\,\ell}_1;\bar \Si^{\ell}}$ is the fourth-order Hermite polynomial from \eqref{eq:H_is_iu_to_the_n}. 
        Equivalently, these expectations can also be evaluated by Gaussian integration by parts 
        $\GaussExp{}\ab{F\,H^{\ell}_{\ga_1\ga_2\ga_3\ga_4}} = \GaussExp{}\ab{\partial_{\ga_1}\partial_{\ga_2}\partial_{\ga_3}\partial_{\ga_4}F}$ (see Lemma \ref{lem:gaussian-ibp} for details).
  \item $H^{\ell}_{\ga_1\ga_2} := H_{\vec e_{\ga_1}+\vec e_{\ga_2}}\rb{Z^{\,\ell}_1;\bar \Si^{\ell}}$
        is the second-order Hermite polynomial , where $\vec e_\ga\in\bZ_{\geq0}^{A}$ has a $1$ in
        position $\ga$ and $0$ elsewhere. Explicitly, writing $\Si=\bar\Si^{\ell}$ to make the inverse clear, this is
        \begin{align*}
        H^{\ell}_{\ga_1\ga_2}
          = \rb{\Si^{-1} Z^{\,\ell}_1}_{\ga_1}\rb{\Si^{-1} Z^{\,\ell}_1}_{\ga_2}
            - \rb{\Si^{-1}}_{\ga_1\ga_2},
        \end{align*}
        which is valid also when $\ga_1=\ga_2$.
\end{itemize}

\end{thm}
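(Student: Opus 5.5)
The plan is to apply Theorem \ref{thm:TV-gauss-pg}(ii) in layer $\ell$, together with the exchangeability identity \eqref{eq:C-as-exchangable}. Since $\bar\Si^{\ell+1}$ and $C^{\ell+1}$ are both expectations of polynomial-growth functionals of finitely many neurons of layer $\ell$, we can trade the true law for the para-Gaussian law. Concretely, for $\bar\Si^{\ell+1}(\al,\be)=\bE[\vp(z^\ell_{\al,1})\vp(z^\ell_{\be,1})]$ we take $m=1$. The para-Gaussian expectation formula (Gaussian times $1+$Hermite terms), with $k=4$ and $d=|A|$, turns the multi-index sum $\sum_{|\vec k|=4}\frac{\ka_{\vec k}}{\vec k!}H_{\vec k}$ into an unordered sum $\frac{1}{4!}\sum_{\ga_1,\ldots,\ga_4}\ka(\ga_1,\ga_2,\ga_3,\ga_4)H_{\ga_1\ga_2\ga_3\ga_4}$. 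Within a single neuron, \eqref{eq:kappa_from_C} gives $n\ka=C^\ell(\ga_1\ga_2;\ga_3\ga_4)+C^\ell(\ga_1\ga_3;\ga_2\ga_4)+C^\ell(\ga_1\ga_4;\ga_2\ga_3)$. By the symmetry of $H_{\ga_1\ga_2\ga_3\ga_4}$ in its indices, the three pairings contribute equally, so the prefactor is $3/4!=1/8$. This yields \eqref{eq:mainSigmaExpansion} with the $\frac1n$ prefactor.

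For $C^{\ell+1}$, we apply \eqref{eq:C-as-exchangable} with $\ell$ replaced by $\ell+1$. The first (same-neuron) covariance is $O(1)$. To leading order it suffices to use part (i), i.e.\ the pure Gaussian $\mathcal N(0,\bar\Si^\ell)$, which gives $\GaussExp{}[\vp_\al\vp_{\al'}\vp_\be\vp_{\be'}]-\GaussExp{}[\vp_\al\vp_{\al'}]\GaussExp{}[\vp_\be\vp_{\be'}]$ up to $O(1/n)$. The second term is $(n-1)$ times a cross-neuron covariance, so that covariance is needed to order $n^{-1}$ with error $o(n^{-1})$. For this we use part (ii) with $m=2$ and covariance $I_2\otimes\bar\Si^\ell$; the $O(n^{-2})$ error becomes $O(n^{-1})$ after multiplying by $n$.

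Next we compute the cross covariance. Under $\pgl{I_2\otimes\bar\Si^\ell}{n\ka}{n}$, the Gaussian part factorizes across the two neurons and contributes zero covariance. The terms of \eqref{eq:kappa_from_C} whose index pairs lie entirely inside neuron $1$ or entirely inside neuron $2$ produce, after subtracting the product of marginal means, only $O(n^{-2})$ contributions, since the product of two $O(1/n)$ marginal corrections is $O(n^{-2})$. The surviving term is the one with $j_1=j_2=1$, $j_3=j_4=2$ (and its symmetric rearrangements). For this term the Hermite polynomial factorizes: $H_{\vec e_{(\ga_1,1)}+\vec e_{(\ga_2,1)}+\vec e_{(\ga_3,2)}+\vec e_{(\ga_4,2)}}(\cdot;I_2\otimes\bar\Si)=H_{\ga_1\ga_2}(Z_1)H_{\ga_3\ga_4}(Z_2)$, which follows from the Fourier definition \eqref{eq:H_is_iu_to_the_n} because the covariance is block-diagonal. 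Counting the orderings among the $\frac{1}{4!}\sum$ over ordered quadruples of (input, neuron) indices, and the two ways of assigning the neuron-$1$ pair, gives $\frac{1}{4!}\cdot\binom42\cdot 1=\frac14$. Combining this with the single-pairing structure $\delta_{j_1j_2}\delta_{j_3j_4}C^\ell(\ga_1\ga_2;\ga_3\ga_4)$ and independence of the Gaussian factors yields $\frac{1}{4n}\sum C^\ell\,\GaussExp{}[\vp_\al\vp_{\al'}H_{\ga_1\ga_2}]\,\GaussExp{}[\vp_\be\vp_{\be'}H_{\ga_3\ga_4}]$. Multiplying by $n-1$ gives \eqref{eq:mainC}.

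The main obstacle is that Theorem \ref{thm:TV-gauss-pg} controls total variation, i.e.\ only bounded test functions, whereas $\vp_\al\vp_{\al'}\vp_\be\vp_{\be'}$ grows polynomially. I would first try to handle this by truncation. On $\{|Z|\le R\}$ the TV bound applies with an $R$-dependent sup-norm. The tails are controlled by uniform-in-$n$ moment bounds on $z^\ell$ (these follow from the concentration of $\Si^\ell$, Lemma \ref{lem:hanin-estimates}) and by the Gaussian tails of the para-Gaussian density. Choosing $R=C\sqrt{\log n}$ leaves only polylog losses, which still fit inside the $o$-slack. If this is too lossy, the fallback is to rerun the Taylor-in-$t$ argument of Remark \ref{rem:proof-sketch} directly with polynomially growing $f$, since Gaussian expectations of such $f$ are smooth in the covariance.
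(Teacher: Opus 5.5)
Your algebra matches the paper's proof and is correct. This covers the exchangeability split \eqref{eq:C-as-exchangable}, the para-Gaussian expansion with $m=1$ and $m=2$, and the factorization of the fourth-order Hermite polynomial over the block-diagonal covariance $I_2\otimes\bar\Si^\ell$ (Lemma \ref{lem:hermite-splits}). It also covers the cancellation of the single-neuron corrections against the product of the marginal means, and the constant $\frac{6}{4!}=\frac14$, which the paper gets as $\frac28$ from the $\frac18$-form of Corollary \ref{cor:PG-two-forms}.

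The gap is in the analytic step that lets you use the para-Gaussian law for the unbounded functionals $\vp_\al\vp_{\al'}\vp_\be\vp_{\be'}$. Suppose you truncate at radius $R$ and use Theorem \ref{thm:TV-gauss-pg} as a black box, with $\abs{F(x)}\le c(1+\abs{x}^p)$. Then the bulk error is bounded only by $2\sup_{\abs{x}\le R}\abs{F}\cdot d_{TV}\lesssim R^p n^{-2}$, and making the tail $O(n^{-2})$ forces $R\gtrsim\sqrt{\log n}$. So you get $\bar\Si^{\ell+1}=\ldots+O(n^{-2}(\log n)^{p/2})$. After multiplying the cross-neuron covariance by $n-1$, you get $C^{\ell+1}=\ldots+O(n^{-1}(\log n)^{p/2})$. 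The same loss affects your use of part (i) for the same-neuron term.

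There is no $o$-slack to absorb this. The statement asserts $\mathcal{O}(n^{-2})$ and $\mathcal{O}(n^{-1})$, so the cross-neuron covariance must be known up to $O(n^{-2})$. Your phrase ``error $o(n^{-1})$'' would only give $C^{\ell+1}=\ldots+o(1)$.

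Separately, at $R=C\sqrt{\log n}$, uniform moment bounds on $z^\ell$ do not make the tail $O(n^{-2})$. You need the conditional Gaussian tail on the event $\norm{\Delta^\ell}\le\frac12\lambda_{\min}(\bar\Si^\ell)$. You also need the $O(n^{-q})$ probability of its complement, which follows from Lemma \ref{lem:hanin-estimates}(b) and Markov's inequality.

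What closes the gap is your fallback, and that is the paper's route. Lemma \ref{lem:PG-expectation-poly} imports \cite[Theorem 4]{Hanin2024} to get $\mathbb{E}[F(Z^\ell)]=\int F\,\d\pgl{I_m\otimes\bar\Si^\ell}{n\ka^\ell}{n}+O(n^{-2})$ directly for continuous polynomially bounded $F$. After that, your computation goes through verbatim.

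If you prove this yourself by rerunning Section \ref{sec:para-proof}, the step that really changes is the large-fluctuation case of Lemma \ref{lem:taylor-remainder}, which relied on $\abs{g_f(1)}\le\norm{f}_\infty$. For polynomially growing $f$, replace this by $\abs{g_f(1)}\le c'(1+\norm{\Si^\ell}^{p/2})$. Then absorb this factor by Cauchy--Schwarz against $(\norm{\Delta^\ell}/\lambda_{\min}(\bar\Si^\ell))^{N+1}$, using the uniformly bounded moments of $\norm{\Si^\ell}$ and $\mathbb{E}\norm{\Delta^\ell}^{2N+2}=O(n^{-N-1})$. In the small-fluctuation case, replace Lemma \ref{lem:hermite-moment-bound} by a Cauchy--Schwarz bound on $\mathbb{E}\abs{f(X)H_{a_1\cdots a_{2r}}(X;S_t)}$. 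This bound is uniform there because $\lambda_{\min}(S_t)\ge\frac12\lambda_{\min}(\bar\Si^\ell)$ and $\norm{S_t}\le\frac32\norm{\bar\Si^\ell}$.
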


The proof of Theorem \ref{thm:main} is presented in Section \ref{sec:proof-main}.

\subsection{Related work}
\label{sec:literature}

The study of wide random neural networks at initialization starts from the observation that, as the width $n$ tends to
infinity with the depth fixed, the neurons converge to a Gaussian process, sometimes refered to as the neural network Gaussian process (NNGP) \cite{Lee2018NNGP,Matthews2018,Hanin2023GP}. Our work is concerned
with what happens just beyond this limit: the corrections of order $\frac1n$, how to describe them, and how accurately they
describe the network. In this section we explain how our results relate to the existing literature on this question, which
comes from several directions.

\subsubsection{Perturbative descriptions of finite-width networks}

The most closely related line of work describes finite-width networks perturbatively in $\frac1n$. At a physics level of rigor,
Yaida \cite{Yaida2020} was the first to obtain a recursion in the depth for the fourth cumulant of a single neuron, and to study its
behaviour for deep networks. These ideas were developed much further in the monograph of Roberts, Yaida and Hanin
\cite{RobertsYaidaHanin2022}, which derives, again at a physics level of rigor, recursions for the four-point vertex (which plays the
role of our $C^\ell$) and for the $\frac1n$ correction to the covariance with several inputs, as well as for
many other objects, such as the neural tangent kernel and its evolution during training. 

Hanin \cite{Hanin2024} then gave a
mathematically rigorous account of this picture. He showed that the $2k$-th cumulants of the neurons are of order $n^{1-k}$
\cite[Theorem 3]{Hanin2024}, derived a $\frac1n$ expansion for the expectations of smooth or continuous test functions
\cite[Theorem 4]{Hanin2024}, showed that the cumulants form a hierarchy determined recursively from layer to layer
\cite[Corollary 5]{Hanin2024}, and computed explicitly the resulting recursions for a single input \cite[Corollary 6]{Hanin2024}.
Indeed, many of our proofs rely on several of the detailed analysis and estimates developed in \cite{Hanin2024}.

Our contributions relative to this line of work are described below.

\paragraph{Describing the distribution of the network with Hermite polynomials} The perturbative approaches above describe the network through its cumulants and/or
through expectations of test functions. We instead explictly describe the \emph{law} of the network as a para-Gaussian law.  Theorem \ref{thm:TV-gauss-pg}
says that the joint distribution of finitely many neurons is close, in total variation distance, to an explicit signed measure,
the \pg\ law, whose density is a Gaussian density multiplied by a correction made of fourth order Hermite polynomials. 

Our point of view has two practical consequences. First, the approximation can be compared directly with Monte Carlo simulations at
the level of densities, as in Figures \ref{fig:intro-not-quite-gaussian}, \ref{fig:linear-one-neuron-two-inputs} and
\ref{fig:relu-one-neuron-two-inputs}. Second, it yields statements for arbitrary bounded test functions, including indicator
functions of sets, with errors that depend on the test function only through its supremum. This is because in our approach derivatives of the test function instead become expectations with the Hermite polynomials (see Lemma \ref{lem:gaussian-ibp}
and Remark \ref{rem:G-derivatives-hermite}).

We also believe this way of thinking in terms of para-Gaussian distributions is useful to the human understanding of this area. This description in terms of Hermite polynomials can be more easily conceptualized than more complicated formulas involving derivative operators, even in cases where the two formulas mathematically coincide. We are hopeful that the exposition we provide here will be useful to readers.

\paragraph{Explicit formulas for the cumulants of all orders in terms of the random matrix $\Si$.} It is stated in \cite[Proposition 15]{Hanin2024},  that the cumulants of the neurons can be expressed through ``finite sums of'' cumulants of the random covariance matrix $\Si^\ell$.
Theorem \ref{thm:exact-general-cumulants} makes this expression completely explicit: for every width $n$, the $2p$-th cumulant of
the neurons is a sum over the $(2p-1)!!$ pairings of the indices, each weighted by Kronecker deltas in the neuron indices and by
a $p$-th cumulant of entries of $\Si^\ell$. This formula makes visible the structure of the dependence between different neurons.
Neurons in the same layer are uncorrelated, but they are not independent, and the Kronecker deltas $\delta_{j_i,j_k}$ from Theorem \ref{thm:exact-general-cumulants} create higher order non-trivial cumulants. We give two proofs of this, one slick proof with the law of total cumulance and one directly from the definition of cumulants via generating functions.

\paragraph{Recursions with several inputs and general activation functions.} The recursions of Theorem \ref{thm:main} for the
mean covariance $\bar\Si^\ell$ and for $C^\ell$ hold for any set of inputs and any activation function satisfying the
assumptions of \cite{Hanin2024}. This generalizes the single-input recursions of \cite[Corollary 6]{Hanin2024} to several inputs. Written
with Hermite polynomials, the coefficients of these recursions are explicit Gaussian integrals, which in the ReLU case we evaluate numerically (see Section \ref{sec:relu-application}).

\paragraph{Exact solutions in special cases.} For activation functions that are positively homogeneous of degree one, such as the identity and
ReLU, the law of a \emph{single} neuron on a \emph{single} input is known in closed form: it is a Gaussian with an independent
random variance given by a product of independent weighted chi-squared random variables \cite{HaninNica2020b}, which can
also be described with Meijer $G$-functions \cite{ZavatoneVethPehlevan2021}. In particular, the exact single-input formulas
$C^\ell=n[(1+\frac2n)^\ell-1]$ for linear networks and $C^\ell=n[(1+\frac5n)^\ell-1]$ for ReLU networks (which we state in Lemma
\ref{lem:relu-one-input}) are already known. 

A new result is the exact solution for linear networks with
\emph{several inputs and several neurons}, Lemma \ref{lem:linear-explicit}, which is valid for every $n$ and every depth. It shows
that, beyond the two pairings $K(\al,\be)K(\al',\be')+K(\al,\be')K(\al',\be)$ predicted by the $\frac1n$ recursion, the exact
fourth cumulant contains a third pairing $K(\al,\al')K(\be,\be')$ with a coefficient $w_\ell$ that is invisible at first order
but becomes important once $\ell/n$ is of order one (Figure \ref{fig:linear-exact-vs-first-order}).

\subsubsection{Deep networks and the depth-to-width ratio}

The non-Gaussian corrections studied in this paper are small in a single layer, of order $\frac1n$, but they accumulate as the
signal passes through the network. A useful way to see this is through the fourth cumulant of a single neuron. In a ReLU network
with one input, for example, it is $\frac3n C^\ell$ with $C^\ell=5\ell+O(\frac1n)$ (Lemma \ref{lem:relu-one-input}): each layer adds a
roughly equal amount of non-Gaussianity, so that after $\ell$ layers the correction is of order $\frac{\ell}{n}$ rather than
$\frac1n$. The relevant small parameter for a deep network is therefore not the inverse width $\frac1n$ alone, but the
depth-to-width ratio $\frac{\ell}{n}$.

When $\frac{\ell}{n}$ is small, the \pg\ law is an accurate description, and this is the regime of the present paper. When
$\frac{\ell}{n}$ is of order one, every order of the expansion in $\frac1n$ contributes at the same size, and a description by
finitely many cumulants is no longer adequate. In this regime, the network has been studied by taking $n$ and $\ell$ to infinity
together with $\frac{\ell}{n}$ fixed. For a single input, the variance of a neuron then becomes log-normal rather than
deterministic \cite{HaninNica2020b}. With
two inputs, the angle between their representations evolves as a Markov chain whose behaviour is governed by $\frac{\ell}{n}$
\cite{JakubNica2024}.

The exact solution for linear networks (Lemma \ref{lem:linear-explicit}) shows clearly how the two regimes fit together in that special case. The exact
coefficients are built from powers such as $(1+\frac2n)^\ell$, which are approximately $e^{2\ell/n}$. The first order recursion
keeps only the linear term $1+\frac{2\ell}{n}$ of this exponential. This is accurate when $\frac{\ell}{n}$ is small and fails once
$\frac{\ell}{n}$ is of order one, as Figure \ref{fig:linear-exact-vs-first-order} shows. The same behaviour appears in ReLU
networks, where the $\frac1n$ recursion underestimates the cumulants once $\frac{5\ell}{n}$ is no longer small (Figures
\ref{fig:relu-cumulants-vs-depth} and \ref{fig:relu-two-neurons-two-inputs}). In this sense the \pg\ law is the first non-trivial
approximation, valid for small $\frac{\ell}{n}$, to the non-Gaussian behaviour that the works above describe at finite
$\frac{\ell}{n}$.

\subsubsection{Generalization of earlier work by the authors}

This paper extends the ideas from the workshop paper \cite{NicaOrtmann2024M3L}, which introduced the \pg\ distribution through its
characteristic function, for a single neuron in a network with one hidden layer and at most two inputs, and computed the
resulting density with Hermite polynomials. The present paper treats networks of arbitrary depth, any finite number of inputs and
any finite number of neurons, including the dependence between different neurons. We prove that the approximation holds in total
variation with rate $n^{-2}$ in Theorem \ref{thm:TV-gauss-pg} and gives the recursions that propagate the parameters of the \pg\ law from layer to layer in Theorem \ref{thm:main}. 

The
relationship between the normalization $\frac1{8}$ used in \cite{NicaOrtmann2024M3L} and the Edgeworth normalization $\frac1{4!}$
used here is explained in Corollary \ref{cor:PG-two-forms}. The Hermite polynomial machinery we use throughout, in particular the
Gaussian integral formula for multivariable Hermite polynomials, was developed in the expository paper on Hermite polynomials \cite{NicaOrtmann2026}.

\subsubsection{Edgeworth Expansions for Multivariate Neural Networks}
After the first version of this paper was posted, we were made aware of recent work by Celli \cite{Celli2026}, which establishes total
variation bounds of order $n^{-m}$, for every $m\in\mathbb N$, between the neurons and a multivariate Edgeworth expansion of order
$4m-1$, together with matching lower bounds. For $m=2$ this gives the same rate $n^{-2}$ as part (ii) of Theorem \ref{thm:TV-gauss-pg},
and the proof follows a similar Taylor expansion along an interpolation of the covariance.

While both expansions approximate the same law, and agree with each other up to an error of order
$n^{-m}$ when truncated at the same order, they are not the same term by term! The expansion of \cite{Celli2026} is centred at the
infinite width kernel $K^\ell$ (see \eqref{eq:K-def}) and its coefficients are moments of the whitened matrix
$\sqrt{K^\ell}^{-1}(\Si^\ell-K^\ell)\sqrt{K^\ell}^{-1}$. Since $\bar\Si^\ell=K^\ell+\frac1n\si^\ell+O(n^{-2})$ with $\si^\ell\neq0$ in general (see \eqref{eq:sigma-correction}), this expansion
contains a non-zero order 2 Hermite correction with coefficient  $\frac1n$, which accounts for this shift of the covariance from $K^\ell$ to $\bar\Si^\ell$. We
instead centre the expansion at the mean covariance $\bar\Si^\ell$ itself. The first order term then vanishes identically (Lemma
\ref{lem:g-prime-mean-zero}), and the only correction at order $\frac1n$ is the fourth order Hermite term, whose coefficients are the
covariances $C^\ell$ of the random matrix $\Si^\ell$. In contrast, each Hermite term of the expansion around $K^\ell$ in
\cite{Celli2026} mixes the centred moments of $\Si^\ell$ with powers of the mean shift $\bar\Si^\ell-K^\ell$.

In both expansions, the coefficients are moments of the random covariance $\Si^\ell$, which are not explicit for deep networks with
several inputs. Our Theorem \ref{thm:main} gives recursions to compute these moments in terms of two simple objects, $\bar\Si^\ell$ and $C^\ell$, to the order
needed to generate approximations to deep non-linear networks as in Figures \ref{fig:relu-cumulants-vs-depth}--\ref{fig:relu-one-neuron-two-inputs}.

    %\section{Main theoretical results}

    %\subsection{Conditional Gaussian law}

    %\begin{defn}

     %For every layer $\ell$, define $\Si^\ell: A \times A \to \mathbb{R}$ be the random matrix, measurable with respect to layer $\ell-1$,
     %$$\Si^\ell(\alpha,\beta) := \frac{1}{n} \sum_{j=1}^{n} \varphi\rb{z^{\ell-1}_{\alpha,j}} \varphi\rb{z^{\ell-1}_{\beta,j}}$$
     % \end{defn}

	% {\bf{TODO:}} Should we move Prop \ref{prop:PhiAsY} and the definition of $Y$ here?

\subsection{AI Disclosure}

Claude Opus 5.5 was used as a writing assistant throughout. The slick proof of Theorem \ref{thm:exact-general-cumulants} using the law of total cumulance was also discovered by Claude. (Our original human proof is included in Appendix \ref{subsec:second-proof-exact-general-cumulants}.) Claude also wrote the code to run the Monte Carlo experiments and generate  the figures in the paper.

    \section{More Results}

    In addition to the two main theorems, Theorem \ref{thm:TV-gauss-pg}
 and Theorem \ref{thm:main}, in this section we develop a general formula for the cumulants of the neurons in terms of the cumulants of the random matrix $\Si$ in Theorem \ref{thm:exact-general-cumulants}. We also apply our theory to linear neural networks in Section \ref{sec:linear-application} and ReLU networks in Section \ref{sec:relu-application}.

    \subsection{Exact formulas for higher cumulants}
    \label{sec:higher_cumulants}

    \begin{thm}
    \label{thm:exact-general-cumulants}
   All the odd cumulants of neurons are 0, and the $2p$-cumulants of the neuron activation in layer $\ell$ can be written in terms of $\Si^\ell$ as follows:
% In general, the $2p$-th cumulant for $2p$ neurons (not necessarily distinct) is given in terms of the $p$-th cumulants of $\Si^\ell$, namely
     \begin{align*}
    \kappa\left( \left\{ z_{\alpha_i,j_i}^\ell \right\}_{i=1}^{2p}  \right) & = \sum_{\{\{a_1,b_1\},\ldots,\{a_p,b_p\}\} \in \mathcal{P}(2p)} \left( \prod_{k=1}^p \delta_{j_{a_k},j_{b_k}} \right) \kappa \left( \left\{ \Si^\ell(\alpha_{a_k},\alpha_{b_k})\right\}_{k=1}^p \right),
    \end{align*}
where the sum $\{\{a_1,b_1\},\ldots,\{a_p,b_p\}\} \in \mathcal{P}(2p)$ is over all of the $\abs{\mathcal{P}(2p)}=(2p-1)!!$ possible pairings of the indices $\{{1,\ldots,2p}\}$, more precisely $\{\{a_1,b_1\},\ldots,\{a_p,b_p\}\} \in \mathcal{P}(2p)$ if and only if $\bigcup_{i=1}^{p} \{a_i, b_i\} = \{1,\ldots,2p\}$ and $\{a_i, b_i\} \cap \{a_j, b_j\} = \emptyset \text{ for } i \ne j$
    \end{thm}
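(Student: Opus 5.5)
The cleanest route is the characteristic function computed by conditioning on layer $\ell-1$, followed by reading off cumulants as coefficients of the cumulant generating function. Write $Z=(z^\ell_{\alpha_i,j_i})_{i=1}^{2p}$ and attach a formal variable $u_i$ to each index $i$. By Remark \ref{rem:cond-gauss}, conditionally on $z^{\ell-1}$ the vector $Z$ is centred Gaussian with covariance $\delta_{j_a,j_b}\Si^\ell(\alpha_a,\alpha_b)$. Hence
\[
\mathbb{E}\big[e^{\imath\sum_i u_i z^\ell_{\alpha_i,j_i}}\big]=\mathbb{E}\Big[\exp\Big(-\tfrac12\sum_{a,b}u_au_b\,\delta_{j_a,j_b}\Si^\ell(\alpha_a,\alpha_b)\Big)\Big].
\]
Set $Q:=\sum_{a,b}u_au_b\delta_{j_a,j_b}\Si^\ell(\alpha_a,\alpha_b)$. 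Taking logarithms, the cumulant generating function of $Z$ at $\imath u$ equals $\log\mathbb{E}[e^{-Q/2}]$. This is the cumulant generating function of the scalar random variable $Q$, evaluated at $-\tfrac12$, so
\[
\log\mathbb{E}[e^{-Q/2}]=\sum_{r\geq1}\frac{(-1/2)^r}{r!}\kappa_r(Q).
\]
To avoid integrability issues I will treat everything as formal power series in $u$. The moment bounds of \cite{Hanin2024} (Lemma \ref{lem:hanin-estimates}) ensure all needed moments of $\Si^\ell$ are finite. Alternatively one can use the law of total cumulance directly, which is the slicker version of the same computation.

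Each $\kappa_r(Q)$ is homogeneous of degree $2r$ in $u$, so every monomial has even total degree. Because the joint cumulant $\kappa(z_1,\dots,z_{2p})$ is $\imath^{-2p}$ times the coefficient of $u_1\cdots u_{2p}$ in the log-characteristic function, the odd cumulants vanish immediately. For the $2p$-th cumulant with distinct formal labels $1,\dots,2p$ (repeated neurons are handled by multilinearity, treating repeated entries as distinct labels), only $r=p$ contributes. Expanding $\kappa_p(Q)$ by multilinearity of joint cumulants gives
\[
\kappa_p(Q)=\sum_{(a_1,b_1),\dots,(a_p,b_p)}\prod_k u_{a_k}u_{b_k}\delta_{j_{a_k},j_{b_k}}\;\kappa\big(\Si^\ell(\alpha_{a_1},\alpha_{b_1}),\dots,\Si^\ell(\alpha_{a_p},\alpha_{b_p})\big).
\]
The coefficient of $u_1\cdots u_{2p}$ collects exactly those ordered tuples of ordered pairs whose entries form a perfect matching of $\{1,\dots,2p\}$. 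Each unordered pairing arises $p!\,2^p$ times: $p!$ from the order of the pairs, and $2^p$ from the orientation within pairs. These orientations give the same value by the symmetry $\Si^\ell(\alpha,\beta)=\Si^\ell(\beta,\alpha)$ and the symmetry of joint cumulants in their arguments.

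The remaining work is the constant. The contribution is
\[
\frac{(-1/2)^p}{p!}\cdot p!\,2^p=(-1)^p,
\]
and the conversion from the coefficient to the cumulant contributes $\imath^{-2p}=(-1)^p$. The two signs cancel, giving exactly the sum over $\mathcal{P}(2p)$ stated in the theorem. This bookkeeping — correctly counting ordered versus unordered pairings and handling coincident indices — is the main place to be careful. As a sanity check, $p=1$ recovers \eqref{eq:cov-z} and $p=2$ recovers \eqref{eq:kappa_from_C}.
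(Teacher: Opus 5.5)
Your proof is correct and is essentially the paper's second proof (Appendix~\ref{subsec:second-proof-exact-general-cumulants}): you condition on layer $\ell-1$ to write the characteristic function as $\mathbb{E}[e^{-Q/2}]$, expand it through the cumulants of the scalar quadratic form, and collapse the $(2p)!$ ordered index tuples into $(2p-1)!!$ pairings, each counted $2^p\,p!$ times; attaching one formal variable to each position handles repeated indices more cleanly than the paper's symmetrization over $A\times[n]$. The paper's main-text proof is the law-of-total-cumulance argument that you mention as an alternative.
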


We will give two proofs of Theorem \ref{thm:exact-general-cumulants}. The first, which is brief and relies on the law of total cumulance, is stated here. The second, which is more instructive but longer, can be found in Appendix \ref{subsec:second-proof-exact-general-cumulants}.

\begin{proof}[First proof of Theorem \ref{thm:exact-general-cumulants}]
    We will appeal to the \emph{law of total cumulance} \cite{Brillinger1969}, which states that for any finite collection of
random variables $X_1,\ldots,X_m$, and a sigma-algebra $\cF$ to condition on, one has
\begin{equation}
\label{eq:total-cumulance}
\ka\rb{X_1,\ldots,X_m}=\sum_{\pi \vdash \{1,\ldots,m\}}\ka\rb{\set{\ka\rb{\set{X_i}_{i\in B}\,\big|\,\cF}}_{B\in\pi}},
\end{equation}
where the sum is over all set partitions $\pi$ of $\set{1,\ldots,m}$ into disjoint blocks (of any size), the inner
$\ka(\,\cdot\mid\cF)$ is the \emph{conditional} joint cumulant of the variables in the block $B$ (which is itself a $\cF$-measurable random
variable), and the outer $\ka$ is the ordinary joint cumulant of these $\abs{\pi}$ random variables. (For example, when $m=2$ this
is the law of total covariance, $\cov(X,Y)=\bE\ab{\cov(X,Y\mid\cF)}+\cov\rb{\bE[X\mid\cF],\bE[Y\mid\cF]}$.)

In our case, using the sigma-algebra $\cF=\sigma\left( \left\{ z^{\ell-1}_{\al,k} \right\}_{\al \in A,k \in [n]} \right)$ of the previous layer, we see that because the random variable $z^\ell_{\alpha,k}$ conditionally Gaussian (see Remark \ref{rem:cond-gauss}), the \emph{only non-zero} conditional cumulants are the 2nd cumulant (aka covariance) which has $\kappa\left( z^\ell_{\alpha_1,k_1}, z^\ell_{\alpha_2,k_2} \given \cF \right) = \cov\left(z^\ell_{\alpha_1,k_1}, z^\ell_{\alpha_2,k_2} \given \cF \right) = \delta_{k_1,k_2} \Si^\ell(\alpha_1,\alpha_2)$. Because all other cumulants are zero for Gaussian random varaibles, the only partitions $\pi$ that have non-zero contribution in \eqref{eq:total-cumulance} are precisely the pair partitions. This shows that the odd cumulants are 0 (since there are no pair partitions) and that the even cumulants are

\begin{align*}
    \kappa\left( \left\{ z_{\alpha_i,j_i}^\ell \right\}_{i=1}^{2p}  \right) & = \sum_{\{\{a_1,b_1\},\ldots,\{a_p,b_p\}\} \in \mathcal{P}(2p)}  \kappa \left( \left\{ \delta_{j_{a_k},j_{b_k}} \Si^\ell(\alpha_{a_k},\alpha_{b_k})\right\}_{k=1}^p \right),
    \end{align*} 
whence the result follows by multi-linearity of cumulants to pull out the $\delta_{j_{a_k},j_{b_k}}$ factors.
\end{proof}

    \subsection{Linear Activation Function}
\label{sec:linear-application}

        When $\vp$ is the identity function, i.e. $\vp(x)=x$, we obtain a very nice simplification of the formulas of Theorem \ref{thm:main}. Moreover, we are able to solve the recursion equation for the cumulants exactly (not just to order $1/n$). Recall from \eqref{eq:K-def} the definition $K^0(\alpha,\beta) = \frac{1}{n_{in}} \langle x_\al,x_\be \rangle$ denotes the covariance of the input layer. For the linear network $\varphi(x)=x$ it is also clear that the infinite width limit $K^\ell(\alpha,\beta) =  K^0(\alpha,\beta)$ is constant, so we just refer to this constant as $K(\alpha,\beta)$ for the linear network.

        \begin{cor}[Linear networks: exact recursive formulas]
\label{cor:linear}
Let $\vp(x)=x$ and let layer $0$ be Gaussian with kernel $K$. Then for every $n$ and every $\ell\geq0$:

\noindent\underline{Mean Covariance.} $\bar\Si^\ell=K$ exactly for all layers.

\noindent\underline{Fourth cumulant.} The tensor $C^\ell(\al,\al';\be,\be'):=n\cov\ab{\Si^\ell(\al,\al'),\Si^\ell(\be,\be')}$
in \eqref{eq:defCell} satisfies $C^0=0$ and the \emph{exact} recursion
\begin{align}
\label{eq:linearC}
C^\ell(\al,\al';\be,\be')
&= K(\al,\be)K(\al',\be') + K(\al,\be')K(\al',\be) + C^{\ell-1}(\al,\al';\be,\be') \notag\\
&\quad + \frac1n\left(\,C^{\ell-1}(\al,\be;\al',\be') + C^{\ell-1}(\al,\be';\al',\be)\,\right).
\end{align}
\end{cor}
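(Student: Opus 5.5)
The plan is to compute $C^\ell$ directly from its definition \eqref{eq:defCell}. When $\vp(x)=x$, the matrix $\Si^\ell(\al,\al')=\frac1n\sum_{j=1}^n z^{\ell-1}_{\al,j}z^{\ell-1}_{\al',j}$ is a quadratic form in the layer-$(\ell-1)$ neurons. Its mean and covariance are therefore determined exactly by the second and fourth cumulants of layer $\ell-1$, and Corollary \ref{cor:cumulant_two_and_four} expresses those cumulants in terms of $\bar\Si^{\ell-1}$ and $C^{\ell-1}$. No approximation enters, which is why the recursion is exact. Throughout, I use the convention that layer $0$ is exactly Gaussian with covariance $I_n\otimes K$. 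This amounts to taking $\Si^0\equiv K$ deterministic, so that $\bar\Si^0=K$ and $C^0=0$, and the case $\ell=1$ is covered by the same computation.

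\emph{Mean covariance.} Taking expectations and using \eqref{eq:cov-z} for layer $\ell-1$ gives $\bar\Si^\ell(\al,\al')=\frac1n\sum_j \e[z^{\ell-1}_{\al,j}z^{\ell-1}_{\al',j}]=\bar\Si^{\ell-1}(\al,\al')$. Induction from $\bar\Si^0=K$ then gives $\bar\Si^\ell=K$ for every $\ell$.

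\emph{Fourth cumulant.} Expand
\[
C^\ell(\al,\al';\be,\be')=\frac1n\sum_{j,k=1}^n\cov\big(z_{\al,j}z_{\al',j},\,z_{\be,k}z_{\be',k}\big),
\]
where $z=z^{\ell-1}$. All variables are centred, because odd cumulants vanish by Theorem \ref{thm:exact-general-cumulants}. The moment--cumulant formula for four centred variables gives $\e[abcd]-\e[ab]\e[cd]=\ka(a,b,c,d)+\e[ac]\e[bd]+\e[ad]\e[bc]$. The two covariance pairings contribute $\de_{jk}\big(K(\al,\be)K(\al',\be')+K(\al,\be')K(\al',\be)\big)$. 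Summing over $j,k$ gives a factor $n$, which cancels the prefactor $\frac1n$, and these are the first two terms of \eqref{eq:linearC}.

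For the cumulant term, apply \eqref{eq:kappa_from_C} with neuron indices $(j,j,k,k)$, rewriting each $\cov[\Si^{\ell-1}(\cdot),\Si^{\ell-1}(\cdot)]$ as $\frac1n C^{\ell-1}(\cdot)$:
\begin{itemize}
\item The first pairing carries $\de_{jj}\de_{kk}=1$. It contributes $\frac1nC^{\ell-1}(\al,\al';\be,\be')$ for all $n^2$ pairs $(j,k)$, which after the prefactor $\frac1n$ yields exactly $C^{\ell-1}(\al,\al';\be,\be')$.
\item The other two pairings carry $\de_{jk}^2=\de_{jk}$. They contribute $\frac1n\big(C^{\ell-1}(\al,\be;\al',\be')+C^{\ell-1}(\al,\be';\al',\be)\big)$ on the $n$ diagonal pairs, which after the prefactor yields the $\frac1n(\cdots)$ term.
\end{itemize}
Adding these pieces gives \eqref{eq:linearC}.

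The only delicate point is bookkeeping rather than analysis. Because $\Si^\ell$ is built from layer $\ell-1$, the cumulants of $z^{\ell-1}$ must be expressed through $\Si^{\ell-1}$, hence through $C^{\ell-1}$. Care is also needed to match the orderings of the arguments of $C^{\ell-1}$ in the three pairings correctly, using the symmetries listed in Definition \ref{def:C}. Finally, the convention at layer $0$ must be handled correctly so that the base case produces $C^0=0$ and $C^1(\al,\al';\be,\be')=K(\al,\be)K(\al',\be')+K(\al,\be')K(\al',\be)$.
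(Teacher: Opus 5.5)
Your proof is correct and is essentially the paper's argument: the same mean-covariance induction, the same identity $\cov(XY,UV)=\bE[XU]\bE[YV]+\bE[XV]\bE[YU]+\ka(X,Y,U,V)$, and the same use of \eqref{eq:kappa_from_C} at layer $\ell-1$ to produce the $C^{\ell-1}$ terms. The only difference is cosmetic: you sum over all neuron pairs $(j,k)$ and let the Kronecker deltas do the bookkeeping, whereas the paper first uses exchangeability \eqref{eq:C-as-exchangable} to split into one same-neuron term and $n-1$ cross-neuron terms, and your explicit convention $\Si^0\equiv K$ makes the base case $C^0=0$ a little cleaner than in the paper.
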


\begin{proof}
\underline{Mean Covariance.} Since $\vp(x)=x$, by \eqref{eq:Sigma-def-intro} we have
$\bar \Si^\ell(\al,\be)=\bE\ab{z^{\ell-1}_{\al,1}z^{\ell-1}_{\be,1}}=\bar\Si^{\ell-1}(\al,\be)$, so by induction
$\Si^\ell=\Si^0=K$ for all $\ell$.

\underline{Fourth cumulant.} Layer $0$ is Gaussian, so its fourth (and higher) cumulants vanish, consistent with $C^0=0$. By \eqref{eq:C-as-exchangable} and using $\vp(x)=x$, we have
\begin{align}
\label{eq:linear-split}
C^\ell(\al,\al';\be,\be')
= \cov\ab{z^{\ell-1}_{\al,1}z^{\ell-1}_{\al',1},\,z^{\ell-1}_{\be,1}z^{\ell-1}_{\be',1}}
 + (n-1)\cov\ab{z^{\ell-1}_{\al,1}z^{\ell-1}_{\al',1},\,z^{\ell-1}_{\be,2}z^{\ell-1}_{\be',2}} .
\end{align}
For mean-zero random variables $X,Y,U,V$ we have the identity
$\bE\ab{XYUV}=\ka(X,Y,U,V)+\bE[XY]\bE[UV]+\bE[XU]\bE[YV]+\bE[XV]\bE[YU]$, and so
\begin{align}
\label{eq:cov-of-products}
\cov\ab{XY,UV} = \bE[XU]\,\bE[YV] + \bE[XV]\,\bE[YU] + \ka(X,Y,U,V).
\end{align}
We apply \eqref{eq:cov-of-products} to both terms of \eqref{eq:linear-split}. The mean covariances of layer $\ell-1$ are
$\bar \Si^{\ell-1}=K$ (shown above) on the same neuron and $0$ across different neurons, and its fourth cumulants are
given exactly by \eqref{eq:kappa_from_C} (Theorem \ref{thm:exact-general-cumulants}).

\emph{Same neuron:} With all four variables on neuron $1$, all three $\delta$-products in
\eqref{eq:kappa_from_C} equal $1$, and so
\begin{align*}
\cov\ab{z^{\ell-1}_{\al,1}z^{\ell-1}_{\al',1},\,z^{\ell-1}_{\be,1}z^{\ell-1}_{\be',1}}
&= K(\al,\be)K(\al',\be') + K(\al,\be')K(\al',\be)\\
&\quad + \frac1n\Big(C^{\ell-1}(\al,\al';\be,\be') + C^{\ell-1}(\al,\be;\al',\be') + C^{\ell-1}(\al,\be';\al',\be)\Big).
\end{align*}

\emph{Two different neurons:} Now $\bE\ab{z_{\al,1}z_{\be,2}}=\bE\ab{z_{\al,1}z_{\be',2}}=0$, so the
covariance terms in \eqref{eq:cov-of-products} vanish, and in \eqref{eq:kappa_from_C} only the term
$\delta_{11}\delta_{22}$ survives:
\begin{align*}
\cov\ab{z_{\al,1}z_{\al',1},\,z_{\be,2}z_{\be',2}} = \frac1n\,C^{\ell-1}(\al,\al';\be,\be') .
\end{align*}

\noindent Substituting both into \eqref{eq:linear-split}, the term $\frac1n C^{\ell-1}(\al,\al';\be,\be')$ from the
same-neuron part combines with $\frac{n-1}{n}C^{\ell-1}(\al,\al';\be,\be')$ from the cross-neuron part to give
$C^{\ell-1}(\al,\al';\be,\be')$, and \eqref{eq:linearC} follows. 
\end{proof}

\begin{lem}[Linear networks: explicit solution to 4th order cumulant]
\label{lem:linear-explicit}
Recall that the 4th order cumulant tensor is given in terms of the $C^\ell$ which satisfies the recursion given in Corollary \ref{cor:linear} and that the mean covariance is $\bar\Sigma^\ell = K$ exactly. The exact solution for the $C^\ell$ is

\begin{align}
\label{eq:linearCsolution}
C^\ell(\al,\al';\be,\be') = v_\ell \Big( K(\alpha,\beta)K(\alpha',\beta')+K(\al,\be')K(\al',\be) \Big) + w_\ell\,K(\al,\al')K(\be,\be'),
\end{align}
where the coefficents $v_\ell\in \mathbb{R},w_\ell\in \mathbb{R}$ are given explicitly by

\begin{equation}
\label{eq:wv_formula}
v_\ell := \frac n3\left[\rb{1+\frac2n}^{\ell}-\rb{1-\frac1n}^{\ell}\right],\quad\quad
w_\ell := \frac n3\left[\rb{1+\frac2n}^{\ell}+2\rb{1-\frac1n}^{\ell}-3\right].
\end{equation}

\end{lem}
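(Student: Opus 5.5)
Since the recursion \eqref{eq:linearC} of Corollary \ref{cor:linear} is exact and linear in $C^{\ell-1}$, with an inhomogeneous term built only from $K$, I would prove \eqref{eq:linearCsolution} by induction on $\ell$. The induction stays inside the three-dimensional space spanned by the tensors
\[
A(\al,\al';\be,\be'):=K(\al,\be)K(\al',\be'),\quad B(\al,\al';\be,\be'):=K(\al,\be')K(\al',\be),\quad D(\al,\al';\be,\be'):=K(\al,\al')K(\be,\be').
\]
The first step is to record how the index permutations in the recursion act on these tensors. Write $\sigma_1$ for $(\al,\al';\be,\be')\mapsto(\al,\be;\al',\be')$ and $\sigma_2$ for $(\al,\al';\be,\be')\mapsto(\al,\be';\al',\be)$. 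Using only the symmetry of $K$, a direct check gives
\[
A\circ\sigma_1=D,\quad B\circ\sigma_1=B,\quad D\circ\sigma_1=A,\qquad A\circ\sigma_2=A,\quad B\circ\sigma_2=D,\quad D\circ\sigma_2=B.
\]
It follows that for $T=v(A+B)+wD$ we have
\[
T\circ\sigma_1+T\circ\sigma_2=(v+w)(A+B)+2vD .
\]
So the span of $A+B$ and $D$ is invariant under the recursion, and the ansatz closes.

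The second step substitutes $C^{\ell-1}=v_{\ell-1}(A+B)+w_{\ell-1}D$ into \eqref{eq:linearC} and matches coefficients. This gives the two-dimensional linear system
\[
v_\ell=1+v_{\ell-1}+\tfrac1n(v_{\ell-1}+w_{\ell-1}),\qquad w_\ell=w_{\ell-1}+\tfrac2n v_{\ell-1},\qquad v_0=w_0=0,
\]
where the initial values come from $C^0=0$. Strictly speaking, this also handles the base case $\ell=1$.

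The third step solves this system. The homogeneous matrix is
\[
M=\begin{pmatrix}1+\frac1n&\frac1n\\ \frac2n&1\end{pmatrix},
\]
with eigenvalues $1+\frac2n$ and $1-\frac1n$, coming from the characteristic roots $\mu=2/n,-1/n$ of $M-I$. The solution is therefore a combination of $(1+\frac2n)^\ell$, $(1-\frac1n)^\ell$ and a constant particular solution. Rather than rederive it, it is cleaner simply to verify the claimed formulas \eqref{eq:wv_formula}.

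Check $v_0=w_0=0$. For $w$, set $a=(1+\frac2n)^{\ell-1}$ and $b=(1-\frac1n)^{\ell-1}$. Then
\[
w_\ell-w_{\ell-1}=\tfrac n3\Big[\tfrac2n a-\tfrac2n b\Big]=\tfrac23(a-b)=\tfrac2n v_{\ell-1},
\]
as required. For $v$, note that $v_{\ell-1}+w_{\ell-1}=n(a-1)$, so
\[
v_{\ell-1}+\tfrac1n\big(v_{\ell-1}+w_{\ell-1}\big)+1=v_{\ell-1}+a .
\]
On the other hand,
\[
v_\ell-v_{\ell-1}=\tfrac n3\Big[\tfrac2n a+\tfrac1n b\Big]\cdot\ldots
\]
Computing carefully, $v_\ell-v_{\ell-1}=\frac13(2a+b)$. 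This is not obviously equal to $a$, so this is where I must be careful.

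I expect this coefficient bookkeeping to be the main obstacle, especially the action of $\sigma_1$ and $\sigma_2$ and which of $A,B,D$ pair with which coefficient. If the identity $v_\ell-v_{\ell-1}=a$ fails, I would recheck the permutation table. In particular, the combination feeding into $v$ may be $\frac1n(v+w)$ applied only to the $A$ or the $B$ component separately, and the symmetric ansatz may need to use the symmetries of $C^{\ell}$ from Definition \ref{def:C}. Once the table is confirmed, I would re-solve the $2\times 2$ system by diagonalizing $M$ and adjust the constant particular solution until the formulas match \eqref{eq:wv_formula}.
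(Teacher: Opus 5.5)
Your strategy matches the paper's. You show that the span of $K(\al,\be)K(\al',\be')+K(\al,\be')K(\al',\be)$ and $K(\al,\al')K(\be,\be')$ is preserved by \eqref{eq:linearC}. You then reduce to the recursion $v_\ell=1+v_{\ell-1}+\tfrac1n(v_{\ell-1}+w_{\ell-1})$, $w_\ell=w_{\ell-1}+\tfrac2n v_{\ell-1}$, $v_0=w_0=0$, which is exactly the paper's, and solve it.

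As written, however, the proof does not close. Your check of the $v$-recursion ends in an apparent mismatch, $\tfrac13(2a+b)$ versus $a$. The fixes you propose (rechecking the table, splitting the $\tfrac1n(v+w)$ term by component, altering the ansatz or the particular solution) are all unnecessary. The mismatch is an arithmetic slip: $v_{\ell-1}+w_{\ell-1}$ is not $n(a-1)$. From \eqref{eq:wv_formula},
\[
v_{\ell-1}+w_{\ell-1}=\tfrac n3\big(2a+b-3\big),
\]
and it is $2v_{\ell-1}+w_{\ell-1}$ that equals $n(a-1)$. With the correct value, $1+v_{\ell-1}+\tfrac1n(v_{\ell-1}+w_{\ell-1})=v_{\ell-1}+\tfrac13(2a+b)$. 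This is exactly what your own (correct) computation $v_\ell-v_{\ell-1}=\tfrac13(2a+b)$ requires, so the induction closes.

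Separately, your table for $\sigma_2$ is wrong. We have $A\circ\sigma_2(\al,\al';\be,\be')=A(\al,\be';\al',\be)=K(\al,\al')K(\be',\be)$, so $A\circ\sigma_2=D$; likewise $B\circ\sigma_2=A$ and $D\circ\sigma_2=B$. Since $\sigma_2$ is a $3$-cycle on the last three slots, it must cycle the three pairings rather than fix one. The error is harmless only because $A$ and $B$ carry the same coefficient $v$. Then $T\circ\sigma_2=v(A+D)+wB$ either way, and your combined identity $T\circ\sigma_1+T\circ\sigma_2=(v+w)(A+B)+2vD$ is correct. Still, the ``direct check'' as stated is false.

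For comparison, the paper avoids guess-and-verify altogether. It uses the combinations $s_\ell=2v_\ell+w_\ell$ and $d_\ell=v_\ell-w_\ell$, which are left eigenvectors of your $M$. These decouple the system into $s_\ell=2+(1+\tfrac2n)s_{\ell-1}$ and $d_\ell=1+(1-\tfrac1n)d_{\ell-1}$, giving $s_\ell=n[(1+\tfrac2n)^\ell-1]$ and $d_\ell=n[1-(1-\tfrac1n)^\ell]$ directly. Your quantity $n(a-1)$ is precisely $s_{\ell-1}$, which is where the confusion with $v_{\ell-1}+w_{\ell-1}$ crept in.
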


\begin{proof}
The proof goes by showing that the form of the solution \eqref{eq:linearCsolution} is preserved when the scalars $v_\ell$, $w_\ell$ satisfy the recursion \eqref{eq:certain-recursion} below. This recursion is then solved to yield \eqref{eq:wv_formula}.

For notational convenience, let us define $P_1=K(\al,\be)K(\al',\be')$, $P_2=K(\al,\be')K(\al',\be)$ and $P_3=K(\al,\al')K(\be,\be')$ to be the the three possible pairings. 

Substituting the ansatz \eqref{eq:linearCsolution} for
$C^{\ell-1}$ into the two permuted terms of \eqref{eq:linearC} gives
$C^{\ell-1}(\al,\be;\al',\be')=v_{\ell-1}(P_2+P_3)+w_{\ell-1}P_1$ and $C^{\ell-1}(\al,\be';\al',\be)=v_{\ell-1}(P_1+P_3)+w_{\ell-1}P_2$,
so the form of the solution is preserved by \eqref{eq:linearC} when $v_\ell$ and $w_\ell$ satisfy
\begin{align}
\label{eq:certain-recursion}
v_\ell=1+v_{\ell-1}+\tfrac1n\big(v_{\ell-1}+w_{\ell-1}\big),\qquad w_\ell=w_{\ell-1}+\tfrac2n\,v_{\ell-1},\qquad v_0=w_0=0.
\end{align}
This can be solved by doing the linear combinations $s_\ell:=2v_\ell+w_\ell$ and $d_\ell:=v_\ell-w_\ell$. Notice then that  $s_\ell=2+(1+\frac2n)s_{\ell-1}$ and
$d_\ell=1+(1-\frac1n)d_{\ell-1}$, so $s_\ell=n[(1+\frac2n)^\ell-1]$ and $d_\ell=n[1-(1-\frac1n)^\ell]$. Solving
$v_\ell=\frac13(s_\ell+d_\ell)$ and $w_\ell=\frac13(s_\ell-2d_\ell)$ gives \eqref{eq:wv_formula}.
\end{proof}

\begin{rem}
Note that for fixed $\ell$, we have $v_\ell = \ell + O(1/n)$ and $w_\ell = 0 + O(1/n)$, which gives the linearized $$C^\ell(\al,\al';\be,\be') = \ell \Big( K(\al,\be)K(\al',\be') +K(\al,\be')K(\al',\be) \Big) + O(1/n).$$ This also could have been obtained from the recursion in Theorem \ref{thm:main}. We compare the exact values from the solution in Lemma \ref{lem:linear-explicit} to this $O(1/n)$ approximation to Monte Carlo simulations for various aspects of the network below.
\end{rem}

\begin{figure}[H]
    \centering
    \includegraphics[width=\linewidth]{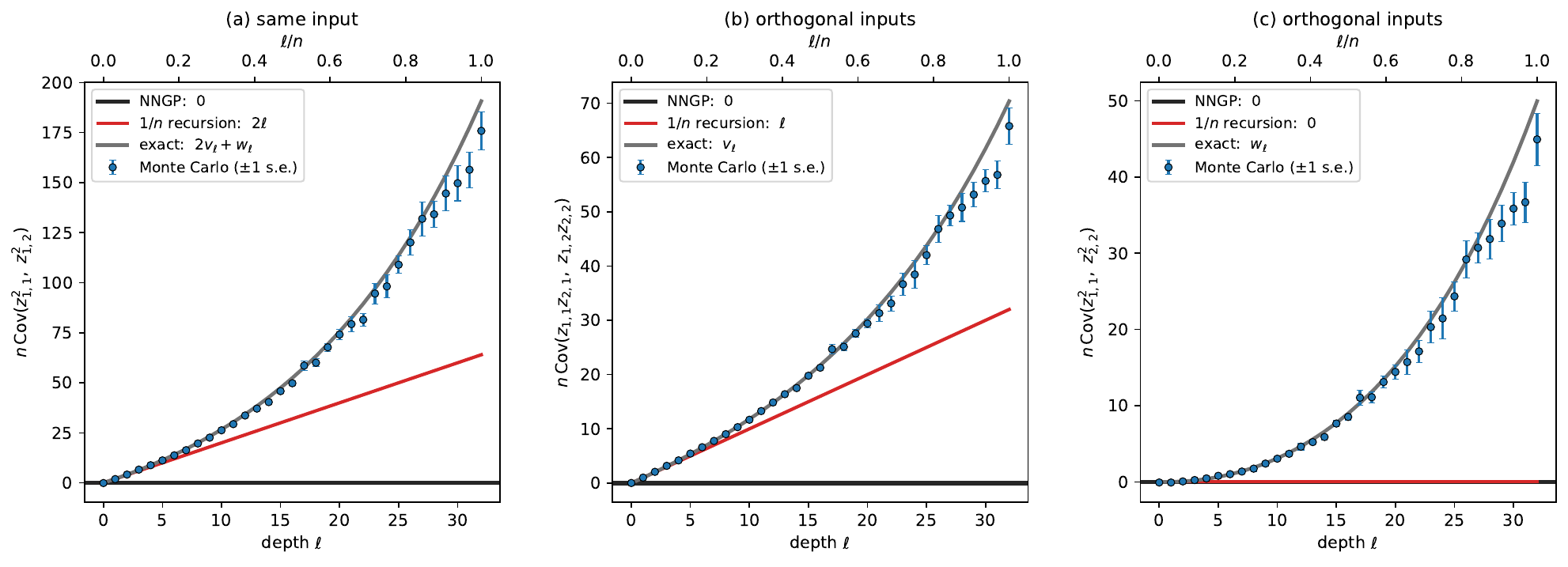}
    \caption{\footnotesize \textbf{Linear network. Fourth cumulants.} Quantities related to the fourth cumulant of a linear network
    ($\vp(x)=x$, width $n=32$) for depths up to $\ell=32$. Each plot compares three solutes: 1. the exact values from Lemma \ref{lem:linear-explicit}, 2. the values obtained
    from the $1/n$ recursion of Theorem \ref{thm:main}, and 3. from Monte Carlo simulations. We use two orthogonal unit inputs ($K=I_2$) and two different neurons, so that each statistic isolates one exact coefficient; here $z_{\al,k}$ denotes the output of neuron $k$ on
    input $\al$, with $\al \in \{1,2\}$ and $k \in \{1,2\}$. The plots show
    (a) Same input: $n\cov\big(z_{1,1}^2,z_{1,2}^2\big)=C^\ell(1,1;1,1)$, exactly $2v_\ell+w_\ell=n\big[(1+\frac2n)^\ell-1\big]$ and
    $2\ell$ according to the $1/n$ recursion.
    (b) Orthogonal inputs: $n\cov\big(z_{1,1}z_{2,1},\,z_{1,2}z_{2,2}\big)=C^\ell(1,2;1,2)$, exactly $v_\ell$ and $\ell$ according to the
    $1/n$ recursion.
    (c) Orthogonal inputs: $n\cov\big(z_{1,1}^2,z_{2,2}^2\big)=C^\ell(1,1;2,2)$, exactly $w_\ell$ and $0$ according to the $1/n$
    recursion.
    Grey: exact solution for $C^\ell$ from Lemma \ref{lem:linear-explicit}; red: $O(1/n)$ approximation from the recursion of
    Theorem \ref{thm:main}; blue: Monte Carlo over $2^{16}$ networks, with $\pm1$ standard error computed from batch means. The pure neural network
    Gaussian (NNGP) approxiamtion predicts $0$ for all these quantities. Note that the $1/n$ recursion is accurate while $\ell/n$ is small but gets worse when $\ell/n$ is of order
    one. Additionally, the coefficient $w_\ell$ of the third pairing $K(\al,\al')K(\be,\be')$ is of order $1/n$, so this pairing
    contributes to the fourth cumulant $\ka=\frac1nC^\ell$ only at order $n^{-2}$ and is therefore absent from the $1/n$ recursion but
    present in the exact solution and in the simulations. }
    \label{fig:linear-exact-vs-first-order}
\end{figure}
\begin{figure}[H]
    \centering
    \includegraphics[width=\linewidth]{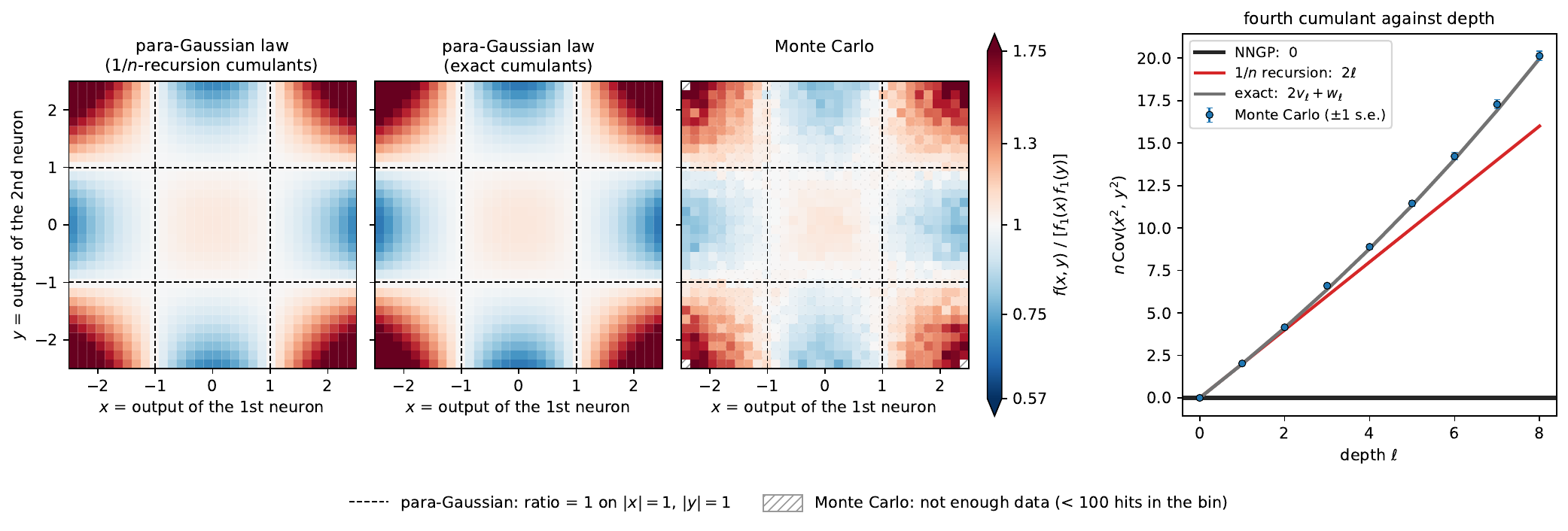}
    \caption{\footnotesize\textbf{Linear network. Two neurons outputs, one input. Uncorrelated but not independent.} The distribution of two different neurons of a linear network
    ($\vp(x)=x$, width $n=32$) receiving the same unit norm input, i.e. $A={\alpha}$ and we look at the joint distribution of the random variables $X=z_{\alpha,1},Y=z_{\alpha,2}$. These random variables are uncorrelated, but not independent. \textbf{Left} three panels: the ratio $f(x,y)/\big[f_1(x)f_1(y)\big]$ of
    the joint density of the two neurons' outputs to the product of their marginal densities at depth $\ell=4$, which the NNGP
    predicts to be identically $1$. The para-Gaussian law predicts this as $1+\frac{\ka}{4}(x^2-1)(y^2-1)$, where $\ka=\ka(x,x,y,y)$ is the
    cross-neuron fourth cumulant. We show both the result $\ka=\frac{2\ell}{n}$ from the $1/n$-approximated-recursion and with the exact value
    $\ka=\frac{2v_\ell+w_\ell}{n}$, next to Monte Carlo over $2^{22}$ neuron pairs. \textbf{Right}: the cumulant $n\cov(x^2,y^2)$ against depth, with the $1/n$ recursion
    $2\ell$, the exact value $2v_\ell+w_\ell$ and Monte Carlo ($\pm1$ standard error).}
    \label{fig:linear-two-neurons-one-input}
\end{figure}

\begin{figure}[H]
    \centering
    \includegraphics[width=0.85\linewidth]{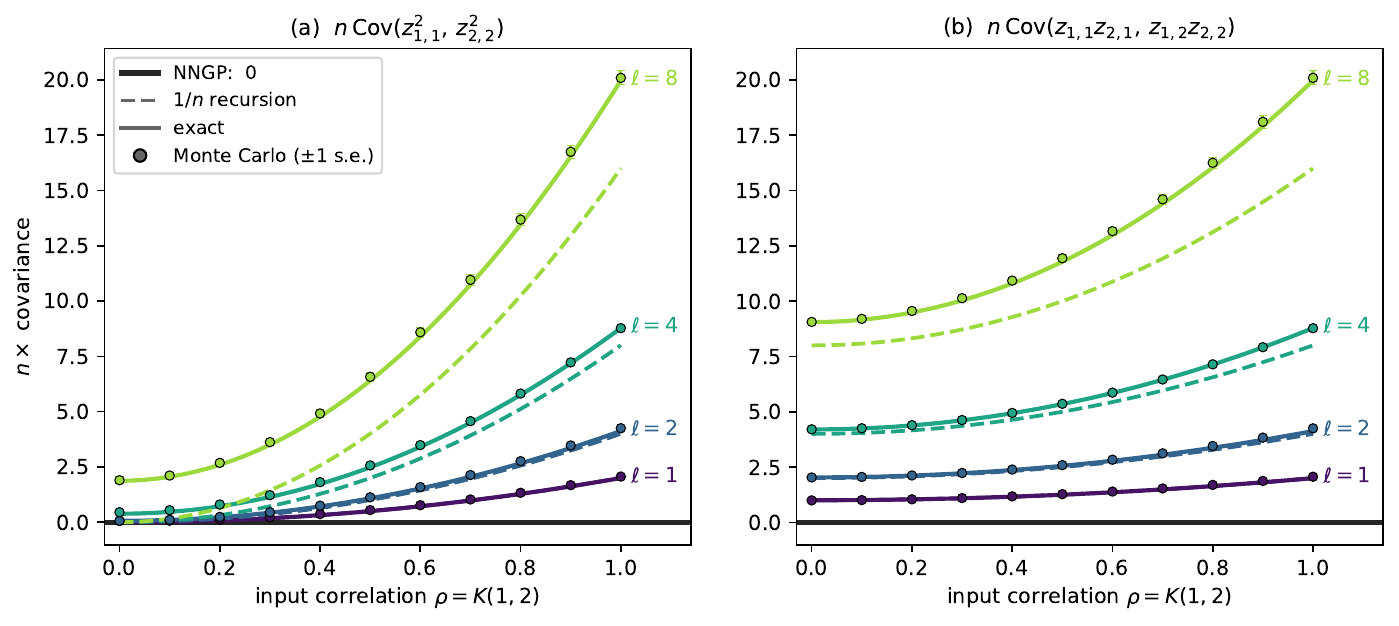}
    \caption{\footnotesize\textbf{Linear network. Two neurons, two inputs. Dependence on the input correlation.} Cross-neuron fourth cumulants of a linear
    network ($\vp(x)=x$, width $n=32$) for two unit norm inputs with correlation $\rho=K(1,2)$, against $\rho$ for depths
    $\ell=1,2,4,8$; here $z_{\al,k}$ denotes the output of neuron $k$ on input $\al$ where $\al \in \{1,2\}$ and $k\in \{1,2\}$.
    \textbf{(a)} $n\cov\big(z_{1,1}^2,z_{2,2}^2\big)=C^\ell(1,1;2,2)$, the dependence between the first neuron on input $1$ and the second neuron
    on input $2$: $2\ell\rho^2$ according to the $1/n$ recursion, exactly $2\rho^2v_\ell+w_\ell$.
    \textbf{(b)} $n\cov\big(z_{1,1}z_{2,1},\,z_{1,2}z_{2,2}\big)=C^\ell(1,2;1,2)$, the dependence between the products of each neuron's outputs
    on the two inputs: $\ell(1+\rho^2)$ according to the $1/n$ recursion, exactly $(1+\rho^2)v_\ell+\rho^2w_\ell$.
    Dashed lines: $1/n$ recursion of Theorem \ref{thm:main}; solid lines: exact solution from Lemma \ref{lem:linear-explicit};
    markers: Monte Carlo over $2^{22}$ neuron pairs per value of $\rho$ ($\pm1$ standard error). The Gaussian prediction (NNGP) predicts $0$ in all cases. 
    At $\rho=0$ the $1/n$ recursion predicts no dependence in (a), while the exact cumulant $w_\ell$ is positive and clearly
    visible at $\ell=8$.}
    \label{fig:linear-two-neurons-two-inputs}
\end{figure}
\begin{figure}[H]
    \centering
    \includegraphics[width=\linewidth]{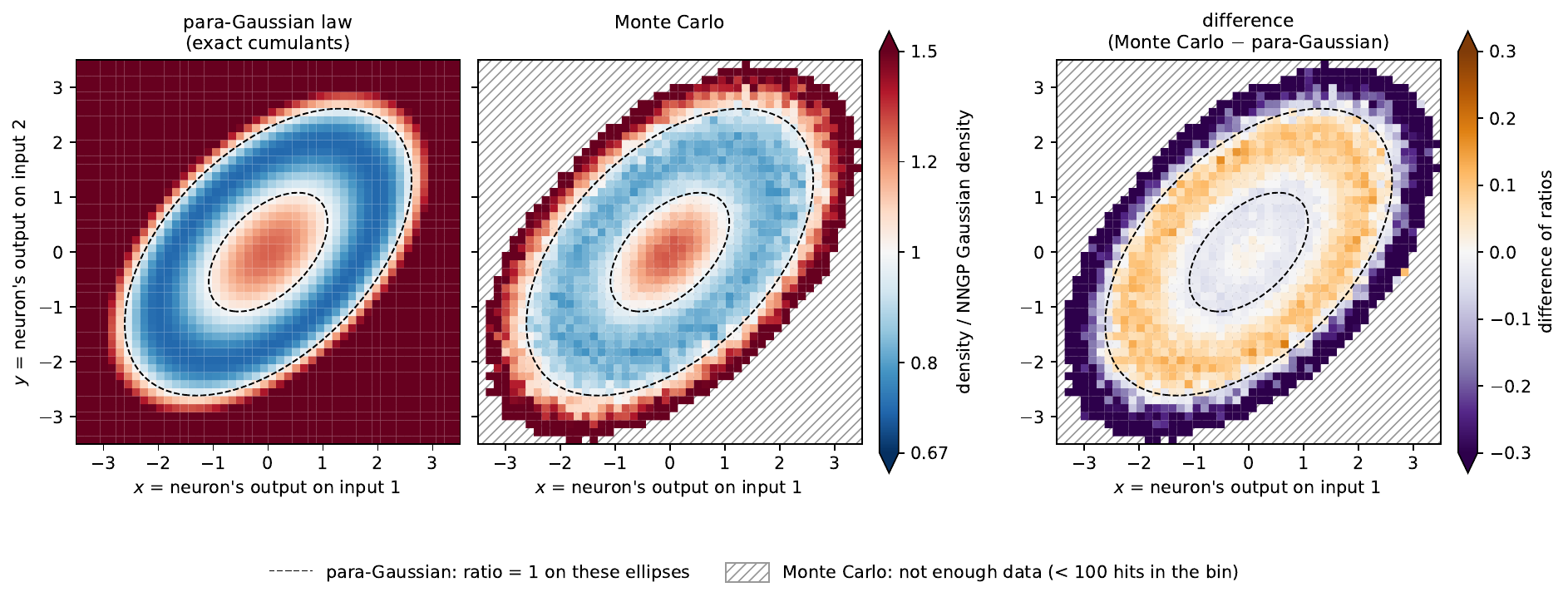}
    \caption{\footnotesize \textbf{Linear network. One output neuron, two inputs. The joint density.} Representation of the joint density of the random
    vector $\big(z^\ell_{1,1},z^\ell_{2,1}\big)$ consisting of the outputs of the first neuron of layer $\ell=4$ for two unit norm inputs
    $A=\{1,2\}$ with correlation $K^{0}(1,2)=0.5$, in a linear network ($\vp(x)=x$, width $n=32$). The left and middle panels show the
    {\bf \emph{ratio of the density}} to the Gaussian (NNGP) predicted density $\mathcal{N}(0,K^{0})$ (numbers $>1$ indicate a higher
    density, and numbers $<1$ a lower density, than the pure Gaussian prediction). \textbf{Left:} the para-Gaussian density of
    Definition \ref{def:PG}, with covariance $K^0$ and fourth cumulant tensor $\ka^\ell$ for a single neuron
    ($i=j=k=l=1$), where $C^\ell$ is the exact solution \eqref{eq:linearCsolution} with $v_\ell,w_\ell$ given by \eqref{eq:wv_formula}.
    \textbf{Middle:} Monte Carlo over $2^{22}$ samples, with probabilities compared bin by bin; hatched bins contain fewer than
    $100$ samples. \textbf{Right:} the \emph{difference} between the two ratios, Monte Carlo minus para-Gaussian, which is the difference of the
    two densities measured relative to the Gaussian density. The two agree closely in the bulk of
    the distribution. The difference is close to $0$ inside the inner ellipse and of order $0.1$ in the ring between the ellipses. The
    largest differences are in the tails, where the para-Gaussian ratio keeps growing while the true density does not. The dashed curves are where the para-Gaussian ratio equals $1$, i.e.\ where the para-Gaussian and Gaussian densities agree.}
    \label{fig:linear-one-neuron-two-inputs}
\end{figure}

\subsection{ReLU Activation Function}
\label{sec:relu-application}
We now turn to the normalized ReLU, $\vp(x)=\sqrt2\max\{x,0\}$. Unlike the linear case, the Gaussian expectations appearing in
Theorem \ref{thm:main} cannot be combined into a closed recursion for a finite set of coefficients, and we do not have an exact
formula for the fourth cumulant coefficient $C^\ell$ when there are several inputs. In particular, the covariance itself now has a
non-trivial finite width correction, which we denote as \begin{equation}
\label{eq:sigma-correction}
\sigma^\ell:=\lim_{n\to\infty}n(\bar\Sigma^\ell-K^\ell)
\end{equation} defined so that $\bar\Si^\ell=K^\ell+\frac1n\si^\ell+O(n^{-2})$ by \eqref{eq:mainSigmaExpansion}, with $\si^\ell\neq0$
for distinct inputs. Instead, we evaluate the $1/n$ recursions \eqref{eq:mainSigmaExpansion} and \eqref{eq:mainC} numerically. Since $\vp$ is
positively homogeneous, every Gaussian expectation in these recursions factorizes into a radial moment, which is explicit, and an
integral over angles, which we compute by Gauss--Legendre quadrature on the arcs between the kinks of $\vp$. (Closed forms for these integrals can also be written in terms of the functions $J_{a,b}$ of
\cite{JakubNica2024}.)

The one exception, where there \emph{is} an exact formula is the case of a a single input, where the recursion closes exactly. This result was equivalently obtained as the 4th moment in \cite{HaninNica2020b}. We repeat this part of the argument in the lemma below.

\begin{lem}[ReLU networks with one input]
\label{lem:relu-one-input}
For one unit norm input, $A = \{\alpha\}$, we have $\bar\Si^\ell(\alpha,\alpha)=1$ and $C^\ell(\alpha,\alpha;\alpha,\alpha)=n\left[\left(1+\frac5n\right)^\ell-1\right]$.
\end{lem}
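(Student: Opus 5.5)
We track the single quantity $S^\ell:=\Si^\ell(\al,\al)=\frac1n\sum_{j=1}^n\vp(z^{\ell-1}_{\al,j})^2$ and use positive homogeneity of $\vp$ to get an exact recursion for its first two moments. Since $A=\{\al\}$ has a single input, Remark \ref{rem:cond-gauss} says that, conditionally on layer $\ell-1$, the neurons $z^{\ell}_{\al,j}$, $j\in[n]$, are i.i.d.\ $\mathcal N(0,S^{\ell})$. Write $z^{\ell}_{\al,j}=\sqrt{S^{\ell}}\,G_j$ with $G_j$ i.i.d.\ standard Gaussians independent of $S^\ell$. Because $\vp(cx)=c\,\vp(x)$ for $c\ge0$, we get $\vp(z^\ell_{\al,j})^2=S^\ell\,\vp(G_j)^2$. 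Hence
\[
S^{\ell+1}=S^\ell\cdot \frac1n\sum_{j=1}^n\vp(G_j)^2=:S^\ell\,R^{\ell},
\]
where $R^\ell$ is independent of $S^\ell$, and the $R^\ell$ are i.i.d.\ across layers (fresh weights each layer). The same computation applies at layer $0$: $z^0_{\al,j}$ are i.i.d.\ $\mathcal N(0,K^0(\al,\al))=\mathcal N(0,1)$ for a unit-norm input. So $S^1=R^0$, or equivalently $S^0\equiv1$ deterministic, matching $C^0=0$.

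Next we compute the moments of $R=\frac1n\sum_j\vp(G_j)^2$. For the normalized ReLU, $\vp(G)^2=2G^2\mathbf 1_{G>0}$, so $\bE[\vp(G)^2]=1$ and $\bE[\vp(G)^4]=4\cdot\frac12\cdot3=6$. Thus $\var(\vp(G)^2)=5$, which gives $\bE[R]=1$ and $\bE[R^2]=1+\frac5n$. By independence, $\bar\Si^{\ell}(\al,\al)=\bE[S^\ell]=\prod\bE[R]=1$ and $\bE[(S^\ell)^2]=(1+\frac5n)^{\ell}$ with the appropriate indexing. The exponent must match the stated formula, i.e.\ exactly $\ell$ factors of $R$ going into $\Si^\ell$. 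This indexing is the one point to check carefully, as follows. With $\Si^\ell$ built from layer $\ell-1$ and $\Si^0:=K^0$ deterministic, $\Si^\ell$ is a product of $\ell$ independent copies of $R$ (those coming from layers $0,\dots,\ell-1$).

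Finally, $C^\ell(\al,\al;\al,\al)=n\var(S^\ell)=n\big[(1+\frac5n)^\ell-1\big]$. As a cross-check, the same argument with $\vp(x)=x$ gives $\bE[G^4]-1=2$ and reproduces $2v_\ell+w_\ell=n[(1+\frac2n)^\ell-1]$ from Lemma \ref{lem:linear-explicit}. Expanding to first order gives $5\ell$, consistent with the recursion \eqref{eq:mainC} of Theorem \ref{thm:main}.

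There is no real analytic obstacle here. The key step is the factorization $S^{\ell+1}=S^\ell R^\ell$ with $R^\ell$ independent of $S^\ell$. This relies on exact conditional Gaussianity with a scalar covariance, which is available only for a single input, together with homogeneity of $\vp$. The main care needed is the off-by-one bookkeeping in the layer indexing of $\Si^\ell$.
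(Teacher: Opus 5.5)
Your proposal is correct and follows essentially the same route as the paper. Positive homogeneity gives the exact factorization $\Si^{\ell+1}=\Si^\ell\cdot\frac1n\sum_{j}\vp(G_j)^2$ with the sum independent of $\Si^\ell$, and the identities $\bE[\vp(G)^2]=1$, $\var[\vp(G)^2]=5$ then propagate the first two moments layer by layer; your explicit indexing ($\Si^0=K^0$ deterministic, so $\Si^\ell$ is a product of exactly $\ell$ independent copies of $R$) and your use of $\vp(cx)=c\,\vp(x)$ for $c\geq0$ rather than $\abs{c}$ are slightly more careful than the paper's write-up.
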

\begin{proof}
For notational convenience, we drop writing the dependence on $\alpha$.  Conditionally on layer $\ell-1$, the neurons of layer $\ell$ are i.i.d.\ $\mathcal{N}(0,\Si^\ell)$, and
since $\vp$ is positively homogeneous $\vp(cx)=|c|\vp(x)$, we observe that we can pull out the norm of the vector out of the ReLU to obtain $$\Si^{\ell+1}=\Si^\ell\cdot\frac1n\sum_{j=1}^n\vp(G_j)^2$$ with $G_j$ i.i.d.\ standard Gaussians independent
of $\Si^\ell$. As $\bE[\vp(G)^2]=1$ and $\var[\vp(G)^2]=\bE[\vp(G)^4]-1=5$, this gives $\bE[\Si^{\ell+1}]=\bE[\Si^\ell]$ and
$\bE\big[(\Si^{\ell+1})^2\big]=\left(1+\frac5n\right)\bE\big[(\Si^\ell)^2\big]$, and hence $n\var[\Si^\ell]=n\left[\left(1+\frac5n\right)^\ell-1\right]$.
\end{proof}

\begin{figure}[H]
    \centering
    \includegraphics[width=\linewidth]{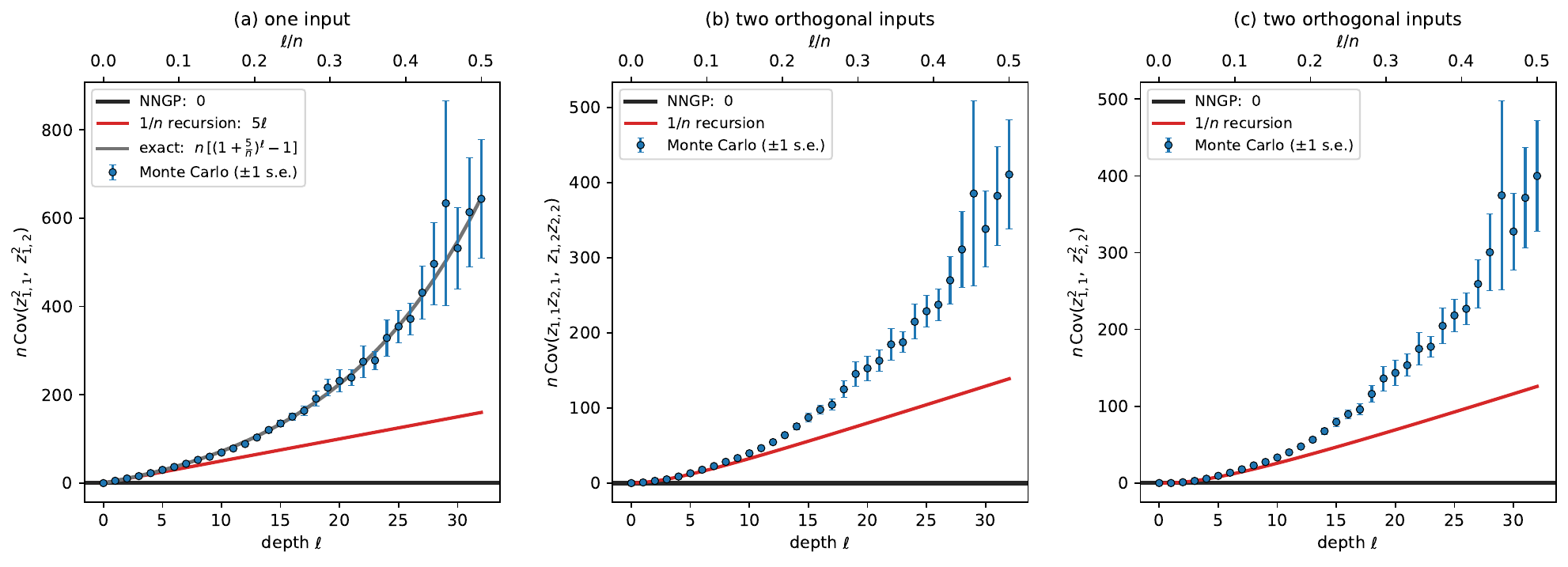}
    \caption{\footnotesize\textbf{ReLU network. Fourth cumulants.} Quantities related to the fourth cumulants of a ReLU network ($\vp(x)=\sqrt2\max\{x,0\}$, width
    $n=64$) for depths up to $\ell=32$, for two different neurons; $z_{\al,k}$ denotes the output of neuron $k$ on input $\al$.
    \textbf{(a)} One unit norm input: $n\cov\big(z_{1,1}^2,z_{1,2}^2\big)=C^\ell(1,1;1,1)$, which equals $n\big[(1+\frac5n)^\ell-1\big]$ exactly
    (Lemma \ref{lem:relu-one-input}) and $5\ell$ according to the $1/n$ recursion.
    \textbf{(b)}, \textbf{(c)} Two unit norm inputs that are orthogonal at layer $0$ ($K=I$):
    \textbf{(b)} $n\cov\big(z_{1,1}z_{2,1},\,z_{1,2}z_{2,2}\big)=C^\ell(1,2;1,2)$ and \textbf{(c)} $n\cov\big(z_{1,1}^2,z_{2,2}^2\big)=C^\ell(1,1;2,2)$. For
    several inputs no exact formula is available, and we show the $1/n$ recursion \eqref{eq:mainC}, evaluated numerically.
    Red: $1/n$ recursion; grey: exact (one input only); blue: Monte Carlo over $2^{15}$ networks, with $\pm1$ standard error
    computed from batch means. The Gaussian prediciton (NNGP) is $0$ in every panel. The $1/n$ recursion is accurate while
    $\ell/n$ is small, and underestimates the cumulants once $\ell/n$ is of order one.}
    \label{fig:relu-cumulants-vs-depth}
\end{figure}
\begin{figure}[H]
    \centering
    \includegraphics[width=\linewidth]{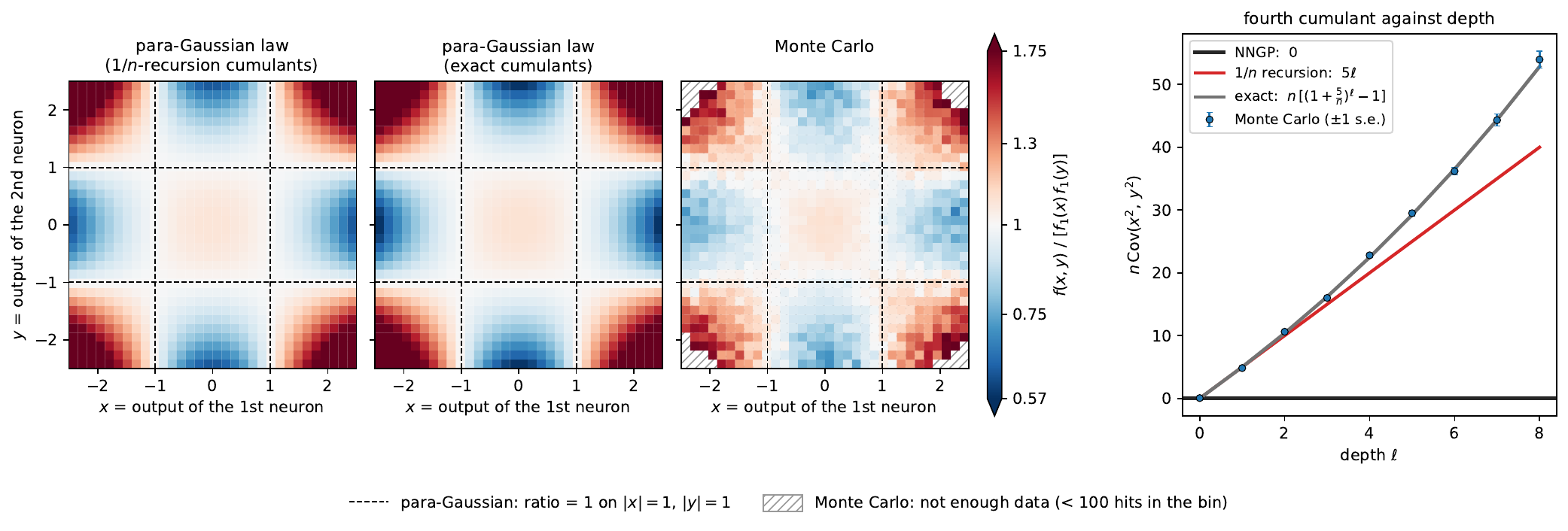}
    \caption{\footnotesize\textbf{ReLU network. Two neuron outputs, one input. Uncorrelated but not independent.} The distribution of two different
    neurons of a ReLU network ($\vp(x)=\sqrt2\max\{x,0\}$, width $n=64$) receiving the same unit norm input, i.e.\ $A=\{\alpha\}$ and we
    look at the joint distribution of the random variables $X=z^\ell_{\alpha,1}$, $Y=z^\ell_{\alpha,2}$. These random variables are
    uncorrelated, but not independent. \textbf{Left} three panels: the ratio $f(x,y)/\big[f_1(x)f_1(y)\big]$ of the joint density of the two
    neurons' outputs to the product of their marginal densities at depth $\ell=4$, which the NNGP predicts to be identically $1$. The
    para-Gaussian law predicts this as $1+\frac{\ka}{4}(x^2-1)(y^2-1)$, where $\ka=\ka(x,x,y,y)$ is the cross-neuron fourth cumulant. We
    show both the result $\ka=\frac{5\ell}{n}$ from the $1/n$-approximated recursion and the exact value
    $\ka=\big(1+\frac5n\big)^\ell-1$ from Lemma \ref{lem:relu-one-input}, next to Monte Carlo over $2^{21}$ neuron pairs. \textbf{Right}: the cumulant $n\cov(x^2,y^2)$ against depth, with the $1/n$ recursion $5\ell$,
    the exact value $n\big[\big(1+\frac5n\big)^\ell-1\big]$ and Monte Carlo ($\pm1$ standard error).}
    \label{fig:relu-two-neurons-one-input}
\end{figure}
\begin{figure}[H]
    \centering
    \includegraphics[width=0.85\linewidth]{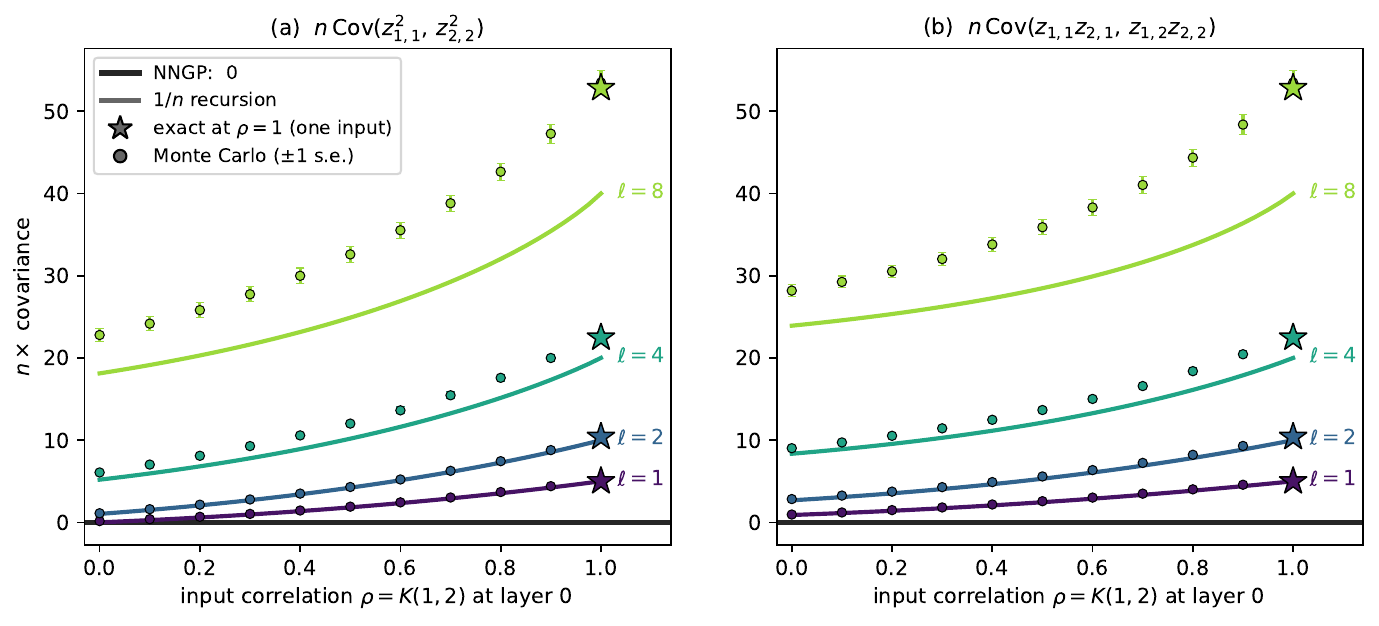}
    \caption{\footnotesize\textbf{ReLU network. Two neurons, two inputs. Dependence on the input correlation.} Cross-neuron fourth cumulants of a ReLU
    network ($\vp(x)=\sqrt2\max\{x,0\}$, width $n=64$) for two unit norm inputs with correlation $\rho=K^0(1,2)$ at layer $0$, against $\rho$
    for depths $\ell=1,2,4,8$. Here $z_{\al,k}$ denotes the output of neuron $k$ on input $\al$ where $\al\in\{1,2\}$ and $k\in\{1,2\}$.
    \textbf{(a)} $n\cov\big(z_{1,1}^2,z_{2,2}^2\big)=C^\ell(1,1;2,2)$, the dependence between the first neuron on input $1$ and the second neuron
    on input $2$.
    \textbf{(b)} $n\cov\big(z_{1,1}z_{2,1},\,z_{1,2}z_{2,2}\big)=C^\ell(1,2;1,2)$, the dependence between the products of each neuron's outputs
    on the two inputs.
    Solid lines: $1/n$ recursion of Theorem \ref{thm:main}, evaluated numerically, since for ReLU there is no exact formula with two
    distinct inputs. Stars represent the exact value $n\big[(1+\frac5n)^\ell-1\big]$ at $\rho=1$, where the two inputs coincide (Lemma
    \ref{lem:relu-one-input}). Monte Carlo over $2^{20}$ neuron pairs per value of $\rho$ ($\pm1$ standard error) is shown as cirlces. The
    Gaussian prediction (NNGP) predicts $0$ in all cases. Unlike the linear case, the cumulant in (a) is positive at $\rho=0$ already
    according to the $1/n$ recursion, since ReLU layers make inputs that are orthogonal at layer $0$ positively correlated. The $1/n$
    recursion underestimates the cumulants for large values of $\ell/n$.}
    \label{fig:relu-two-neurons-two-inputs}
\end{figure}

\begin{figure}[H]
    \centering
    \includegraphics[width=0.85\linewidth]{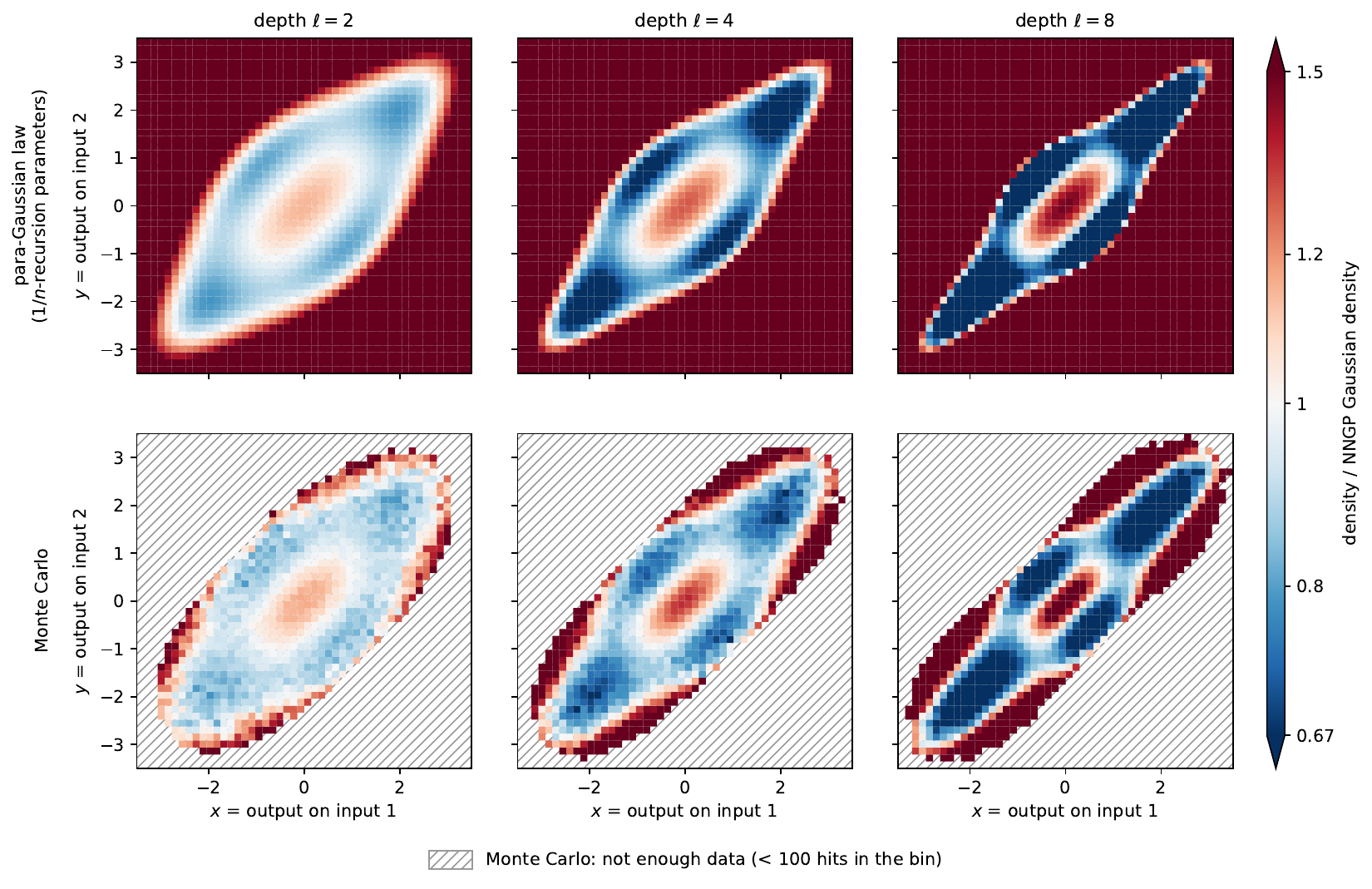}
    \caption{\footnotesize\textbf{ReLU network. One neuron, two inputs. The joint density as depth grows.} A representation of the joint density of the random vector
    $\big(z^\ell_{1,1},z^\ell_{2,1}\big)$, the outputs of the first neuron of layer $\ell$ on two unit norm inputs with correlation
    $K(1,2)=0.5$ at layer $0$, in a ReLU network ($\vp(x)=\sqrt2\max\{x,0\}$, width $n=64$), for depths $\ell=2,4,8$ (columns). Each
    panel shows the \emph{ratio} of a density to the NNGP Gaussian density of $\mathcal{N}(0,K^\ell)$, on a common colour scale. \textbf{Top row}: the
    para-Gaussian density of Definition \ref{def:PG}, with covariance $K^\ell+\frac1n\si^\ell$ from \eqref{eq:mainSigmaExpansion} and fourth
    cumulant tensor computed with $C^\ell$ from the $1/n$ recursion
    \eqref{eq:mainC}; all Gaussian expectations are evaluated numerically. Bottom row: Monte Carlo over $2^{21}$ samples per
    depth, with probabilities compared bin by bin; hatched bins contain fewer than $100$ samples. No exact formula is available for
    comparison in this case.}
    \label{fig:relu-one-neuron-two-inputs}
\end{figure}

\section{Proof of the Asymptotic Convergence, Theorem \ref{thm:TV-gauss-pg}}
\label{sec:para-proof}

Theorem \ref{thm:TV-gauss-pg} is about understanding the distribution of $Z^\ell = \left\{ z_{\alpha,j}^\ell \right\}_{\alpha\in A,\, j\in[m]}$
by studying the expected value $\mathbb{E}[f(Z^\ell)]$, where $f:\mathbb{R}^{A\times[m]}\to\mathbb{R}$ is some test function.
By Remark \ref{rem:cond-gauss}, the distribution of $Z^\ell$ is conditionally Gaussian, $Z^\ell\sim\mathcal{N}\left(0,I_m\otimes\Si^\ell\right)$,
with the random matrix $\Si^\ell$ defined in \eqref{eq:Sigma-def-intro}. The idea of the proof is to decompose $\Si^\ell$ into its mean plus
random fluctuations, and then Taylor series expand around the mean. In order to do this, we introduce the following notations.

\subsection{\texorpdfstring{$g_f$}{The interpolation function} and its derivatives}

\begin{defn}
For a given test function $f:\mathbb{R}^{A\times[m]}\to\mathbb{R}$, we introduce the auxiliary function $G_f$, defined
on positive semi-definite matrices $S$ indexed by $A\times[m]$ by
\begin{equation}
\label{def:G_f}
    G_f(S) := \mathbb{E}_{X\sim\mathcal{N}(0,S)}\left[ f(X)\right].
\end{equation}
We also introduce notation for the fluctuation of $\Si^\ell$ around its mean, namely
\begin{align}
    \bar{\Si}^\ell(\alpha,\beta) &:= \mathbb{E}\left[ \Si^\ell(\alpha,\beta) \right], \\
    \Delta^\ell(\alpha,\beta) &:= \Si^\ell(\alpha,\beta)- \bar{\Si}^\ell(\alpha,\beta),
\end{align}
so that $I_m\otimes\Si^\ell=I_m\otimes\bar{\Si}^\ell+I_m\otimes\Delta^\ell$. Finally, we define the random one-variable
function $g_f:[0,1]\to\mathbb{R}$ by
\begin{equation}
\label{def:g_f}
    g_f(t) := G_f\Big(I_m\otimes\left(\bar{\Si}^\ell + t \Delta^\ell\right) \Big),
\end{equation}
where we suppress the dependence of $g_f$ on $\ell$ in the notation.
\end{defn}

\begin{rem}
By conditional Gaussianity (Remark \ref{rem:cond-gauss}), $g_f(1)=G_f(I_m\otimes\Si^\ell)=\mathbb{E}\left[f(Z^\ell)\,\middle|\,\mathcal{F}^{\ell-1}\right]$, where $\mathcal F^{\ell-1}$ is the $\sigma$-algebra generated by layer $\ell-1$, and hence
we recover the quantity of interest as
\[
\mathbb{E}\left[g_f(1)\right]=\mathbb{E}\left[G_f\left(I_m\otimes\Si^\ell\right)\right]=\mathbb{E}\left[f(Z^\ell)\right].
\]
On the other hand, $g_f(0)=G_f(I_m\otimes\bar{\Si}^\ell)$ uses only the mean $\bar{\Si}^\ell$ (which is a non-random matrix!), and is
the purely Gaussian approximation. The function $g_f$ thus interpolates between these two cases: $t=0$ gives the Gaussian
approximation, and $t=1$ gives $G_f\left(I_m\otimes\Si^\ell\right)$. The proof of Theorem \ref{thm:TV-gauss-pg} compares the two by
Taylor expanding $g_f$ around $t=0$ to approximate its value at $t=1$. To do this, we first prove some lemmas about $g_f$ and its derivatives.
\end{rem}

\begin{lem}
\label{lem:G-derivatives}
Let $S_0$ be a positive definite matrix and $D$ a symmetric matrix, both indexed by $A\times[m]$. Let $r\in\mathbb{N}$, and
suppose $f$ is $2r$ times differentiable with all derivatives of order at most $2r$ polynomially bounded. Then for every $t$
such that $S_0+tD$ is positive definite,
\[
\frac{d^r}{dt^r} G_f(S_0+tD) = \frac{1}{2^r} \sum_{a_1,b_1,\ldots,a_r,b_r \in A\times[m]}
\Big( \prod_{i=1}^r D_{a_i b_i} \Big)\, \mathbb{E}_{X\sim\mathcal{N}(0,S_0+tD)}\Big[ \big(\partial_{a_1}\partial_{b_1} \cdots
\partial_{a_r}\partial_{b_r} f\big)(X)\Big].
\]
\end{lem}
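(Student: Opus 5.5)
The plan is to prove the case $r=1$ and then iterate. For $r=1$, the key input is the heat-equation identity for the Gaussian density. Write $\varphi_S(x)$ for the density of $\mathcal{N}(0,S)$, with $S$ positive definite. Its Fourier transform is $\exp\set{-\frac12 u^T S u}$. Differentiating in the entry $S_{ab}$, treating $S$ as a symmetric matrix so that the perturbation $D$ enters through $\sum_{a,b}D_{ab}\,\partial/\partial S_{ab}$, gives
\[
\frac{d}{dt}\varphi_{S_0+tD}(x)=\frac12\sum_{a,b}D_{ab}\,\partial_a\partial_b\varphi_{S_0+tD}(x).
\]
To check this, note that on the Fourier side $\frac{d}{dt}e^{-\frac12 u^T(S_0+tD)u}=-\frac12 u^TDu\,e^{-\frac12 u^T(S_0+tD)u}$, and $-u_au_b$ is exactly the multiplier of $\partial_a\partial_b$. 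This identity can also be read off from \eqref{eq:H_is_iu_to_the_n} with $|\vec k|=2$.

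Next, I would differentiate $G_f(S_0+tD)=\int f(x)\varphi_{S_0+tD}(x)\,dx$ under the integral sign. The justification is that on a compact neighbourhood of $t$ where $S_0+tD$ stays positive definite, the smallest eigenvalue is bounded below uniformly. Hence $\varphi_{S}$ and its $t$-derivative, which is a quadratic polynomial in $x$ times $\varphi_S$, are dominated by $C(1+|x|^2)e^{-c|x|^2}$. Since $f$ is polynomially bounded, dominated convergence applies. I would then integrate by parts twice to move $\partial_a\partial_b$ onto $f$. The boundary terms vanish because the Gaussian tail beats the polynomial growth of $f$ and $\partial f$. This yields
\[
\frac{d}{dt}G_f(S_0+tD)=\frac12\sum_{a,b}D_{ab}\,G_{\partial_a\partial_b f}(S_0+tD).
\]

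For general $r$, I would induct. Apply the $r=1$ formula to $\partial_{a_1}\partial_{b_1}\cdots\partial_{a_{r-1}}\partial_{b_{r-1}}f$ in place of $f$. This function is twice differentiable with polynomially bounded derivatives precisely because $f$ has $2r$ polynomially bounded derivatives. Each application contributes a factor $\frac12\sum D_{a_ib_i}$, and the coefficients $D_{a_ib_i}$ do not depend on $t$, so the product formula follows.

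The main obstacle is regularity. Assumption \ref{ass:nonlinearity} only gives differentiability almost everywhere, so the integration by parts must be done with $f$ weakly differentiable, with the derivatives understood as locally integrable and polynomially bounded. If $f$ is only a bounded test function, I would instead keep the derivatives on the density, writing the result via Hermite polynomials as in Lemma \ref{lem:gaussian-ibp}. For the lemma as stated, the smooth polynomially bounded case suffices, with a mollification argument available if needed.
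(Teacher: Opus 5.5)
Your argument is correct, but it takes a different route from the paper. The paper first uses the smoothness of $G_f$ on positive definite matrices \cite[Lemma 13]{Hanin2024} to make the directional derivative linear in $D$. It then splits $D=D_+-D_-$ into positive semi-definite parts. For positive semi-definite $D$, it realizes $\mathcal{N}(0,S_0+tD)$ as the law of the diffusion $X_t=X_0+D^{1/2}W_t$ and reads off $\frac{d}{dt}G_f$ from the It\^o drift $\frac12\sum_{a,b}D_{ab}\partial_a\partial_b f(X_t)$. The induction on $r$ is the same as yours.

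You instead work on the density side. The heat equation $\frac{d}{dt}\varphi_{S_0+tD}=\frac12\sum_{a,b}D_{ab}\partial_a\partial_b\varphi_{S_0+tD}$, checked via the Fourier transform, holds for every symmetric $D$. So you need neither the $D_\pm$ decomposition nor the external smoothness lemma, and your dominated-convergence step proves differentiability of $t\mapsto G_f(S_0+tD)$ rather than importing it. Integration by parts also only needs the derivatives of $f$ to exist and be polynomially bounded. It\^o's formula is usually stated for $C^2$ functions and requires checking that the stochastic integral has mean zero. Your version therefore fits the lemma's hypotheses more directly. The paper's version buys brevity, given the cited lemma, and the probabilistic picture of the interpolation as Brownian motion in the direction $D$; the paper even notes that ``simply the heat equation'' would do.

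One small imprecision: the uniform envelope $e^{-c|x|^2}$ on a compact $t$-interval comes from an upper bound on $\lambda_{\max}(S_0+tD)$. The lower bound on $\lambda_{\min}$ only controls the prefactors $\det(S)^{-1/2}$ and $S^{-1}$. Both bounds hold by continuity, so the argument is unaffected.
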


\begin{proof}
Since $G_f$ is a smooth function of $S$ on the positive definite matrices \cite[Lemma 13]{Hanin2024}, its directional
derivative $\frac{d}{dt}\big|_{t=0}G_f(S+tD)$ is linear in $D$. Writing $D=D_+-D_-$ with $D_\pm$ positive semi-definite, it
therefore suffices to prove the case $r=1$ when $D$ is positive semi-definite.

In that case, notice that the distribution of $X\sim\mathcal{N}(0,S_0+tD)$ can be realized as a diffusion process
$X_t$ starting from $X_0 \sim \mathcal{N}(0,S_0)$ with \emph{constant} infinitesimal drift and volatility,
$\d X_t = 0\, \d t + D^{1/2}\, \d W_t$, where $W_t$ is a standard Brownian motion independent of $X_0$. Indeed, $X_t$ is
Gaussian with covariance $S_0+tD$. By It\^o's lemma (or simply the heat equation in this case), the drift of the process
$Y_t = f(X_t)$ is $\frac{1}{2} \sum_{a,b \in A\times[m]} D_{ab} \left(\partial_a \partial_b f \right)(X_t)$, and the
martingale part has mean zero since $f$ and its derivatives are polynomially bounded. Taking expected values and integrating
establishes that
$$G_f(S_0+tD)- G_f(S_0)= \mathbb{E}[Y_t]-\mathbb{E}[Y_0]=\intop_0^t \frac{1}{2} \sum_{a,b \in A\times[m]} D_{ab}\,
\mathbb{E}\Big[\left(\partial_a \partial_b f\right)(X_s)\Big] \d s,$$
whence it follows by the fundamental theorem of calculus that
$\frac{d}{dt}G_f(S_0+tD) = \frac12\sum_{a,b \in A\times[m]} D_{ab}\, \mathbb{E}\Big[\left(\partial_a \partial_b f\right)(X_t)\Big]$.

This establishes the result for the base case $r=1$, and shows the effect of a single derivative in $t$. The formula for
general $r$ follows by induction, applying the same argument to compute the derivative
$\frac{d}{dt}\mathbb{E}\Big[\left(\partial_{a_1} \partial_{b_1} \cdots \partial_{a_{r-1}} \partial_{b_{r-1}} f\right)(X_t)\Big]$.
\end{proof}

The following formula can be found as Theorem 7.6 in \cite{NicaOrtmann2026}:

\begin{lem}[Gaussian integration by parts with Hermites]
\label{lem:gaussian-ibp}
Let $V$ be a positive definite matrix indexed by $A\times[m]$, let $X\sim\mathcal{N}(0,V)$, and suppose $f$ is $k$ times
differentiable with polynomially bounded derivatives. Then for all $a_1,\ldots,a_k\in A\times[m]$,
\[
\mathbb{E}\Big[\big(\partial_{a_1}\cdots\partial_{a_k}f\big)(X)\Big]=\mathbb{E}\Big[f(X)\,H_{a_1\cdots a_k}(X;V)\Big],
\]
where we use the index-list notation for the Hermite polynomials $H_{a_1\cdots a_k}(\vec x;\Si):=H_{\vec e_{a_1}+\cdots+\vec e_{a_k}}(\vec x;\Si)$
\end{lem}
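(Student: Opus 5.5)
The plan is to prove the identity by repeated one-dimensional integration by parts against the Gaussian density, using the defining relation \eqref{eq:H_is_iu_to_the_n}. It says that $H_{\vec k}(\vec x;V)\,\vp_V(\vec x)=(-1)^{|\vec k|}\partial_{\vec k}\vp_V(\vec x)$. In index-list notation this reads $H_{a_1\cdots a_k}(\vec x;V)\vp_V(\vec x)=(-1)^k\partial_{a_1}\cdots\partial_{a_k}\vp_V(\vec x)$, since mixed partials commute and the multi-index $\vec e_{a_1}+\cdots+\vec e_{a_k}$ records exactly how many times each coordinate is differentiated. The claim is then
\[
\int_{\bR^{A\times[m]}} \big(\partial_{a_1}\cdots\partial_{a_k}f\big)(\vec x)\,\vp_V(\vec x)\,\d\vec x=(-1)^k\int f(\vec x)\,\partial_{a_1}\cdots\partial_{a_k}\vp_V(\vec x)\,\d\vec x ,
\]
which is $k$-fold integration by parts.

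The steps, in order, are as follows. First, I would note that $V$ is positive definite, so $\vp_V$ is a smooth density. Every derivative of $\vp_V$ equals a polynomial times $\vp_V$, and therefore decays like $e^{-c|\vec x|^2}$ for some $c>0$ (take $c$ to be half the smallest eigenvalue of $V^{-1}$).

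Second, I would induct on $k$. Let $F_j:=\partial_{a_{j+1}}\cdots\partial_{a_k}f$ and $\psi_j:=\partial_{a_1}\cdots\partial_{a_j}\vp_V$. I would integrate $\int (\partial_{a_j}F_j)\,\psi_{j-1}\,\d\vec x$ by parts in the single coordinate $a_j$, using Fubini to reduce to a one-dimensional integral along lines parallel to $\vec e_{a_j}$. This gives $-\int F_j\,\psi_j\,\d\vec x$. The boundary terms $[F_j\psi_{j-1}]_{x_{a_j}=-\infty}^{+\infty}$ vanish because $F_j$ is polynomially bounded by hypothesis while $\psi_{j-1}$ has Gaussian decay.

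Third, after $k$ steps I would obtain $(-1)^k\int f\,\partial_{a_1}\cdots\partial_{a_k}\vp_V$. Substituting \eqref{eq:H_is_iu_to_the_n} then gives $\bE[f(X)H_{a_1\cdots a_k}(X;V)]$. All integrals along the way are absolutely convergent for the same reason, a polynomial times a Gaussian, which justifies Fubini.

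The main obstacle is regularity. The hypothesis only says $f$ is $k$ times differentiable, possibly almost everywhere, as with ReLU-type functions under Assumption \ref{ass:nonlinearity}. One-dimensional integration by parts needs each $F_j$ to be absolutely continuous along coordinate lines. I would first carry out the argument for smooth $f$ and then extend it by mollification: set $f_\epsilon=f*\rho_\epsilon$, so that derivatives commute with convolution, and pass to the limit by dominated convergence using the uniform polynomial bounds. Alternatively, I would simply interpret $\partial$ as a weak derivative, in which case the identity is exactly the definition of the distributional derivative tested against the Schwartz function $\vp_V$. I would also mention that the same identity can be read off from the Fourier representation in \eqref{eq:H_is_iu_to_the_n}, where each derivative on $f$ corresponds to a factor $\imath u_{a}$.
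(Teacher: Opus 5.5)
Your proof is correct and is the paper's own argument: the paper quotes the identity from \cite[Theorem 7.6]{NicaOrtmann2026}, and the proof behind it is exactly this $k$-fold integration by parts against $\vp_V$, with boundary terms killed by Gaussian decay against polynomial growth, followed by $\partial_{a_1}\cdots\partial_{a_k}\vp_V=(-1)^kH_{a_1\cdots a_k}(\cdot\,;V)\,\vp_V$. One correction to your regularity paragraph: the identity is false if $\partial$ means an almost-everywhere pointwise derivative (for $f(x)=\max\{x,0\}$, $k=2$, $V=1$ the left side is $0$ while $\mathbb{E}[f(X)(X^2-1)]=1/\sqrt{2\pi}$), so mollification cannot rescue that reading, since derivatives commute with convolution only when they are weak derivatives; with everywhere-defined classical derivatives you need no mollification at all (a locally bounded derivative makes each $F_j$ locally Lipschitz along lines, so the fundamental theorem of calculus applies), and for ReLU-type $f$ the weak-derivative reading you mention last is the right one and is the one the paper uses.
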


% \begin{proof}
% Write $\vp_V$ for the density of $\mathcal{N}(0,V)$, so that the left-hand side is
% $\int(\partial_{a_1}\cdots\partial_{a_k}f)(x)\,\vp_V(x)\,\d x$. Integrating by parts $k$ times moves each derivative from $f$ onto
% $\vp_V$, at the cost of a factor of $-1$ each time; the boundary terms vanish because $\vp_V$ and all its derivatives decay faster
% than any polynomial, while $f$ and its derivatives are polynomially bounded. This gives
% \[
% \mathbb{E}\Big[\big(\partial_{a_1}\cdots\partial_{a_k}f\big)(X)\Big]=(-1)^k\int f(x)\,\partial_{a_1}\cdots\partial_{a_k}\vp_V(x)\,\d x.
% \]
% Finally, by the derivative formula for the Hermite polynomials \cite[Corollary 7.5]{NicaOrtmann2026},
% $\partial_{a_1}\cdots\partial_{a_k}\vp_V=(-1)^kH_{a_1\cdots a_k}(\cdot\,;V)\,\vp_V$, and the result follows.
% \end{proof}

\begin{rem}
\label{rem:G-derivatives-hermite}
Combining Lemma \ref{lem:gaussian-ibp} with Lemma \ref{lem:G-derivatives}, the expectations of derivatives of $f$ in the formula for
$\frac{d^r}{dt^r}G_f(S_0+tD)$ can be replaced by expectations of $f$ against Hermite polynomials:
\[
\frac{d^r}{dt^r}G_f(S_0+tD)=\frac1{2^r}\sum_{a_1,b_1,\ldots,a_r,b_r\in A\times[m]}\Big(\prod_{i=1}^rD_{a_ib_i}\Big)\,
\mathbb{E}_{X\sim\mathcal{N}(0,S_0+tD)}\Big[f(X)\,H_{a_1b_1\cdots a_rb_r}\big(X;S_0+tD\big)\Big].
\]
In this form, no derivatives of $f$ appear: all the derivatives act on the Gaussian density, where they become Hermite
polynomials. This is what allows us to identify the appearace of the para-Gaussian law, and bound the error estimates below to depend on $f$ only through $\norm{f}_\infty$.
\end{rem}

\subsubsection{ \texorpdfstring{Evaluating $\mathbb{E}[g_f'(0)]$ and $\mathbb{E}[g_f''(0)]$}{Expectation of the derivatives}}

\begin{lem}
\label{lem:g-prime-mean-zero}
Suppose $\bar{\Si}^\ell$ is positive definite, $\mathbb{E}\norm{\Si^\ell}<\infty$, and $f$ is twice differentiable with
polynomially bounded derivatives. Then $$\mathbb{E}\left[g_f^\prime(0)\right]=0.$$
\end{lem}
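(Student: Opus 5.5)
The proof uses Lemma \ref{lem:G-derivatives} with $r=1$, $S_0=I_m\otimes\bar\Si^\ell$ and $D=I_m\otimes\Delta^\ell$, evaluated at $t=0$. Since $S_0$ is positive definite by hypothesis, the lemma applies at $t=0$ and gives, pathwise in the randomness of layer $\ell-1$,
\[
g_f'(0)=\frac12\sum_{a,b\in A\times[m]}\big(I_m\otimes\Delta^\ell\big)_{ab}\,\mathbb{E}_{X\sim\mathcal{N}(0,I_m\otimes\bar\Si^\ell)}\big[(\partial_a\partial_b f)(X)\big].
\]
The Gaussian expectation on the right is a \emph{deterministic} finite number $c_{ab}$. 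It is finite because $f$'s second derivatives are polynomially bounded and Gaussians have all moments. It does not depend on $\Delta^\ell$, since the covariance used is the non-random mean $\bar\Si^\ell$. So $g_f'(0)$ is a finite deterministic linear combination of the entries of $\Delta^\ell$.

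The next step is to take expectations and use linearity: $\mathbb{E}[g_f'(0)]=\frac12\sum_{a,b}c_{ab}\,\mathbb{E}[(I_m\otimes\Delta^\ell)_{ab}]$. Each entry of $I_m\otimes\Delta^\ell$ is either $0$ or $\delta_{jk}\Delta^\ell(\alpha,\beta)$. The hypothesis $\mathbb{E}\norm{\Si^\ell}<\infty$ makes $\Delta^\ell$ integrable, so these expectations are well defined and linearity is justified. Since $\mathbb{E}[\Delta^\ell(\alpha,\beta)]=\mathbb{E}[\Si^\ell(\alpha,\beta)]-\bar\Si^\ell(\alpha,\beta)=0$ by definition of $\bar\Si^\ell$, every term vanishes.

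The only delicate point is checking that Lemma \ref{lem:G-derivatives} may be applied for a \emph{random} direction $D$ that need not be positive semi-definite, and that the derivative at the endpoint $t=0$ makes sense. The lemma already handles indefinite $D$ through linearity of the directional derivative. Positivity of $S_0$ makes $S_0+tD$ positive definite for $|t|$ small, for each realization, so the two-sided derivative at $0$ exists pathwise. No exchange of expectation and differentiation is needed: we differentiate pathwise first and only then take the expectation of an explicit linear expression in $\Delta^\ell$.
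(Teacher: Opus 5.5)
Your proof is correct and follows the paper's own argument essentially verbatim: apply Lemma \ref{lem:G-derivatives} with $r=1$, $S_0=I_m\otimes\bar\Si^\ell$ and $D=I_m\otimes\Delta^\ell$ to write $g_f'(0)$ as a deterministic linear combination of the entries of $\Delta^\ell$, then use $\mathbb{E}[\Delta^\ell]=0$. Your extra remarks are sound and spell out what the paper leaves implicit. These are the integrability of $\Delta^\ell$ under $\mathbb{E}\norm{\Si^\ell}<\infty$, and the fact that you differentiate pathwise before taking expectations.
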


\begin{proof}
Since $g_f$ is defined from the matrix $S_0 + tD$ with $S_0=I_m\otimes\bar{\Si}^\ell$ and $D=I_m\otimes\Delta^\ell$, we have by Lemma \ref{lem:G-derivatives} with $r=1$ that
\[
g_f^\prime(0)=\frac12\sum_{a,b\in A\times[m]}\left(I_m\otimes\Delta^\ell\right)_{ab}\;c_{ab},\qquad
c_{ab}:=\mathbb{E}_{X\sim\mathcal{N}\left(0,I_m\otimes\bar{\Si}^\ell\right)}\big[(\partial_a\partial_bf)(X)\big].
\]
The coefficients $c_{ab}$ are finite and deterministic, since they depend only on the mean $\bar{\Si}^\ell$ and not on the
fluctuation $\Delta^\ell$. Hence $g_f^\prime(0)$ is a linear function of the entries of $\Delta^\ell$, and taking expectations
gives $\mathbb{E}\left[g_f^\prime(0)\right]=\frac12\sum_{a,b}\mathbb{E}\left[\left(I_m\otimes\Delta^\ell\right)_{ab}\right]\,c_{ab}=0$,
since $\mathbb{E}[\Delta^\ell]=0$ by its definition.
\end{proof}
\begin{lem}
\label{lem:g-second-deriv}
Suppose $\bar{\Si}^\ell$ is positive definite, $\mathbb{E}\norm{\Si^\ell}^2<\infty$, and $f$ is four times differentiable with
polynomially bounded derivatives. Write $X\sim\mathcal{N}\left(0,I_m\otimes\bar{\Si}^\ell\right)$ for the Gaussian approximation. Then
\[
\frac12\,\mathbb{E}\left[g_f^{\prime\prime}(0)\right]
=\frac18\sum_{i,k\in[m]}\ \sum_{\alpha,\beta,\gamma,\delta\in A}
\cov\left(\Si^\ell(\alpha,\beta),\Si^\ell(\gamma,\delta)\right)\,
\mathbb{E}\Big[\big(\partial_{(\alpha,i)}\partial_{(\beta,i)}\partial_{(\gamma,k)}\partial_{(\delta,k)}f\big)(X)\Big].
\]
Equivalently, $\frac12\,\mathbb{E}\left[g_f^{\prime\prime}(0)\right]=\frac18\sum_{a,b,c,e\in A\times[m]}T^\ell_{abce}\,
\mathbb{E}\big[(\partial_a\partial_b\partial_c\partial_ef)(X)\big]$, where
\begin{equation}
\label{eq:T-def}
T^\ell_{abce}:=\mathbb{E}\left[\left(I_m\otimes\Delta^\ell\right)_{ab}\left(I_m\otimes\Delta^\ell\right)_{ce}\right]
=\de_{ij}\,\de_{kl}\,\cov\left(\Si^\ell(\alpha,\beta),\Si^\ell(\gamma,\delta)\right)
\end{equation}
for $a=(\alpha,i)$, $b=(\beta,j)$, $c=(\gamma,k)$, $e=(\delta,l)$.
\end{lem}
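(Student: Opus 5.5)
The plan is to mirror the proof of Lemma \ref{lem:g-prime-mean-zero}, going one order further. Set $S_0=I_m\otimes\bar\Si^\ell$ and $D=I_m\otimes\Delta^\ell$, so that $g_f(t)=G_f(S_0+tD)$. Apply Lemma \ref{lem:G-derivatives} with $r=2$ at $t=0$; this is valid because $S_0$ is positive definite and $f$ is four times differentiable with polynomially bounded derivatives. It gives
\[
g_f''(0)=\frac14\sum_{a,b,c,e\in A\times[m]} D_{ab}D_{ce}\; c_{abce},\qquad c_{abce}:=\mathbb{E}_{X\sim\mathcal N(0,S_0)}\big[(\partial_a\partial_b\partial_c\partial_e f)(X)\big].
\]
The key point, as in the first-order case, is that the coefficients $c_{abce}$ are deterministic and finite. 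They depend only on $\bar\Si^\ell$, and they are finite because polynomially bounded functions are integrable against a nondegenerate Gaussian. So $g_f''(0)$ is a quadratic form in the entries of $\Delta^\ell$ with constant coefficients.

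Next, take expectations. The sum is finite and $\mathbb{E}\norm{\Si^\ell}^2<\infty$, which gives $\mathbb{E}|D_{ab}D_{ce}|<\infty$ by Cauchy--Schwarz. Expectation therefore passes through the sum, yielding $\frac12\mathbb{E}[g_f''(0)]=\frac18\sum T^\ell_{abce}c_{abce}$ with $T^\ell_{abce}=\mathbb{E}[D_{ab}D_{ce}]$. This is exactly the ``equivalently'' form of the statement.

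It remains to identify $T^\ell$. Write $a=(\alpha,i)$, $b=(\beta,j)$, $c=(\gamma,k)$, $e=(\delta,l)$. The Kronecker structure gives $(I_m\otimes\Delta^\ell)_{ab}=\de_{ij}\Delta^\ell(\alpha,\beta)$. Since $\Delta^\ell$ is centred, $\mathbb{E}[\Delta^\ell(\alpha,\beta)\Delta^\ell(\gamma,\delta)]=\cov(\Si^\ell(\alpha,\beta),\Si^\ell(\gamma,\delta))$, which establishes \eqref{eq:T-def}. Substituting this and collapsing the deltas, $j=i$ and $l=k$, leaves the double sum over $i,k\in[m]$ and $\alpha,\beta,\gamma,\delta\in A$ displayed in the first formula.

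I do not expect a real obstacle. The only point needing care is the justification of Lemma \ref{lem:G-derivatives} at $r=2$, namely differentiating under the expectation. This is taken care of by that lemma's hypotheses, since fourth derivatives of $f$ are polynomially bounded, so no uniformity in $t$ is needed when evaluating at $t=0$ alone.
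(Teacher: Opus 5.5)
Your proposal is correct and follows the paper's proof essentially step for step. You apply Lemma \ref{lem:G-derivatives} with $r=2$ to $S_0=I_m\otimes\bar\Si^\ell$ and $D=I_m\otimes\Delta^\ell$, observe that the coefficients are deterministic, take expectations to obtain $T^\ell$, and collapse the Kronecker deltas; your explicit Cauchy--Schwarz justification for moving the expectation through the finite sum is a small detail the paper leaves implicit.
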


\begin{proof}
Since $g_f$ is defined from the matrix $S_0+tD$ with $S_0=I_m\otimes\bar{\Si}^\ell$ and $D=I_m\otimes\Delta^\ell$, we have by
Lemma \ref{lem:G-derivatives} with $r=2$ that
\[
g_f^{\prime\prime}(0)=\frac14\sum_{a,b,c,e\in A\times[m]}\left(I_m\otimes\Delta^\ell\right)_{ab}\left(I_m\otimes\Delta^\ell\right)_{ce}\;c_{abce},
\qquad c_{abce}:=\mathbb{E}\big[(\partial_a\partial_b\partial_c\partial_ef)(X)\big].
\]
By the assumptions on $\Sigma$, the coefficients $c_{abce}$ are finite and deterministic, so
$g_f^{\prime\prime}(0)$ is a quadratic function of the entries of $\Delta^\ell$, and its expectation depends only on the second
moments of $\Delta^\ell$. Taking expectations gives the second formula, with $T^\ell$ as in \eqref{eq:T-def}. To evaluate
$T^\ell$, note that $\left(I_m\otimes\Delta^\ell\right)_{(\alpha,i),(\beta,j)}=\de_{ij}\Delta^\ell(\alpha,\beta)$, and that
$\mathbb{E}\left[\Delta^\ell(\alpha,\beta)\Delta^\ell(\gamma,\delta)\right]=\cov\left(\Si^\ell(\alpha,\beta),\Si^\ell(\gamma,\delta)\right)$
since $\mathbb{E}[\Delta^\ell]=0$. Using the Kronecker deltas to collapse the sums over $j$ and $l$ gives the desired result.
\end{proof}

\subsection{Pair symmetries and alternate form of the para-Gaussian law}
\begin{defn}
\label{def:pair-symmetric}
Let $I$ be a finite index set and let $T=(T_{abcd})_{a,b,c,d\in I}$ be a tensor with four indices. For a permutation
$\pi\in S_4$ of the four index positions, we write $T_{\pi(abcd)}$ for the entry of $T$ with its indices rearranged by
$\pi$; for example, if $\pi$ swaps the second and third positions, then $T_{\pi(abcd)}=T_{acbd}$.
\begin{itemize}
\item We say $T$ is {\bf \emph{fully symmetric}} if $T_{\pi(abcd)}=T_{abcd}$ for every $\pi\in S_4$ and all $a,b,c,d\in I$,
that is, if $T$ is unchanged by any rearrangement of its indices.
\item We say $T$ is {\bf \emph{pair-symmetric}} if
\[
T_{abcd}=T_{bacd}=T_{abdc}=T_{cdab}\qquad\text{for all }a,b,c,d\in I,
\]
that is, $T$ is symmetric within its first pair of indices, within its second pair of indices, and under exchanging the two
pairs.
\end{itemize}

\end{defn}

\begin{rem}
Examples of fully symmetric tensors are the
    fourth cumulants $\ka(Y_a,Y_b,Y_c,Y_d)$ of a random vector and the 4th order Hermite polynomials where the indices tell us which variables we are using\footnote{Do not confuse this notation with the other convention for Hermite polynomials of \emph{how many times} each variable appears; that convention is not symmetric!} $H_{abcd}(x;\Si):=H_{\vec e_a+\vec e_b+\vec e_c+\vec e_d}(x;\Si).$
    
    The prototypical example of a pair-symmetric tensor in this work is $T_{abcd}=\cov\left(Y_aY_b,\,Y_cY_d\right)$ for a random vector
$(Y_a)_{a\in I}$, such as the quantity $C(\alpha,\beta;\gamma,\delta)$ from \eqref{eq:defCell}. The tensor $T^\ell$ in \eqref{eq:T-def} is pair-symmetric, since $\Delta^\ell$ is a symmetric matrix and
covariance is symmetric in its two arguments. The three terms of the pairing sum correspond to the three ways of grouping
four indices into two pairs: $\{ab\,|\,cd\}$, $\{ac\,|\,bd\}$ and $\{ad\,|\,bc\}$. 
\end{rem}

\begin{defn}
Given a \emph{pair-symmetric tensor} $T$, we define the {\bf \emph{pairing sum}} of $T$ to be the tensor $\kappa$ given by
\[
\ka_{abcd}:=T_{abcd}+T_{acbd}+T_{adbc}.
\]
\end{defn}

\begin{lem}
\label{lem:pairing-sum}
Let $T$ be pair-symmetric with pairing sum $\ka$. Then:
\begin{enumerate}
\item[(a)] $\ka$ is fully symmetric, and $\frac13\ka$ is the full symmetrization of $T$, namely:
$$\displaystyle\frac1{4!}\sum_{\pi\in S_4}T_{\pi(abcd)}=\frac13\,\ka_{abcd}.$$
\item[(b)] For every fully symmetric tensor $H$ on $I$,
$$\displaystyle\frac1{4!}\sum_{a,b,c,d\in I}\ka_{abcd}H_{abcd}=\frac18\sum_{a,b,c,d\in I}T_{abcd}H_{abcd}.$$
\end{enumerate}
\end{lem}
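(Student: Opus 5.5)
For part (a), the plan is to compute the orbit of $T_{abcd}$ under $S_4$ explicitly. Pair-symmetry says $T$ is invariant under the subgroup $D\subset S_4$ generated by the transpositions $(12)$, $(34)$ and the double swap $(13)(24)$. This subgroup is the dihedral group of order $8$, the stabilizer of the pairing $\{12\,|\,34\}$. Hence $T_{\pi(abcd)}$ depends only on the coset $\pi D$, equivalently on which pairing of positions $\pi$ produces.

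There are $24/8=3$ cosets. They correspond to the three pairings $\{ab|cd\}$, $\{ac|bd\}$, $\{ad|bc\}$, with representative values $T_{abcd}$, $T_{acbd}$, $T_{adbc}$. I will verify this concretely by listing, for each of the three pairings, the $8$ permutations realizing it and using the three symmetries $T_{abcd}=T_{bacd}=T_{abdc}=T_{cdab}$ to reduce each to the representative. Summing over $S_4$ then gives $8(T_{abcd}+T_{acbd}+T_{adbc})=8\ka_{abcd}$, and dividing by $4!=24$ yields $\frac13\ka_{abcd}$. Full symmetry of $\ka$ follows immediately, since the full symmetrization of any tensor is fully symmetric.

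For part (b), I will use the full symmetry of $H$ to move permutations onto $T$. For each $\pi\in S_4$, relabeling the summation variables gives $\sum_{a,b,c,d}T_{\pi(abcd)}H_{abcd}=\sum_{a,b,c,d}T_{abcd}H_{\pi^{-1}(abcd)}=\sum T_{abcd}H_{abcd}$. This holds because the sum runs over all of $I^4$, so the relabeling is a bijection, and $H$ is fully symmetric. Averaging over $\pi$ gives $\sum T H=\sum \big(\frac1{4!}\sum_\pi T_{\pi(\cdot)}\big)H=\frac13\sum \ka H$ by part (a). Therefore $\frac1{4!}\sum\ka H=\frac{1}{24}\cdot 3\sum TH=\frac18\sum TH$.

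The main obstacle is the bookkeeping in part (a): one must check that the $D$-orbit under pair-symmetry is exactly the $8$ permutations preserving the pairing, with no further identifications needed. Even if extra coincidences occurred, the counting stays valid because the sum is over permutations rather than distinct values. I would do this by explicit enumeration for the pairing $\{ab|cd\}$ (the orderings $abcd, bacd, abdc, badc, cdab, dcab, cdba, dcba$) and note that the other two pairings follow by the same argument after renaming the indices.
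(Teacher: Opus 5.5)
Your proposal is correct and follows the paper's proof essentially verbatim. In both, the order-$8$ stabilizer of the pairing $\{12\,|\,34\}$ splits $S_4$ into three cosets, which gives $\sum_{\pi\in S_4}T_{\pi(abcd)}=8\ka_{abcd}$; part (b) then follows by relabelling the summation indices using the full symmetry of $H$ and averaging over $\pi$, and your explicit enumeration of the eight orderings simply makes concrete the coset count that the paper asserts.
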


\begin{proof}
The three pair symmetries of $T$ generate a subgroup of $8$ permutations on four indices. This divides $S_4$ into three cosets (each corresponding to the three pairings $\{ab\,|\,cd\}$, $\{ac\,|\,bd\}$, $\{ad\,|\,bc\}$), and each coset containing $8$ of the $4!=24$ permutations.
By the pair-symmetry, $T$ takes the same value on all permutations in a coset. Hence
$$\frac1{24}\sum_{\pi\in S_4}T_{\pi(abcd)}=\frac8{24}\left(T_{abcd}+T_{acbd}+T_{adbc}\right)=\frac13\ka_{abcd},$$ which is fully
symmetric, proving (a). For (b), since $H$ is fully symmetric, relabelling the summation indices gives
$\sum T_{\pi(abcd)}H_{abcd}=\sum T_{abcd}H_{abcd}$ for every $\pi\in S_4$. Averaging over $\pi$ and using (a) gives
$\sum T_{abcd}H_{abcd}=\frac13\sum\ka_{abcd}H_{abcd}$, and dividing by $8$ gives (b).
\end{proof}

\begin{cor}[Two forms of the para-Gaussian measure]
\label{cor:PG-two-forms}
Recall from Definition \ref{def:PG} that the para-Gaussian measure $\pgdk{\Si}{\ka}{n}{4}$ on $\bR^I$ is determined by a positive definite
covariance matrix $\Si$ and a fully symmetric four tensor $\ka$, which plays the role of the fourth cumulants, through
\[
\pgd{\Si}{\ka}{n}(x)=\vp_\Si(x)\Big[1+\frac1{4!n}\sum_{a,b,c,d\in I}\ka_{abcd}H_{abcd}(x;\Si)\Big],
\quad
\int e^{\imath\la\cdot x}\pgd{\Si}{\ka}{n}(x)\,\d x=e^{-\frac12\la^T\Si\la}\Big(1+\frac1{4!n}\sum_{a,b,c,d\in I}\ka_{abcd}\la_a\la_b\la_c\la_d\Big).
\]
Suppose that $\ka$ is the pairing sum of some pair-symmetric tensor $T$. Then $\pgd{\Si}{\ka}{n}$ can be written equivalently in
terms of $T$, with the factor $\frac1{4!}$ replaced by $\frac18$:
\[
\pgd{\Si}{\ka}{n}(x)=\vp_\Si(x)\Big[1+\frac{1}{8n}\sum_{a,b,c,d\in I}T_{abcd}H_{abcd}(x;\Si)\Big],
\quad
\int e^{\imath\la\cdot x} \pgd{\Si}{\ka}{n}(x) \d x =e^{-\frac12\la^T\Si\la}\Big(1+\frac{1}{8n}\sum_{a,b,c,d\in I}T_{abcd}\la_a\la_b\la_c\la_d\Big).
\]
\end{cor}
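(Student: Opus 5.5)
The plan is to reduce both identities in Corollary \ref{cor:PG-two-forms} to Lemma \ref{lem:pairing-sum}(b), applied once with a Hermite tensor and once with a monomial tensor. Both $\pgd{\Si}{\ka}{n}$ and its Fourier transform are given by Definition \ref{def:PG} with $k=4$. The first step is to check that the sum in Definition \ref{def:PG} can be rewritten as an unrestricted sum over index lists. The definition sums over multi-indices $\vec k$ with $|\vec k|=4$ and weights $\frac{\ka_{\vec k}}{\vec k!}$, where $\vec k!=k_1!\cdots k_d!$. Each multi-index $\vec k$ corresponds to exactly $\frac{4!}{\vec k!}$ ordered lists $(a,b,c,d)\in I^4$ with $\vec e_a+\vec e_b+\vec e_c+\vec e_d=\vec k$. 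Because $\ka$ and $H_{abcd}$ are fully symmetric, every such list contributes the same term. This gives
\[
\sum_{|\vec k|=4}\frac{\ka_{\vec k}}{\vec k!}H_{\vec k}=\frac1{4!}\sum_{a,b,c,d\in I}\ka_{abcd}H_{abcd},
\]
and the same computation with $H_{\vec k}$ replaced by $(\imath\la)^{\vec k}$ yields the Fourier side. Since $\imath^4=1$, that side becomes $\frac1{4!}\sum\ka_{abcd}\la_a\la_b\la_c\la_d$. This recovers the first pair of displayed formulas.

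The second step switches from $\ka$ to $T$. I would apply Lemma \ref{lem:pairing-sum}(b) pointwise in $x$, taking the fully symmetric tensor to be $H_{abcd}=H_{abcd}(x;\Si)$. This choice is legitimate because $H_{\vec e_a+\vec e_b+\vec e_c+\vec e_d}$ depends only on the multiset $\{a,b,c,d\}$. The lemma gives $\frac1{4!}\sum\ka_{abcd}H_{abcd}=\frac18\sum T_{abcd}H_{abcd}$. For the characteristic function, I would apply the same lemma with the fully symmetric tensor $\la_a\la_b\la_c\la_d$.

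As a consistency check, the transform of the $T$-form density is computed by linearity from \eqref{eq:H_is_iu_to_the_n}, since $\int e^{\imath\la\cdot x}H_{abcd}\vp_\Si\,\d x$ equals the corresponding $(\imath\la)$-monomial times $e^{-\frac12\la^T\Si\la}$.

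I expect no real obstacle here. The only delicate point is the combinatorial conversion between multi-index sums and ordered index-list sums, namely the multinomial factor $\frac{4!}{\vec k!}$. A sign or convention check is also needed for the Fourier pairing $e^{\imath\la\cdot x}$ against the $e^{-\imath x\cdot u}$ used in \eqref{eq:H_is_iu_to_the_n}. This sign is harmless because the degree is even.
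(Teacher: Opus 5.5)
Your proposal is correct and follows the paper's proof. The paper also applies Lemma \ref{lem:pairing-sum}(b) with the two fully symmetric tensors $H_{abcd}(x;\Si)$ and $\la_a\la_b\la_c\la_d$, and obtains the Fourier side from $\int e^{\imath\la\cdot x}H_{\vec n}(x;\Si)\vp_\Si(x)\,\d x=(\imath\la)^{\vec n}e^{-\frac12\la^T\Si\la}$ together with $\imath^4=1$. Since that identity is exactly Fourier inversion of \eqref{eq:H_is_iu_to_the_n}, there is actually no sign mismatch to worry about. Your multinomial conversion from the multi-index sum of Definition \ref{def:PG} to the $\frac1{4!}$ index-list sum is a correct extra check that the paper takes for granted.
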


\begin{proof}
Apply Lemma \ref{lem:pairing-sum}(b) with the fully symmetric tensors $H_{abcd}=H_{abcd}(x;\Si)$ and
$H_{abcd}=\la_a\la_b\la_c\la_d$. (The characteristic function in the first form follows from the Fourier formula
$\int e^{\imath\la\cdot x}H_{\vec n}(x;\Si)\vp_\Si(x)\,\d x=(\imath\la)^{\vec n}e^{-\frac12\la^T\Si\la}$, which is equivalent by Fourier
inversion to \cite[Proposition 7.4, eq. (107)]{NicaOrtmann2026}, together with $(\imath\la)^{4}=\la^4$ for $\abs{\vec n}=4$.)
\end{proof}

\begin{rem}[Connection to \cite{NicaOrtmann2024M3L}]
For a single neuron ($m=1$, $I=A$) and a single layer ($\ell=1$) with $T_{\alpha\beta\gamma\delta}= C(\alpha,\beta;\gamma,\delta)$, the second form of
the characteristic function in Corollary \ref{cor:PG-two-forms} is exactly the definition of a para-Gaussian distribution in
\cite[Definition 1]{NicaOrtmann2024M3L}, with its normalization $\frac1{8n}$.
In this case the tensor $C$ is the one computed in \cite[Theorem 2]{NicaOrtmann2024M3L},
$C(\alpha,\beta;\gamma,\delta)=\cov\left(\vp(z_\alpha)\vp(z_\beta),\vp(z_\gamma)\vp(z_\delta)\right)$, which is pair-symmetric.
Corollary \ref{cor:PG-two-forms} thus shows that the $\frac1{8n}$ convention of \cite{NicaOrtmann2024M3L} and the $\frac1{4!}$ Edgeworth
convention with the cumulant describe the same measure, with fourth cumulant given by
$\ka(\alpha,\beta,\gamma,\delta)=\frac1n\left(C(\alpha,\beta;\gamma,\delta)+C(\alpha,\gamma;\beta,\delta)+C(\alpha,\delta;\beta,\gamma)\right)$. 

\end{rem}

\subsection{Taylor series expansion of \texorpdfstring{$g_f$}{the interpolation}}

\begin{prop}
\label{prop:second-order-is-PG}
Suppose $\bar{\Si}^\ell$ is positive definite, $\mathbb{E}\norm{\Si^\ell}^2<\infty$, and $f$ is four times differentiable with
polynomially bounded derivatives. Then the expected Taylor polynomial of degree $2$ of $g_f$ at $t=0$ is the
para-Gaussian law:
\[
\mathbb{E}\left[g_f(0)+g_f^\prime(0)+\frac12 g_f^{\prime\prime}(0)\right]=\int f\,\d \pgl{I_m\otimes\bar\Si^\ell}{n\ka^\ell}{n},
\]
where $\ka^\ell$ is the pairing sum of the tensor $T^\ell$ of \eqref{eq:T-def}, namely
\begin{align}
\ka^\ell\big((\alpha,i),(\beta,j),(\gamma,k),(\delta,l)\big)
& =\de_{ij}\de_{kl}\cov\big(\Si^\ell(\alpha,\beta),\Si^\ell(\gamma,\delta)\big)
+\de_{ik}\de_{jl}\cov\big(\Si^\ell(\alpha,\gamma),\Si^\ell(\beta,\delta)\big)\\
\notag&\quad
+\de_{il}\de_{jk}\cov\big(\Si^\ell(\alpha,\delta),\Si^\ell(\beta,\gamma)\big).
\end{align}
\end{prop}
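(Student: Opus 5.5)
The plan is to evaluate the three expected Taylor coefficients separately and then match the sum term by term with the para-Gaussian density, written in the $\frac1{8n}$ form of Corollary \ref{cor:PG-two-forms}. Throughout, $X\sim\mathcal{N}(0,I_m\otimes\bar\Si^\ell)$ denotes the Gaussian reference vector.

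\emph{Zeroth and first order.} The term $g_f(0)=G_f(I_m\otimes\bar\Si^\ell)=\int f\,\varphi_{I_m\otimes\bar\Si^\ell}$ is deterministic. It is exactly the contribution of the leading ``$1$'' in the para-Gaussian density. Lemma \ref{lem:g-prime-mean-zero} gives $\mathbb{E}[g_f'(0)]=0$; its hypotheses hold because $\mathbb{E}\norm{\Si^\ell}\le(\mathbb{E}\norm{\Si^\ell}^2)^{1/2}<\infty$.

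\emph{Second order.} By Lemma \ref{lem:g-second-deriv},
\[
\tfrac12\mathbb{E}[g_f''(0)]=\tfrac18\sum_{a,b,c,e}T^\ell_{abce}\,\mathbb{E}\big[(\partial_a\partial_b\partial_c\partial_e f)(X)\big].
\]
I then apply the Gaussian integration by parts of Lemma \ref{lem:gaussian-ibp} with $V=I_m\otimes\bar\Si^\ell$, which is positive definite because $\bar\Si^\ell$ is. This converts each derivative expectation into $\mathbb{E}[f(X)H_{abce}(X;V)]$, so
\[
\tfrac12\mathbb{E}[g_f''(0)]=\int f(x)\,\varphi_V(x)\,\tfrac18\sum_{a,b,c,e}T^\ell_{abce}H_{abce}(x;V)\,\d x.
\]
Note that $T^\ell$ is indexed by $A\times[m]$, which matches the index set $I$ of $V$.

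\emph{Matching with the para-Gaussian density.} Write $\pgl{V}{n\ka^\ell}{n}(x)=\varphi_V(x)\big[1+\frac1{4!n}\sum (n\ka^\ell)_{abce}H_{abce}\big]$. The factors of $n$ cancel, leaving $\varphi_V\big[1+\frac1{4!}\sum\ka^\ell_{abce}H_{abce}\big]$. The tensor $T^\ell$ is pair-symmetric: $\Delta^\ell$ is symmetric, $I_m$ is symmetric, and covariance is symmetric. Writing out the three pairings from \eqref{eq:T-def}, with $a=(\alpha,i)$ and so on, the pairing sum of $T^\ell$ is exactly the stated $\ka^\ell$. Lemma \ref{lem:pairing-sum}(b), applied to the fully symmetric tensor $H_{abce}(x;V)$, gives $\frac1{4!}\sum\ka^\ell H=\frac18\sum T^\ell H$ pointwise in $x$. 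This is Corollary \ref{cor:PG-two-forms}. Adding the three terms yields the claim.

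\emph{Expected obstacle.} The algebra is straightforward; the delicate point is the interchange of expectations. To justify $\mathbb{E}[g_f''(0)]=\frac14\sum\mathbb{E}[\Delta_{ab}\Delta_{ce}]c_{abce}$, I will use that $g_f''(0)$ is a finite quadratic form in the entries of $\Delta^\ell$ with deterministic finite coefficients $c_{abce}$. The coefficients are finite because $f$'s fourth derivatives are polynomially bounded and $V$ is positive definite. Integrability then follows from $\mathbb{E}\norm{\Si^\ell}^2<\infty$. The integration by parts step needs only the polynomial bounds on $f$ and its derivatives, and these are given in the hypotheses.
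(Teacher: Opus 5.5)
Your proposal is correct and follows the paper's proof essentially step by step. You take the zeroth-order term as the Gaussian part, get $\mathbb{E}[g_f'(0)]=0$ from Lemma \ref{lem:g-prime-mean-zero}, and handle the second-order term with Lemma \ref{lem:g-second-deriv} followed by Gaussian integration by parts (Lemma \ref{lem:gaussian-ibp}); you then identify $\ka^\ell$ as the pairing sum of $T^\ell$ and convert the $\frac18$ normalization to the $\frac1{4!}$ one via Lemma \ref{lem:pairing-sum}(b) and Corollary \ref{cor:PG-two-forms}. Your extra integrability remarks are correct details the paper leaves implicit: Cauchy--Schwarz for $\mathbb{E}\norm{\Si^\ell}$, and finiteness of the quadratic form in $\Delta^\ell$.
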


\begin{proof}
Write $X\sim\mathcal{N}\left(0,I_m\otimes\bar{\Si}^\ell\right)$. We evaluate the three terms in turn.
\begin{itemize}
\item The zeroth order term is deterministic: $g_f(0)=G_f\left(I_m\otimes\bar{\Si}^\ell\right)=\mathbb{E}\left[f(X)\right]$.
\item The first order term has mean zero: $\mathbb{E}\left[g_f^\prime(0)\right]=0$ by Lemma \ref{lem:g-prime-mean-zero}.
\item The second order term is given by Lemma \ref{lem:g-second-deriv}:
\[
\frac12\,\mathbb{E}\left[g_f^{\prime\prime}(0)\right]=\frac18\sum_{a,b,c,d\in A\times[m]}T^\ell_{abcd}\,
\mathbb{E}\big[(\partial_a\partial_b\partial_c\partial_df)(X)\big].
\]
\end{itemize}
By Gaussian integration by parts, Lemma \ref{lem:gaussian-ibp}, the derivatives can be moved into a Hermite factor
\[
\mathbb{E}\big[(\partial_a\partial_b\partial_c\partial_df)(X)\big]=\mathbb{E}\Big[f(X)\,H_{abcd}\big(X;I_m\otimes\bar{\Si}^\ell\big)\Big].
\]
Adding the three terms therefore gives
\[
\mathbb{E}\left[g_f(0)+g_f^\prime(0)+\frac12 g_f^{\prime\prime}(0)\right]
=\int f(x)\,\vp_{I_m\otimes\bar{\Si}^\ell}(x)\Big[1+\frac18\sum_{a,b,c,d\in A\times[m]}T^\ell_{abcd}\,
H_{abcd}\big(x;I_m\otimes\bar{\Si}^\ell\big)\Big]\d x.
\]
Finally, $T^\ell$ is pair-symmetric (see the remark following Definition \ref{def:pair-symmetric}), and substituting
\eqref{eq:T-def} shows that its pairing sum is $\ka^\ell$. By Corollary \ref{cor:PG-two-forms}, the right-hand side is
$\int f\,\pgl{I_m\otimes\bar\Si^\ell}{n\ka^\ell}{n}(x) \d x$.
\end{proof}

\begin{rem}
The parameters of the para-Gaussian law in Proposition \ref{prop:second-order-is-PG} are exactly the true second and fourth
cumulants of $Z^\ell$: the covariance of $Z^\ell$ is $I_m\otimes\bar{\Si}^\ell$, and by Theorem \ref{thm:exact-general-cumulants} its fourth
cumulant tensor is $\ka^\ell$. In other words, Taylor expanding the random covariance to second order produces exactly the
para-Gaussian law whose covariance and fourth cumulants match those of $Z^\ell$.
\end{rem}

\subsection{Error analysis}
\label{sec:error-analysis}

In this section we prove the estimates needed to control the error terms in the Taylor expansion of $g_f$. There are two
kinds of error terms: the Taylor remainder, and the expected third order term $\frac16\mathbb{E}[g_f^{\prime\prime\prime}(0)]$.
Since we aim for bounds in total variation distance, it is important that all estimates depend on the test function $f$
only through $\norm{f}_\infty:=\sup_x\abs{f(x)}$, and not on its derivatives. The estimates combine two ingredients: a
deterministic bound on the derivatives of $G_f$ along a line (Corollary \ref{cor:G-deriv-bound}), and probabilistic estimates on
the size of the fluctuation $\Delta^\ell$, which we import from \cite{Hanin2024} (Lemma \ref{lem:hanin-estimates}).

Throughout, $\norm{M}$ denotes the operator norm of a matrix $M$, and for a symmetric matrix $M$ we write $\lambda_{\min}(M)$ for
its minimum eigenvalue. Note that $\norm{I_m\otimes M}=\norm{M}$ and $\lambda_{\min}(I_m\otimes M)=\lambda_{\min}(M)$.

\subsubsection{Estimates imported from \texorpdfstring{\cite{Hanin2024}}{Hanin}}

\begin{lem}
\label{lem:hanin-estimates}
Suppose Assumptions \ref{ass:nonlinearity} and \ref{ass:non-degen} hold. Let $K^\ell$ denote the infinite width covariance kernel restricted to the inputs $A$, defined by the
recursion $K^\ell(\alpha,\beta)=\mathbb{E}_{z\sim\mathcal{N}(0,K^{\ell-1})}\left[\vp(z_\alpha)\vp(z_\beta)\right]$, and
suppose $K^\ell$ is positive definite. Then:
\begin{enumerate}
\item[(a)] \emph{(Uniform positivity of the mean covariance $\bar\Si^\ell$.)} There exist $\lambda_0>0$ and $n_0\in\bN$ such that the
minimum eigenvalue of the mean covariance $\bar{\Si}^\ell$ is bounded below uniformly in the width:
\[
\inf_{n\geq n_0}\lambda_{\min}\left(\bar{\Si}^\ell\right)\geq\lambda_0.
\]
\item[(b)] \emph{(Moments of the fluctuation.)} For every even integer $q\geq2$ there exists $C_q$ such that
\[
\mathbb{E}\norm{\Delta^\ell}^q\leq C_q\, n^{-q/2}\qquad\text{for all }n.
\]
In particular $\mathbb{E}\norm{\Delta^\ell}^2=O(n^{-1})$ and $\mathbb{E}\norm{\Delta^\ell}^4=O(n^{-2})$.
\item[(c)] \emph{(Third moments of the fluctuation.)} There exists $C$ such that for all
$\alpha_1,\beta_1,\alpha_2,\beta_2,\alpha_3,\beta_3\in A$,
\[
\abs{\mathbb{E}\left[\Delta^\ell(\alpha_1,\beta_1)\,\Delta^\ell(\alpha_2,\beta_2)\,\Delta^\ell(\alpha_3,\beta_3)\right]}\leq C\,n^{-2}\qquad\text{for all }n.
\]
\end{enumerate}
\end{lem}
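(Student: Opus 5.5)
The plan is to derive all three estimates from one input: Hanin's cumulant bounds for collective observables \cite[Theorem 3 and Proposition 15]{Hanin2024}. As I recall them, under Assumptions \ref{ass:nonlinearity} and \ref{ass:non-degen}, for any fixed $k\geq 1$ and entries $(\alpha_1,\beta_1),\ldots,(\alpha_k,\beta_k)$, the joint cumulant satisfies
\[
\ka\big(\Si^\ell(\alpha_1,\beta_1),\ldots,\Si^\ell(\alpha_k,\beta_k)\big)=O\big(n^{1-k}\big).
\]
Hanin states this for layer-$(\ell-1)$ averages $\frac1n\sum_j \vp(z^{\ell-1}_{\alpha,j})\vp(z^{\ell-1}_{\beta,j})$, which are exactly our $\Si^\ell$. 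Because of the indexing convention noted in the remark after Definition \ref{def:Si}, one must check the layer index carefully when importing it. The main obstacle is confirming that his hypotheses (polynomial growth, $r$-fold a.e.\ differentiability, non-degenerate $K^{\ell'}$) give this bound for every fixed $k$ and for mixed entries with several inputs. I would take this as the imported black box and reduce everything else to combinatorics.

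For \textbf{(b)}, since $A$ is finite, all matrix norms on $\bR^{A\times A}$ are equivalent. Hence $\norm{\Delta^\ell}^q\leq C\big(\sum_{\alpha,\beta}\Delta^\ell(\alpha,\beta)^2\big)^{q/2}$. For even $q$, expanding the right side gives a finite sum of mixed moments $\mathbb{E}\big[\prod_{i=1}^q\Delta^\ell(\alpha_i,\beta_i)\big]$, so it suffices to bound each of these. I would expand each mixed moment by the moment–cumulant formula as a sum over set partitions $\pi$ of $\{1,\ldots,q\}$ of products of joint cumulants of blocks. Since $\Delta^\ell$ is centred, any partition with a singleton block vanishes. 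Cumulants of order $\geq2$ are unchanged by the deterministic shift, so each block $B$ contributes $O(n^{1-|B|})$. A partition therefore contributes $O(n^{|\pi|-q})$, and the absence of singletons forces $|\pi|\leq q/2$. Hence every mixed moment is $O(n^{-q/2})$, giving $\mathbb{E}\norm{\Delta^\ell}^q\leq C_q n^{-q/2}$.

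For \textbf{(c)}, I would observe that for centred variables the third moment equals the third joint cumulant. Therefore $\mathbb{E}[\Delta^\ell(\alpha_1,\beta_1)\Delta^\ell(\alpha_2,\beta_2)\Delta^\ell(\alpha_3,\beta_3)]=\ka(\Si^\ell(\alpha_1,\beta_1),\Si^\ell(\alpha_2,\beta_2),\Si^\ell(\alpha_3,\beta_3))=O(n^{-2})$ by the $k=3$ case. (The generic bound from (b) with $q=3$ would only give $n^{-3/2}$, which is why this case is stated separately.)

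For \textbf{(a)}, I would use the $1/n$ expansion for expectations of test functions \cite[Theorem 4]{Hanin2024}, applied with $\vp(z_\alpha)\vp(z_\beta)$ at layer $\ell-1$, or alternatively an induction over layers using the first-order recursion \eqref{eq:mainSigmaExpansion} with (b) controlling the errors. Either gives $\bar\Si^\ell=K^\ell+O(n^{-1})$ entrywise. Weyl's inequality then gives $\lambda_{\min}(\bar\Si^\ell)\geq\lambda_{\min}(K^\ell)-C/n$, and $\lambda_{\min}(K^\ell)>0$ by Assumption \ref{ass:non-degen}. Choosing $n_0$ with $C/n_0\leq\frac12\lambda_{\min}(K^\ell)$ yields $\lambda_0=\frac12\lambda_{\min}(K^\ell)$.
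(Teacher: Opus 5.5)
Your proposal is correct. For (b) and (c) it is essentially the paper's argument. For (c) the paper likewise identifies the third moment with the third joint cumulant and invokes Hanin's cumulant bound for collective observables \cite[Theorem 16, eq. (39)]{Hanin2024}. For (b) the paper quotes \cite[Lemma 18]{Hanin2024} for the entrywise even moments and then uses norm equivalence; you instead re-derive the moment bound from the same $O(n^{1-k})$ cumulant estimate, via the moment--cumulant formula and the count $|\pi|\le q/2$ for singleton-free partitions. (Your black box is most directly \cite[Theorem 16]{Hanin2024}. It can also be read off from a neuron-cumulant bound: by Theorem \ref{thm:exact-general-cumulants}, placing each pair $(\alpha_i,\beta_i)$ on its own neuron leaves only one pairing, so $\ka(z^\ell_{\alpha_1,1},z^\ell_{\beta_1,1},\ldots,z^\ell_{\alpha_k,k},z^\ell_{\beta_k,k})=\ka(\Si^\ell(\alpha_1,\beta_1),\ldots,\Si^\ell(\alpha_k,\beta_k))$.)

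The genuine difference is in (a). The paper only shows $\bar\Si^\ell\to K^\ell$ qualitatively, and needs no rate at all. It combines convergence in distribution of $(z^{\ell-1}_{\alpha,1})_{\alpha\in A}$ to $\mathcal N(0,K^{\ell-1})$ \cite[Theorem 2]{Hanin2024} with uniform integrability of $\vp(z_\alpha)\vp(z_\beta)$ from a uniform second-moment bound \cite[eq. (35)]{Hanin2024}, and then uses continuity of $\lambda_{\min}$. Your quantitative route, $\bar\Si^\ell=K^\ell+O(n^{-1})$ plus Weyl's inequality, gives an explicit $n_0$, but it costs more. A $1/n$ expansion at the previous layer, like \eqref{eq:mainSigmaExpansion}, evaluates the Gaussian expectation at $\bar\Si^{\ell-1}$ rather than $K^{\ell-1}$. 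To close the induction over layers you therefore also need local Lipschitz continuity of $S\mapsto\mathbb{E}_{X\sim\mathcal N(0,S)}[\vp(X_\alpha)\vp(X_\beta)]$ near the positive definite matrix $K^{\ell-1}$. Moreover, the recursion can only be used in this inductive way, because the paper's derivation of \eqref{eq:mainSigmaExpansion} itself relies on parts (a) and (b) at the previous layer.
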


\begin{proof}
We first translate notation. The layer $z^\ell$ of this paper is the layer $z^{(\ell+1)}$ of \cite{Hanin2024}, and our random
matrix $\Si^\ell$ is the conditional covariance $\Si^{(\ell)}$ of \cite{Hanin2024} with $C_b=0$, $C_W=1$ and $\sigma=\vp$. Each entry
$\Si^\ell(\alpha,\beta)=\frac1n\sum_{j=1}^n\vp(z^{\ell-1}_{\alpha,j})\vp(z^{\ell-1}_{\beta,j})$ is a collective observable in the sense of
\cite{Hanin2024}, built from the polynomially bounded function $\vp(z_\alpha)\vp(z_\beta)$, and each entry of $\Delta^\ell$ is the
corresponding centred collective observable.

For (a): $\bar{\Si}^\ell(\alpha,\beta)=\mathbb{E}\left[\vp(z^{\ell-1}_{\alpha,1})\vp(z^{\ell-1}_{\beta,1})\right]$. By \cite[Theorem 2]{Hanin2024},
$(z^{\ell-1}_{\alpha,1})_{\alpha\in A}$ converges in distribution to $\mathcal{N}(0,K^{\ell-1})$. Moreover, applying \cite[eq. (35)]{Hanin2024}
to the polynomially bounded function $\abs{\vp(z_\alpha)\vp(z_\beta)}^2$ shows that
$\sup_n\mathbb{E}\big[\abs{\vp(z^{\ell-1}_{\alpha,1})\vp(z^{\ell-1}_{\beta,1})}^2\big]<\infty$, so the random variables
$\vp(z^{\ell-1}_{\alpha,1})\vp(z^{\ell-1}_{\beta,1})$ are uniformly integrable. Since $\vp(z_\alpha)\vp(z_\beta)$ is continuous, it follows that
$\bar{\Si}^\ell\to\mathbb{E}_{\mathcal{N}(0,K^{\ell-1})}[\vp\vp]=K^\ell$ entrywise. The minimum eigenvalue is a continuous function of the
matrix entries, so $\lambda_{\min}(\bar{\Si}^\ell)\to\lambda_{\min}(K^\ell)>0$, and hence $\lambda_{\min}(\bar{\Si}^\ell)\geq\lambda_0:=\frac12\lambda_{\min}(K^\ell)$
for all $n$ larger than some $n_0$.

For (b): by \cite[Theorem 16]{Hanin2024}, the centred collective observables $\Delta^\ell(\alpha,\beta)$ satisfy the hypotheses of
\cite[Proposition 17]{Hanin2024}, and so \cite[Lemma 18]{Hanin2024} gives $\mathbb{E}[\Delta^\ell(\alpha,\beta)^q]=O(n^{-q/2})$ for each entry
and each even $q$. Since $\norm{\Delta^\ell}\leq\big(\sum_{\alpha,\beta}\Delta^\ell(\alpha,\beta)^2\big)^{1/2}$ and all norms on the finite
dimensional space of $A\times A$ matrices are equivalent, the bound for $\norm{\Delta^\ell}$ follows.

For (c): since each entry of $\Delta^\ell$ has mean zero, the third moment equals the third joint cumulant,
$\mathbb{E}[\Delta^\ell_1\Delta^\ell_2\Delta^\ell_3]=\ka(\Delta^\ell_1,\Delta^\ell_2,\Delta^\ell_3)$, which is $O(n^{-2})$ by
\cite[Theorem 16, eq. (39)]{Hanin2024} with $k=3$, taking the functions $g_j$ there to be coordinate projections.
\end{proof}

\subsubsection{Deterministic estimates on derivatives of \texorpdfstring{$G_f$}{the interpolation}}

\begin{lem}
\label{lem:hermite-moment-bound}
Let $d,k\geq1$, let $V$ be a positive definite $d\times d$ matrix, let $X\sim\mathcal{N}(0,V)$, and let $a_1,\ldots,a_k\in\{1,\ldots,d\}$.
Consider the multivariable Hermite polynomial $H_{a_1\cdots a_k}(x;V)$ with this covariance structure. We have the bound in
expectation:
\[
\mathbb{E}\Big[\abs{H_{a_1\cdots a_k}(X;V)}\Big]\leq c_{k,d}\,\lambda_{\min}(V)^{-k/2},\qquad c_{k,d}:=d^k\sqrt{k!}
\]
\end{lem}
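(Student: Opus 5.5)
We plan to prove the bound by expanding the Hermite polynomial with the Gaussian-expectation representation of Definition \ref{def:PG} and then controlling each term by Gaussian moments. Since $H_{a_1\cdots a_k}(x;V)=H_{\vec e_{a_1}+\cdots+\vec e_{a_k}}(x;V)$, that representation gives
\[
H_{a_1\cdots a_k}(x;V)=\bE_{W\sim\mathcal N(0,V^{-1})}\Big[\prod_{i=1}^k\big((V^{-1}x)_{a_i}+\imath W_{a_i}\big)\Big].
\]
Substituting $x=X\sim\mathcal N(0,V)$, the vector $Y:=V^{-1}X$ has law $\mathcal N(0,V^{-1})$ and is independent of $W$. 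Conditional Jensen then gives $\abs{H_{a_1\cdots a_k}(X;V)}\le \bE\big[\prod_i\abs{Y_{a_i}+\imath W_{a_i}}\,\big|\,X\big]$, and hence
\[
\bE\abs{H_{a_1\cdots a_k}(X;V)}\le \bE\Big[\prod_{i=1}^k\abs{Y_{a_i}+\imath W_{a_i}}\Big].
\]

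The next step is to bound this product. We have $\abs{Y_a+\imath W_a}=\sqrt{Y_a^2+W_a^2}$, and each coordinate is controlled by the Euclidean norm: $\sqrt{Y_a^2+W_a^2}\le \norm{(Y,W)}$, where $(Y,W)\sim\mathcal N(0,V^{-1}\oplus V^{-1})$. Hence
\[
\bE\abs{H}\le \bE\norm{(Y,W)}^k\le \big(\bE\norm{(Y,W)}^{2k}\big)^{1/2}.
\]
Write $(Y,W)=M G$ with $G$ a standard Gaussian in $\bR^{2d}$ and $\norm{M}=\lambda_{\max}(V^{-1})^{1/2}=\lambda_{\min}(V)^{-1/2}$. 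This gives $\norm{(Y,W)}\le \lambda_{\min}(V)^{-1/2}\norm{G}$, which produces exactly the factor $\lambda_{\min}(V)^{-k/2}$. It remains to bound the purely combinatorial quantity $\big(\bE\norm{G}^{2k}\big)^{1/2}$ by $d^k\sqrt{k!}$.

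For that final constant, use $\norm{G}^2=\sum_{j=1}^{2d}G_j^2$ and the multinomial expansion or H\"older's inequality: $\bE(\sum_j G_j^2)^k\le (2d)^{k}\,\bE G_1^{2k}=(2d)^k(2k-1)!!$. We then need $(2d)^k(2k-1)!!\le d^{2k}k!$. Since $(2k-1)!!\le 2^k k!$, this reduces to $4^k\le d^k$, which fails for small $d$. This constant bookkeeping is the only real obstacle. If the crude bound falls short, we would instead keep the two pieces separate: bound $\prod_i\abs{Y_{a_i}+\imath W_{a_i}}\le\prod_i(\abs{Y_{a_i}}+\abs{W_{a_i}})$, expand into $2^k$ terms, and use generalized H\"older with $\bE\abs{Y_a}^{k}\le \lambda_{\min}(V)^{-k/2}\bE\abs{G_1}^k$. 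Alternatively, bound via $\bE\abs{H}\le(\bE H^2)^{1/2}$ and compute $\bE H^2$ through the Hermite orthogonality relations from \cite{NicaOrtmann2026}; $\bE H^2$ is a sum of at most $k!$ products of entries of $V^{-1}$, each bounded by $\lambda_{\min}(V)^{-1}$. The resulting constant $\sqrt{k!}$ times some $d$-power is exactly of the stated shape $d^k\sqrt{k!}$, so the orthogonality route is likely the intended one. Since only polynomial dependence on $\lambda_{\min}(V)^{-1}$ matters for the later error analysis, any constant depending only on $k,d$ suffices in any case.
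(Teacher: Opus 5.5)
Your main line (conditional Jensen, then $\abs{Y_a+\imath W_a}\le\norm{(Y,W)}$, then $\norm{(Y,W)}\le\lambda_{\min}(V)^{-1/2}\norm{G}$ with $G\sim\mathcal N(0,I_{2d})$) is correct. It gives $\bE\abs{H_{a_1\cdots a_k}(X;V)}\le \bE\norm{G}^k\,\lambda_{\min}(V)^{-k/2}$, which is enough for Corollary \ref{cor:G-deriv-bound} and Lemma \ref{lem:third-order}. But it does not prove the lemma with the stated constant $c_{k,d}=d^k\sqrt{k!}$, and the obstacle is more than bookkeeping.

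Your H\"older estimate is lossier than needed. The exact moment $\bE\norm{G}^{2k}=2^k d(d+1)\cdots(d+k-1)$ satisfies $2(d+j)\le d^2(j+1)$ factor by factor, so it fits under $d^{2k}k!$ for every $d\ge2$. For $d=1$, however, nothing after the conditional-Jensen step can work. With $V=v$ a scalar, the quantity you are left with is exactly $\bE\abs{Y+\imath W}^k=v^{-k/2}\,2^{k/2}\Gamma(1+\tfrac k2)$. This is strictly larger than the target $v^{-k/2}\sqrt{k!}$ for every $k\ge1$ (for instance $2$ versus $\sqrt2$ at $k=2$). Moving the absolute value inside the $W$-expectation destroys the cancellation that makes $H$ a Hermite polynomial. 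Your second fallback ($\prod_i(\abs{Y_{a_i}}+\abs{W_{a_i}})$ plus H\"older) comes after the same step, so it inherits the same loss.

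The route that works is your third suggestion, but as written it skips the one non-trivial point. For $V\ne I$, the orthogonality relation of \cite{NicaOrtmann2026} pairs $H_{\vec n}(\cdot;V)$ with its dual polynomial, not with itself, so it does not directly give $\bE H^2$.

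The paper handles this by whitening, $H_{a_1\cdots a_k}(V^{1/2}y;V)=\sum_{c_1,\ldots,c_k}\prod_i(V^{-1/2})_{a_ic_i}\,H_{c_1\cdots c_k}(y;I_d)$. It then applies the triangle inequality over these $d^k$ terms, bounds each entry of $V^{-1/2}$ by $\lambda_{\min}(V)^{-1/2}$, and uses Cauchy--Schwarz with $\bE H_{\vec n}(Y;I_d)^2=\vec n!\le k!$. The factor $d^k$ is simply the number of terms.

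Alternatively, you can finish directly from the representation you already set up. With an independent copy $W'$ of $W$, $\bE[H(X;V)^2]=\bE\big[\prod_i(Y_{a_i}+\imath W_{a_i})\prod_j(Y_{a_j}+\imath W'_{a_j})\big]$. By Isserlis' theorem (extended multilinearly to complex combinations), every pairing within one product vanishes, since $\bE[(Y+\imath W)_a(Y+\imath W)_b]=(V^{-1})_{ab}-(V^{-1})_{ab}=0$. Each cross pairing contributes $(V^{-1})_{ab}$. Hence $\bE H^2=\sum_{\si\in S_k}\prod_i(V^{-1})_{a_ia_{\si(i)}}\le k!\,\lambda_{\min}(V)^{-k}$, so $\bE\abs{H}\le\sqrt{k!}\,\lambda_{\min}(V)^{-k/2}$. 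This is stronger than the stated bound (no factor $d^k$), and it is the precise form of your claim that $\bE H^2$ is a sum of at most $k!$ products of entries of $V^{-1}$.
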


\begin{proof}
We reduce to the standard Gaussian using the Gaussian integral formula for the Hermite polynomials,
\cite[Definition 1.4, eq. (7)]{NicaOrtmann2026}, which in our coordinate-list notation reads
\[
H_{a_1\cdots a_k}(x;V)=\mathbb{E}_{Z\sim\mathcal{N}(0,V^{-1})}\Big[\prod_{i=1}^k\Big(\big(V^{-1}x\big)_{a_i}+\imath Z_{a_i}\Big)\Big].
\]
Set $x=V^{1/2}y$, so that $V^{-1}x=V^{-1/2}y$, and realize $Z$ as $Z=V^{-1/2}W$ with $W\sim\mathcal{N}(0,I_d)$; this has the correct
covariance $V^{-1/2}V^{-1/2}=V^{-1}$. Each factor in the product is then a linear combination of the corresponding factors for the
identity covariance:
\[
\big(V^{-1/2}y\big)_{a_i}+\imath\big(V^{-1/2}W\big)_{a_i}=\sum_{c=1}^d\big(V^{-1/2}\big)_{a_ic}\,\big(y_c+\imath W_c\big).
\]
Expanding the product and taking the expectation inside the finite sum gives
\[
H_{a_1\cdots a_k}\big(V^{1/2}y;V\big)=\sum_{c_1,\ldots,c_k}\Big(\prod_{i=1}^k(V^{-1/2})_{a_ic_i}\Big)\,
\mathbb{E}_{W\sim\mathcal{N}(0,I_d)}\Big[\prod_{i=1}^k\big(y_{c_i}+\imath W_{c_i}\big)\Big]
=\sum_{c_1,\ldots,c_k}\Big(\prod_{i=1}^k(V^{-1/2})_{a_ic_i}\Big)\,H_{c_1\cdots c_k}(y;I_d),
\]
where in the last step we used the same formula with $V=I_d$. Since $X=V^{1/2}Y$ with $Y\sim\mathcal{N}(0,I_d)$, this gives
\[
\mathbb{E}\abs{H_{a_1\cdots a_k}(X;V)}\leq\sum_{c_1,\ldots,c_k}\Big(\prod_{i=1}^k\abs{(V^{-1/2})_{a_ic_i}}\Big)\,\mathbb{E}\abs{H_{c_1\cdots c_k}(Y;I_d)}.
\]
Each entry of $V^{-1/2}$ is at most $\norm{V^{-1/2}}=\lambda_{\min}(V)^{-1/2}$ in absolute value, and there are $d^k$ terms in the
sum. It remains to bound $\mathbb{E}\abs{H_{c_1\cdots c_k}(Y;I_d)}$. Write $\vec n:=\vec e_{c_1}+\cdots+\vec e_{c_k}$, so that
$H_{c_1\cdots c_k}=H_{\vec n}$ in multi-index notation. For the identity covariance the Hermite polynomial and its dual coincide, so the
orthogonality relation \cite[Theorem 2.4]{NicaOrtmann2026} gives $\mathbb{E}\big[H_{\vec n}(Y;I_d)^2\big]=\vec n!:=\prod_a n_a!\leq k!$. By the
Cauchy--Schwarz inequality,
\[
\mathbb{E}\abs{H_{c_1\cdots c_k}(Y;I_d)}\leq\Big(\mathbb{E}\big[H_{\vec n}(Y;I_d)^2\big]\Big)^{1/2}=\sqrt{k!}
\]
Combining these estimates gives the result.
\end{proof}

\begin{cor}
\label{cor:G-deriv-bound}
Let $S_0$ be positive definite and $D$ symmetric, both indexed by $A\times[m]$, and suppose $f$ is bounded and $2r$ times
differentiable with polynomially bounded derivatives. Then for every $t$ such that $S_0+tD$ is positive definite,
\[
\abs{\frac{d^r}{dt^r}G_f(S_0+tD)}\leq c^\prime_{r}\,\norm{f}_\infty\left(\frac{\norm{D}}{\lambda_{\min}(S_0+tD)}\right)^r,
\]
where $c^\prime_r$ depends only on $r$, $\abs{A}$ and $m$.
\end{cor}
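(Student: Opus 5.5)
The plan is to combine the Hermite form of the derivatives from Remark \ref{rem:G-derivatives-hermite} with the moment bound of Lemma \ref{lem:hermite-moment-bound}. Set $V:=S_0+tD$, which is positive definite by hypothesis. Then
\[
\frac{d^r}{dt^r}G_f(S_0+tD)=\frac1{2^r}\sum_{a_1,b_1,\ldots,a_r,b_r\in A\times[m]}\Big(\prod_{i=1}^rD_{a_ib_i}\Big)\,\mathbb{E}_{X\sim\mathcal{N}(0,V)}\Big[f(X)\,H_{a_1b_1\cdots a_rb_r}(X;V)\Big].
\]
This identity is legitimate under our hypotheses: Lemma \ref{lem:G-derivatives} requires $f$ to be $2r$ times differentiable with polynomially bounded derivatives, and Lemma \ref{lem:gaussian-ibp} then moves all $2r$ derivatives onto the Gaussian density. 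After this step no derivative of $f$ remains.

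Next I bound each term using only $\norm{f}_\infty$. First, $\abs{\mathbb{E}[f(X)H(X;V)]}\le\norm{f}_\infty\,\mathbb{E}\abs{H(X;V)}$. Lemma \ref{lem:hermite-moment-bound} with $k=2r$ and $d=\abs{A}m$ then gives $\mathbb{E}\abs{H_{a_1b_1\cdots a_rb_r}(X;V)}\le c_{2r,d}\,\lambda_{\min}(V)^{-r}$. Second, each entry of $D$ satisfies $\abs{D_{ab}}\le\norm{D}$ in operator norm, so $\prod_i\abs{D_{a_ib_i}}\le\norm{D}^r$. The sum has $d^{2r}$ terms.

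Combining these bounds gives
\[
\abs{\frac{d^r}{dt^r}G_f(S_0+tD)}\le\frac{d^{2r}c_{2r,d}}{2^r}\,\norm{f}_\infty\frac{\norm{D}^r}{\lambda_{\min}(S_0+tD)^r}.
\]
So one may take $c'_r:=2^{-r}d^{2r}c_{2r,d}=2^{-r}d^{4r}\sqrt{(2r)!}$, which depends only on $r$, $\abs{A}$ and $m$.

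The only point that needs care is the justification of the Hermite representation, since the bound must not depend on derivatives of $f$. The derivatives of $f$ enter only through the hypotheses of Lemma \ref{lem:G-derivatives}, which is what makes the identity valid; the final estimate involves $f$ only through its supremum. The positive-definiteness of $V$ is needed both for Lemma \ref{lem:G-derivatives} (smoothness of $G_f$) and for the Hermite polynomials $H(\cdot;V)$ to be defined, and it holds by assumption. The cruder entrywise bound on $D$ is sufficient, because the constant is allowed to depend on $d$.
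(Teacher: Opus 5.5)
Your proposal is correct and matches the paper's proof essentially step for step. You rewrite the $r$-th derivative via Lemma \ref{lem:G-derivatives} and Gaussian integration by parts, bound each entry of $D$ by $\norm{D}$ and each Hermite expectation by Lemma \ref{lem:hermite-moment-bound} with $k=2r$ and $d=\abs{A}m$, and count the $(\abs{A}m)^{2r}$ terms, arriving at exactly the paper's constant $c'_r=2^{-r}d^{2r}c_{2r,d}$.
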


\begin{proof}
Write $V:=S_0+tD$ and $X\sim\mathcal{N}(0,V)$. By Lemma \ref{lem:G-derivatives} and Gaussian integration by parts
\cite[Theorem 7.6]{NicaOrtmann2026}, $\mathbb{E}[(\partial_{a_1}\cdots\partial_{a_{2r}}f)(X)]=\mathbb{E}[f(X)H_{a_1\cdots a_{2r}}(X;V)]$, we have
\[
\frac{d^r}{dt^r}G_f(S_0+tD)=\frac1{2^r}\sum_{a_1,b_1,\ldots,a_r,b_r\in A\times[m]}\Big(\prod_{i=1}^rD_{a_ib_i}\Big)\,
\mathbb{E}\Big[f(X)\,H_{a_1b_1\cdots a_rb_r}(X;V)\Big].
\]
Each $\abs{D_{a_ib_i}}\leq\norm{D}$, and by Lemma \ref{lem:hermite-moment-bound} with $k=2r$ and $d=\abs{A}m$, each expectation is at
most $\norm{f}_\infty\,c_{2r,\abs{A}m}\,\lambda_{\min}(V)^{-r}$. There are $(\abs{A}m)^{2r}$ terms, so the bound follows with
$c^\prime_r:=2^{-r}(\abs{A}m)^{2r}c_{2r,\abs{A}m}$.
\end{proof}

\begin{rem}
Corollary \ref{cor:G-deriv-bound} involves only $\norm{f}_\infty$, even though Lemma \ref{lem:G-derivatives} expresses the derivatives
of $G_f$ through derivatives of $f$. This is because Gaussian integration by parts moves all the derivatives onto the Gaussian
density, where they become Hermite polynomials. The price is the factor $\lambda_{\min}(S_0+tD)^{-r}$, which blows up as the
covariance becomes singular. This is why the Taylor remainder below requires a separate argument when the random matrix
$\Si^\ell$ is far from its mean.
\end{rem}

\subsubsection{The Taylor remainder}

\begin{lem}
\label{lem:taylor-remainder}
Suppose $\bar{\Si}^\ell$ is positive definite, let $N\geq1$, and suppose $f$ is bounded and $2(N+1)$ times differentiable with
polynomially bounded derivatives. Then there is a constant $C_N$, depending only on $N$, $\abs{A}$ and $m$, such that for every
realization of the random matrix $\Delta^\ell$,
\[
\abs{\,g_f(1)-\sum_{r=0}^N\frac{g_f^{(r)}(0)}{r!}\,}\leq C_N\,\norm{f}_\infty\left(\frac{\norm{\Delta^\ell}}{\lambda_{\min}(\bar{\Si}^\ell)}\right)^{N+1}.
\]
\end{lem}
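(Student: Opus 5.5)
The plan is to split according to whether the fluctuation $\Delta^\ell$ is small or large compared with $\lambda:=\lambda_{\min}(\bar\Si^\ell)$, in the spirit of the remark after Corollary \ref{cor:G-deriv-bound}. The threshold is $\norm{\Delta^\ell}\leq\lambda/2$. Throughout, $S_0=I_m\otimes\bar\Si^\ell$ and $D=I_m\otimes\Delta^\ell$, so that $\norm{D}=\norm{\Delta^\ell}$ and $\lambda_{\min}(S_0)=\lambda$.

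\emph{Small fluctuation, $\norm{\Delta^\ell}\leq\lambda/2$.} For every $t\in[0,1]$, Weyl's inequality gives $\lambda_{\min}(S_0+tD)\geq\lambda-t\norm{\Delta^\ell}\geq\lambda/2$. So $S_0+tD$ stays uniformly positive definite along the whole segment, and $g_f$ is $N+1$ times differentiable on $[0,1]$ by Lemma \ref{lem:G-derivatives}. Taylor's theorem with Lagrange remainder gives
\[
g_f(1)-\sum_{r=0}^N\frac{g_f^{(r)}(0)}{r!}=\frac{g_f^{(N+1)}(s)}{(N+1)!}
\]
for some $s\in(0,1)$. Corollary \ref{cor:G-deriv-bound} with $r=N+1$ bounds the right-hand side by
\[
\frac{c'_{N+1}}{(N+1)!}\norm{f}_\infty\left(\frac{2\norm{\Delta^\ell}}{\lambda}\right)^{N+1}.
\]
This has the required form, with constant $2^{N+1}c'_{N+1}/(N+1)!$.

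\emph{Large fluctuation, $\norm{\Delta^\ell}>\lambda/2$.} Here the segment may approach singular matrices, so I would not Taylor expand at all and would bound each term separately.
\begin{itemize}
\item The endpoint $g_f(1)=G_f(I_m\otimes\Si^\ell)$ is a Gaussian expectation with covariance $\Si^\ell$. This matrix is positive semi-definite because it is a Gram matrix, possibly degenerate, but $G_f$ is still well defined. Hence $\abs{g_f(1)}\leq\norm{f}_\infty$.
\item The Taylor coefficients only involve $t=0$, where the covariance is $S_0$. Corollary \ref{cor:G-deriv-bound} gives $\abs{g_f^{(r)}(0)}\leq c'_r\norm{f}_\infty(\norm{\Delta^\ell}/\lambda)^r$.
\end{itemize}
Set $x:=\norm{\Delta^\ell}/\lambda>1/2$. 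Then for $0\leq r\leq N+1$ we have $x^r=x^{N+1}x^{r-N-1}\leq 2^{N+1-r}x^{N+1}$. Applying this to every term, including the $r=0$ endpoint term, bounds the left side by $\big(1+\sum_{r=0}^N 2^{N+1-r}c'_r/r!\big)\norm{f}_\infty x^{N+1}$.

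Taking $C_N$ to be the maximum of the two constants proves the lemma for every realization. The main subtlety is the large case. The bound of Corollary \ref{cor:G-deriv-bound} degenerates along the segment there, since only $\lambda_{\min}(S_0+tD)\geq(1-t)\lambda$ is guaranteed. Avoiding interior derivatives, and using the trivial bound at $t=1$, sidesteps this. One should also check that the polynomial-boundedness hypotheses on $f$ justify Lemma \ref{lem:G-derivatives} at each point of the segment used in the small case.
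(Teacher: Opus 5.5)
Your proposal is correct and follows the paper's proof essentially step for step: the same split at $\norm{\Delta^\ell}\leq\frac12\lambda_{\min}(\bar\Si^\ell)$, then Weyl's inequality with the Lagrange remainder and Corollary \ref{cor:G-deriv-bound} in the small case, and in the large case the crude bound $\abs{g_f(1)}\leq\norm{f}_\infty$ together with $x^r\leq 2^{N+1-r}x^{N+1}$. One trivial slip: that inequality gives the endpoint term a coefficient $2^{N+1}$ rather than $1$, so the large-case constant should be $2^{N+1}+\sum_{r=0}^N 2^{N+1-r}c'_r/r!$; this does not affect the conclusion.
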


\noindent\emph{Idea of the proof.} The left-hand side is the error made by replacing $g_f(1)$ with its Taylor polynomial
of degree $N$ at $t=0$. Taylor's theorem expresses this error through the derivative $g_f^{(N+1)}$ somewhere on the segment
$[0,1]$, and Corollary \ref{cor:G-deriv-bound} bounds that derivative by $\big(\norm{\Delta^\ell}/\lambda_{\min}(S_t)\big)^{N+1}$, where
$S_t:=I_m\otimes(\bar{\Si}^\ell+t\Delta^\ell)$ is the covariance at time $t$. This is a good bound as long as $\lambda_{\min}(S_t)$ stays
comparable to $\lambda_{\min}(\bar{\Si}^\ell)$ along the whole segment, which is the case when the fluctuation $\Delta^\ell$ is small.
When the fluctuation is large, however, $S_t$ may become singular (or nearly so) for some $t\in(0,1]$; for instance, the
empirical covariance $\Si^\ell$ itself may be singular. In that case Taylor's theorem gives nothing useful, and we instead
bound the two sides separately, using only that $g_f(1)$ is an expectation of $f$. In both cases the bound comes out as a
power of $\norm{\Delta^\ell}/\lambda_{\min}(\bar{\Si}^\ell)$, which is what allows us to take expectations later using the moment bounds of
Lemma \ref{lem:hanin-estimates}.

\begin{proof}
For notational convenience, set $\lambda_*:=\lambda_{\min}(\bar{\Si}^\ell)>0$. Throughout the proof, write
$S_t:=I_m\otimes(\bar{\Si}^\ell+t\Delta^\ell)$, so that $g_f(t)=G_f(S_t)$ and $\lambda_{\min}(S_0)=\lambda_*$.

\emph{Preliminaries.} The value $g_f(1)=G_f(I_m\otimes\Si^\ell)$ is well defined even when $\Si^\ell$ is singular, as the expectation of $f$
under a (possibly degenerate) Gaussian law, and in all cases
\begin{equation}
\label{eq:crude-g1}
\abs{g_f(1)}\leq\norm{f}_\infty.
\end{equation}
The derivatives $g_f^{(r)}(0)$ are also well defined, since $S_0$ is positive definite and $G_f$ is smooth on positive definite
matrices \cite[Lemma 13]{Hanin2024}. By Corollary \ref{cor:G-deriv-bound} at $t=0$, using $\norm{I_m\otimes\Delta^\ell}=\norm{\Delta^\ell}$,
\begin{equation}
\label{eq:derivs-at-0}
\abs{g_f^{(r)}(0)}\leq c^\prime_r\,\norm{f}_\infty\left(\frac{\norm{\Delta^\ell}}{\lambda_*}\right)^r\qquad\text{for each }r\geq0.
\end{equation}
We now consider two cases, according to whether the fluctuation is small or large compared to $\lambda_*$.

\emph{Case 1: small fluctuation, $\frac{\norm{\Delta^\ell} }{\lambda_*} \leq\frac12$.} By Weyl's inequality, the minimum eigenvalue of a sum of
symmetric matrices satisfies $\lambda_{\min}(M_1+M_2)\geq\lambda_{\min}(M_1)-\norm{M_2}$. Applying this with $M_1=S_0$ and
$M_2=t\,I_m\otimes\Delta^\ell$ gives, for every $t\in[0,1]$,
\[
\lambda_{\min}(S_t)\geq\lambda_*-t\norm{\Delta^\ell}\geq\lambda_*-\tfrac12\lambda_*=\tfrac12\lambda_*.
\]
So $S_t$ is positive definite for all $t$ in an open interval containing $[0,1]$, and hence $g_f$ is $N+1$ times differentiable on
$[0,1]$. Taylor's theorem with the Lagrange form of the remainder gives some $\xi\in(0,1)$ with
\[
g_f(1)-\sum_{r=0}^N\frac{g_f^{(r)}(0)}{r!}=\frac{g_f^{(N+1)}(\xi)}{(N+1)!}.
\]
By Corollary \ref{cor:G-deriv-bound} with $S_0=I_m\otimes\bar{\Si}^\ell$, $D=I_m\otimes\Delta^\ell$ and $t=\xi$, using
$\lambda_{\min}(S_\xi)\geq\frac12\lambda_*$,
\[
\abs{g_f^{(N+1)}(\xi)}\leq c^\prime_{N+1}\,\norm{f}_\infty\left(\frac{\norm{\Delta^\ell}}{\lambda_*/2}\right)^{N+1}
=2^{N+1}c^\prime_{N+1}\,\norm{f}_\infty\left(\frac{\norm{\Delta^\ell}}{\lambda_*}\right)^{N+1}.
\]
Hence in this case the left-hand side of the lemma is at most
$\frac{2^{N+1}c^\prime_{N+1}}{(N+1)!}\norm{f}_\infty\big(\norm{\Delta^\ell}/\lambda_*\big)^{N+1}$.

\emph{Case 2: large fluctuation, $\frac{\norm{\Delta^\ell}}{\lambda_*}>\frac12$.} In this case we do not use Taylor's theorem. By the triangle
inequality, \eqref{eq:crude-g1} and \eqref{eq:derivs-at-0},
\[
\abs{\,g_f(1)-\sum_{r=0}^N\frac{g_f^{(r)}(0)}{r!}\,}\leq\abs{g_f(1)}+\sum_{r=0}^N\frac{\abs{g_f^{(r)}(0)}}{r!}
\leq\norm{f}_\infty\Big(1+\sum_{r=0}^N\frac{c^\prime_r}{r!}\left(\frac{\norm{\Delta^\ell}}{\lambda_*}\right)^r\Big).
\]
The right-hand side involves powers of $\norm{\Delta^\ell}/\lambda_*$ of order at most $N$, while we want a bound by the power $N+1$.
Since $\norm{\Delta^\ell}/\lambda_*>\frac12$ in this case, we have $1<2^{N+1}\left(\norm{\Delta^\ell}/\lambda_*\right)^{N+1}$ and, more
generally, for $0\leq r\leq N$,
\[
\left(\frac{\norm{\Delta^\ell}}{\lambda_*}\right)^r=\left(\frac{\norm{\Delta^\ell}}{\lambda_*}\right)^{N+1}\left(\frac{\norm{\Delta^\ell}}{\lambda_*}\right)^{-(N+1-r)}
<2^{N+1-r}\left(\frac{\norm{\Delta^\ell}}{\lambda_*}\right)^{N+1}.
\]
Using these in each term, the left-hand side of the lemma is at most
$\big(2^{N+1}+\sum_{r=0}^N\frac{c^\prime_r}{r!}2^{N+1-r}\big)\norm{f}_\infty\big(\norm{\Delta^\ell}/\lambda_*\big)^{N+1}$ in this case.

\emph{Conclusion.} In both cases, the left-hand side of the lemma is at most
$C_N\norm{f}_\infty\big(\norm{\Delta^\ell}/\lambda_*\big)^{N+1}$, where $C_N$ is the larger of the two constants obtained above,
which depends only on $N$, $\abs{A}$ and $m$. Recalling that $\lambda_*=\lambda_{\min}(\bar{\Si}^\ell)$, this is the claimed bound.
\end{proof}

\begin{rem}
\label{rem:localization-comparison}
A similar splitting into small and large fluctuations is used by \cite{Hanin2024} in the proofs of \cite[Proposition 17]{Hanin2024}
and \cite[Theorem 4]{Hanin2024}. There, one works on the event $S_n=\set{\max_{\alpha,\beta}\abs{\Delta^\ell(\alpha,\beta)}<n^{-1/4}}$,
whose threshold shrinks with $n$, and discards its complement using $\mathbb{P}(S_n^c)=O(n^{-q})$ for every $q$
\cite[eq. (43)]{Hanin2024}, which in turn relies on moment bounds of all orders. In Lemma \ref{lem:taylor-remainder} the threshold
$\frac12\lambda_{\min}(\bar{\Si}^\ell)$ does not shrink with $n$, and the large fluctuation case is not discarded as improbable: instead,
the error there is bounded deterministically by the same power $\big(\norm{\Delta^\ell}/\lambda_{\min}(\bar{\Si}^\ell)\big)^{N+1}$ as in the small
fluctuation case. As a result, controlling the expected remainder requires only the moment $\mathbb{E}\norm{\Delta^\ell}^{N+1}$.
\end{rem}

\subsubsection{The third order term}

\begin{lem}
\label{lem:third-order}
Suppose $\bar{\Si}^\ell$ is positive definite, and suppose $f$ is bounded and six times differentiable with polynomially bounded
derivatives. Then there is a constant $C$, depending only on $\abs{A}$ and $m$, such that
\[
\abs{\mathbb{E}\left[g_f^{\prime\prime\prime}(0)\right]}\leq C\,\norm{f}_\infty\,\lambda_{\min}\left(\bar{\Si}^\ell\right)^{-3}\,
\max_{\alpha_1,\beta_1,\alpha_2,\beta_2,\alpha_3,\beta_3\in A}\abs{\mathbb{E}\left[\Delta^\ell(\alpha_1,\beta_1)\,\Delta^\ell(\alpha_2,\beta_2)\,\Delta^\ell(\alpha_3,\beta_3)\right]}.
\]
\end{lem}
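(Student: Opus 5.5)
We will follow exactly the pattern of Lemmas \ref{lem:g-prime-mean-zero} and \ref{lem:g-second-deriv}, now with $r=3$. Apply Lemma \ref{lem:G-derivatives} with $S_0=I_m\otimes\bar\Si^\ell$, $D=I_m\otimes\Delta^\ell$ and $t=0$, and combine it with the Hermite form of Remark \ref{rem:G-derivatives-hermite}. This gives
\[
g_f^{\prime\prime\prime}(0)=\frac18\sum_{a_1,b_1,a_2,b_2,a_3,b_3\in A\times[m]}\Big(\prod_{i=1}^3\big(I_m\otimes\Delta^\ell\big)_{a_ib_i}\Big)\,h_{a_1b_1a_2b_2a_3b_3},
\]
where the coefficients are
\[
h_{a_1b_1a_2b_2a_3b_3}:=\mathbb{E}_{X\sim\mathcal{N}(0,I_m\otimes\bar\Si^\ell)}\big[f(X)H_{a_1b_1a_2b_2a_3b_3}(X;I_m\otimes\bar\Si^\ell)\big].
\]
The key point is that each $h$ is deterministic: it depends only on $\bar\Si^\ell$ and $f$, not on the fluctuation. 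So $g_f^{\prime\prime\prime}(0)$ is a cubic polynomial in the entries of $\Delta^\ell$ with fixed coefficients.

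We then take expectations and pass $\mathbb{E}$ through the finite sum. Before doing so, we check integrability: each product of three entries has finite expectation, since $\mathbb{E}\norm{\Delta^\ell}^3\le(\mathbb{E}\norm{\Delta^\ell}^4)^{3/4}<\infty$ by Lemma \ref{lem:hanin-estimates}(b). Alternatively, the needed moments can be taken as a hypothesis, as in the earlier lemmas. Writing $a_i=(\alpha_i,j_i)$ and $b_i=(\beta_i,k_i)$, we have $(I_m\otimes\Delta^\ell)_{a_ib_i}=\delta_{j_ik_i}\Delta^\ell(\alpha_i,\beta_i)$. Hence every term is bounded in absolute value by
\[
\abs{\mathbb{E}[\Delta^\ell(\alpha_1,\beta_1)\Delta^\ell(\alpha_2,\beta_2)\Delta^\ell(\alpha_3,\beta_3)]}\cdot\abs{h},
\]
which in turn is at most the maximum $M_3$ appearing in the statement times $\abs{h}$.

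It remains to bound $\abs{h}$ uniformly. We have $\abs{h}\le\norm{f}_\infty\mathbb{E}\abs{H_{a_1\cdots b_3}(X;I_m\otimes\bar\Si^\ell)}$, and Lemma \ref{lem:hermite-moment-bound} with $k=6$ and $d=\abs{A}m$ bounds the latter by $c_{6,\abs{A}m}\lambda_{\min}(I_m\otimes\bar\Si^\ell)^{-3}=c_{6,\abs{A}m}\lambda_{\min}(\bar\Si^\ell)^{-3}$. Counting the $(\abs{A}m)^6$ terms then gives the claim with $C=\frac18(\abs{A}m)^6c_{6,\abs{A}m}$, which depends only on $\abs{A}$ and $m$.

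There is no serious obstacle here; the only points needing care are the following.
\begin{itemize}
\item The third derivative must be evaluated at $t=0$, where $S_0$ is positive definite, so that Lemma \ref{lem:G-derivatives} applies. There is no issue with $S_t$ becoming singular, unlike in the proof of Lemma \ref{lem:taylor-remainder}.
\item The interchange of $\mathbb{E}$ with the finite sum is justified by the integrability check above.
\item It is essential to use the Hermite (integration by parts) form rather than derivatives of $f$. This is what keeps the bound depending on $f$ only through $\norm{f}_\infty$, and the smoothness of $f$ is used only to invoke Lemma \ref{lem:gaussian-ibp}.
\end{itemize}
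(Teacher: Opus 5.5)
Your proposal is correct and follows the paper's proof essentially line for line. You use Lemma~\ref{lem:G-derivatives} with $r=3$ and Gaussian integration by parts to write $g_f'''(0)$ as a cubic in the entries of $\Delta^\ell$ with deterministic Hermite coefficients $h$, bound $\abs{h}\leq c_{6,\abs{A}m}\norm{f}_\infty\lambda_{\min}(\bar\Si^\ell)^{-3}$ via Lemma~\ref{lem:hermite-moment-bound}, and count the $(\abs{A}m)^6$ terms to get $C=\frac18(\abs{A}m)^6c_{6,\abs{A}m}$. Your integrability remark is a reasonable addition that the paper leaves implicit, but appealing to Lemma~\ref{lem:hanin-estimates}(b) imports Assumptions~\ref{ass:nonlinearity} and~\ref{ass:non-degen}, which are not among this lemma's hypotheses, so your alternative of assuming finite third moments is the cleaner reading.
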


\begin{proof}
Write $X\sim\mathcal{N}(0,I_m\otimes\bar{\Si}^\ell)$. By Lemma \ref{lem:G-derivatives} with $r=3$, $S_0=I_m\otimes\bar{\Si}^\ell$ and
$D=I_m\otimes\Delta^\ell$, followed by Gaussian integration by parts \cite[Theorem 7.6]{NicaOrtmann2026},
\[
g_f^{\prime\prime\prime}(0)=\frac18\sum_{a_1,b_1,a_2,b_2,a_3,b_3\in A\times[m]}\Big(\prod_{i=1}^3\left(I_m\otimes\Delta^\ell\right)_{a_ib_i}\Big)\,h_{a_1b_1a_2b_2a_3b_3},
\qquad h_{a_1\cdots b_3}:=\mathbb{E}\Big[f(X)\,H_{a_1b_1a_2b_2a_3b_3}\big(X;I_m\otimes\bar{\Si}^\ell\big)\Big].
\]
The coefficients $h$ are deterministic, and by Lemma \ref{lem:hermite-moment-bound} with $k=6$ and $d=\abs{A}m$ they satisfy
$\abs{h}\leq c_{6,\abs{A}m}\norm{f}_\infty\lambda_{\min}(\bar{\Si}^\ell)^{-3}$. Taking expectations, each term involves
$\mathbb{E}\big[\prod_{i=1}^3(I_m\otimes\Delta^\ell)_{a_ib_i}\big]$, which is either zero (because of the Kronecker deltas in
$I_m\otimes\Delta^\ell$) or equal to a third moment $\mathbb{E}[\Delta^\ell(\alpha_1,\beta_1)\Delta^\ell(\alpha_2,\beta_2)\Delta^\ell(\alpha_3,\beta_3)]$.
Bounding each of the $(\abs{A}m)^6$ terms gives the result with $C:=\frac18(\abs{A}m)^6c_{6,\abs{A}m}$.
\end{proof}

\subsubsection{Summary of all Taylor series error bounds}

\begin{cor}
\label{cor:error-summary}
Under the assumptions of Lemma \ref{lem:hanin-estimates}, let $n_0$ be as in Lemma \ref{lem:hanin-estimates}(a). There is a constant
$C$ such that for all $n\geq n_0$ and all bounded test functions $f$ that are eight times differentiable with polynomially
bounded derivatives,
\begin{align*}
\mathbb{E}\abs{\,g_f(1)-\Big( g_f(0)+g_f^\prime(0) \Big) \,}&\leq C\,\norm{f}_\infty\,n^{-1},\\
\mathbb{E}\abs{\,g_f(1)-\Big( g_f(0)+g_f^\prime(0)+\tfrac12g_f^{\prime\prime}(0)+\tfrac16g_f^{\prime\prime\prime}(0) \Big)\,}&\leq C\,\norm{f}_\infty\,n^{-2},\\
\abs{\mathbb{E}\left[g_f^{\prime\prime\prime}(0)\right]}&\leq C\,\norm{f}_\infty\,n^{-2}.
\end{align*}
\end{cor}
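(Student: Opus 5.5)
The three bounds will follow from three earlier results. Lemma \ref{lem:taylor-remainder} gives pathwise bounds, which we then take expectations of. Lemma \ref{lem:third-order} handles the third-order term. Lemma \ref{lem:hanin-estimates} supplies the moment estimates. Throughout, fix $n\geq n_0$, so that by Lemma \ref{lem:hanin-estimates}(a) we have $\lambda_*:=\lambda_{\min}(\bar\Si^\ell)\geq\lambda_0>0$ uniformly in $n$. In particular, $\bar\Si^\ell$ is positive definite, and every factor $\lambda_*^{-k}$ below is bounded by $\lambda_0^{-k}$, a constant independent of $n$ and $f$.

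For the first inequality, apply Lemma \ref{lem:taylor-remainder} with $N=1$. This requires $f$ to be four times differentiable, which holds since $f$ is assumed eight times differentiable. Pathwise, for every realization of $\Delta^\ell$,
\[
\abs{g_f(1)-g_f(0)-g_f'(0)}\leq C_1\norm{f}_\infty\lambda_*^{-2}\norm{\Delta^\ell}^2.
\]
Taking expectations and using Lemma \ref{lem:hanin-estimates}(b) with $q=2$ gives $\mathbb{E}\norm{\Delta^\ell}^2\leq C_2n^{-1}$, hence the bound $C\norm{f}_\infty n^{-1}$.

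For the second inequality, apply Lemma \ref{lem:taylor-remainder} with $N=3$, which needs eight derivatives of $f$. This is exactly the differentiability hypothesis in the statement. The pathwise bound is $C_3\norm{f}_\infty\lambda_*^{-4}\norm{\Delta^\ell}^4$. Taking expectations with Lemma \ref{lem:hanin-estimates}(b) for $q=4$ gives $\mathbb{E}\norm{\Delta^\ell}^4\leq C_4 n^{-2}$, and hence the stated $O(\norm{f}_\infty n^{-2})$ bound.

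For the third inequality, Lemma \ref{lem:third-order} (six derivatives suffice) bounds $\abs{\mathbb{E}[g_f'''(0)]}$ by $C\norm{f}_\infty\lambda_*^{-3}$ times the maximal mixed third moment of the entries of $\Delta^\ell$. Lemma \ref{lem:hanin-estimates}(c) bounds each such moment by $Cn^{-2}$. Since there are finitely many index choices, the maximum is also $O(n^{-2})$. Finally, we take $C$ to be the largest of the three constants obtained; it depends only on $\abs{A}$, $m$, $\lambda_0$ and the constants of Lemma \ref{lem:hanin-estimates}.

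No step is really delicate, because the case split inside Lemma \ref{lem:taylor-remainder} already handles the possibly singular $\Si^\ell$. The only points to check carefully are the following. The constants must be uniform in $n$, which uses the lower bound $\lambda_0$ from part (a) and requires restricting to $n\geq n_0$. The expectations must be finite, which follows from the moment bounds in part (b). And $g_f(1)$ must be integrable, which holds because $\abs{g_f(1)}\leq\norm{f}_\infty$.
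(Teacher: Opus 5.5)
Your proposal is correct and is essentially the paper's own argument. Like the paper, you take expectations in the pathwise bound of Lemma~\ref{lem:taylor-remainder} with $N=1$ and $N=3$, use $\lambda_{\min}(\bar\Si^\ell)\geq\lambda_0$ for $n\geq n_0$ together with the moment bounds of Lemma~\ref{lem:hanin-estimates}(b), and obtain the third inequality by combining Lemma~\ref{lem:third-order} with Lemma~\ref{lem:hanin-estimates}(c).
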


\begin{proof}
For $n\geq n_0$ we have $\lambda_{\min}(\bar{\Si}^\ell)\geq\lambda_0$ by Lemma \ref{lem:hanin-estimates}(a). In particular $\bar{\Si}^\ell$ is
positive definite, so Lemmas \ref{lem:taylor-remainder} and \ref{lem:third-order} apply, and $\lambda_{\min}(\bar{\Si}^\ell)^{-1}\leq\lambda_0^{-1}$.
The first two bounds follow by taking expectations in Lemma \ref{lem:taylor-remainder} with $N=1$ and $N=3$, bounding
$\lambda_{\min}(\bar{\Si}^\ell)^{-1}\leq\lambda_0^{-1}$, and using $\mathbb{E}\norm{\Delta^\ell}^2=O(n^{-1})$ and $\mathbb{E}\norm{\Delta^\ell}^4=O(n^{-2})$
from Lemma \ref{lem:hanin-estimates}(b). The third bound follows in the same way from Lemma \ref{lem:third-order}, bounding
$\lambda_{\min}(\bar{\Si}^\ell)^{-3}\leq\lambda_0^{-3}$, together with Lemma \ref{lem:hanin-estimates}(c).
\end{proof}

\subsection{Proof of Theorem \ref{thm:TV-gauss-pg}}

Finally, we combine all the ingredients to finish the proof of Theorem \ref{thm:TV-gauss-pg}. The lemmas of this section are stated for
test functions $f$ that are sufficiently differentiable, while the total variation distance is a supremum over all bounded measurable
$f$. The following lemma reduces the problem to smooth $f$.

\begin{lem}
\label{lem:TV-smooth}
Let $\mu$ be a finite signed Borel measure on $\bR^d$. Then
\[
\sup_{\substack{f\text{ measurable}\\ \norm{f}_\infty\leq1}}\abs{\intop f\,\d\mu}
=\sup_{\substack{f\in C_c^\infty(\bR^d)\\ \norm{f}_\infty\leq1}}\abs{\intop f\,\d\mu}.
\]
\end{lem}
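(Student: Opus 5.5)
The inequality ``$\geq$'' is trivial, since $C_c^\infty$ functions bounded by one are measurable and bounded by one. For ``$\leq$'' I will approximate an arbitrary measurable $f$ with $\norm{f}_\infty\leq 1$ by smooth compactly supported functions, in $L^1$ of the total variation measure $\abs{\mu}=\mu^++\mu^-$ from the Jordan decomposition. Since $\mu$ is a finite signed measure, $\abs{\mu}$ is a finite positive Borel measure on $\bR^d$, and $\abs{\int f\,\d\mu-\int h\,\d\mu}\leq\int\abs{f-h}\,\d\abs{\mu}$. So it suffices to find $h_k\in C_c^\infty$ with $\norm{h_k}_\infty\leq1$ and $\int\abs{f-h_k}\,\d\abs{\mu}\to0$.

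First I replace $f$ by a function in $C_c(\bR^d)$. A finite Borel measure on $\bR^d$ is a Radon measure, so $C_c(\bR^d)$ is dense in $L^1(\abs{\mu})$. Hence there is a continuous compactly supported $h$ with $\int\abs{f-h}\,\d\abs{\mu}<\varepsilon$. To keep the sup-norm bound, I will use the truncation $\tilde h:=\max(-1,\min(h,1))$. This is still continuous and compactly supported, and since $\abs{f}\leq1$ pointwise we have $\abs{f-\tilde h}\leq\abs{f-h}$. So truncation does not increase the error.

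Next I mollify. Let $\eta_\delta$ be a standard nonnegative mollifier of unit mass. Then $\tilde h*\eta_\delta\in C_c^\infty$ and $\norm{\tilde h*\eta_\delta}_\infty\leq\norm{\tilde h}_\infty\leq1$. Because $\tilde h$ is uniformly continuous with compact support, $\tilde h*\eta_\delta\to\tilde h$ uniformly, with supports staying in a fixed compact set. Since $\abs{\mu}$ is finite, this gives $\int\abs{\tilde h-\tilde h*\eta_\delta}\,\d\abs{\mu}\to0$. Combining the two steps, $\abs{\int f\,\d\mu}\leq\abs{\int \tilde h*\eta_\delta\,\d\mu}+2\varepsilon$ for $\delta$ small. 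Taking the supremum and letting $\varepsilon\to0$ gives ``$\leq$''.

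The only step that needs care is the density of $C_c$ in $L^1(\abs{\mu})$ for an arbitrary finite Borel measure, which is where the measure need not be absolutely continuous. I will cite the standard result that finite Borel measures on $\bR^d$ are regular (for example via Lusin's theorem), rather than reprove it. The truncation step is what makes the sup-norm constraint automatic, so no further obstacle is expected.
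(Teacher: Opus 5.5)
Your proof is correct and follows the same route as the paper, which justifies the lemma only by citing the density of $C_c^\infty(\bR^d)$ in $L^1(\mu)$. Your version supplies the two details that justification leaves implicit: for a signed measure the relevant space is $L^1(\abs{\mu})$, and the approximants can be kept in the unit ball $\norm{h}_\infty\leq 1$ by truncating to $[-1,1]$ before mollifying with a nonnegative kernel, since that truncation is $1$-Lipschitz and leaves $f$ unchanged.
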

This lemma is standard and follows from the density of smooth compactly supported functions in $L^1(\mu)$.

\begin{proof}[Proof of Theorem \ref{thm:TV-gauss-pg}]
Let $n_0$ be as in Lemma \ref{lem:hanin-estimates}(a). Since Definition \ref{def:asymptotically-distributed} only concerns the behaviour as
$n\to\infty$, it suffices to prove both bounds for $n\geq n_0$. For such $n$, the mean covariance $\bar\Si^\ell$ is positive definite,
so the Gaussian law $\mathcal{N}(0,I_m\otimes\bar\Si^\ell)$ and the para-Gaussian law $\pgl{I_m\otimes\bar\Si^\ell}{n\ka^\ell}{n}$ are well defined,
and both are finite signed measures on $\bR^{A\times[m]}$. By Definition \ref{def:TV} and Lemma \ref{lem:TV-smooth}, applied to the
difference between the law of $Z^\ell$ and each of these measures, it suffices to bound
\[
\abs{\mathbb{E}[f(Z^\ell)]-\intop f\,\d\nu}
\]
uniformly over $f\in C_c^\infty(\bR^{A\times[m]})$ with $\norm{f}_\infty\leq1$, for $\nu$ each of the two approximating measures. Fix such
an $f$. Since $f$ is bounded with bounded derivatives of all orders, it satisfies the hypotheses of all the lemmas of this section.

\emph{Expansion of $\mathbb{E}[f(Z^\ell)]$.} By conditional Gaussianity (see the remark following the definition of $g_f$),
$\mathbb{E}[f(Z^\ell)]=\mathbb{E}[g_f(1)]$. The derivatives $g_f^{(r)}(0)$, $r\leq3$, are polynomials of degree $r$ in the entries of
$\Delta^\ell$ with deterministic coefficients (Lemma \ref{lem:G-derivatives}), so they are integrable by Lemma \ref{lem:hanin-estimates}(b).
For any $N\leq3$ we may therefore write
\begin{equation}
\label{eq:main-proof-expansion}
\mathbb{E}[f(Z^\ell)]=\sum_{r=0}^N\frac1{r!}\,\mathbb{E}\Big[g_f^{(r)}(0)\Big]+\mathbb{E}\Big[g_f(1)-\sum_{r=0}^N\frac{g_f^{(r)}(0)}{r!}\Big].
\end{equation}

\emph{Part (i).} Take $N=1$ in \eqref{eq:main-proof-expansion}. The zeroth order term is deterministic and equals the Gaussian
approximation, $g_f(0)=\mathbb{E}_{X\sim\mathcal{N}(0,I_m\otimes\bar\Si^\ell)}[f(X)]$, and the first order term vanishes,
$\mathbb{E}[g_f^\prime(0)]=0$, by Lemma \ref{lem:g-prime-mean-zero}. Hence
\[
\abs{\mathbb{E}[f(Z^\ell)]-\mathbb{E}_{X\sim\mathcal{N}(0,I_m\otimes\bar\Si^\ell)}[f(X)]}
=\abs{\mathbb{E}\Big[g_f(1)-g_f(0)-g_f^\prime(0)\Big]}\leq C\,n^{-1}
\]
by Corollary \ref{cor:error-summary}, where $C$ does not depend on $f$. Taking the supremum over $f$ gives
$d_{TV}\big(Z^\ell,\mathcal{N}(0,I_m\otimes\bar\Si^\ell)\big)\leq\frac C2\,n^{-1}$ for all $n\geq n_0$.

\emph{Part (ii).} Take $N=3$ in \eqref{eq:main-proof-expansion}. By Proposition \ref{prop:second-order-is-PG}, the expectation of the terms
of order at most two is exactly the para-Gaussian law,
\[
\mathbb{E}\Big[g_f(0)+g_f^\prime(0)+\tfrac12g_f^{\prime\prime}(0)\Big]=\intop f\,\d \pgl{I_m\otimes\bar\Si^\ell}{n\ka^\ell}{n}.
\]
Hence
\[
\mathbb{E}[f(Z^\ell)]-\intop f\,\d \pgl{I_m\otimes\bar\Si^\ell}{n\ka^\ell}{n}
=\frac16\,\mathbb{E}\Big[g_f^{\prime\prime\prime}(0)\Big]
+\mathbb{E}\Big[g_f(1)-g_f(0)-g_f^\prime(0)-\tfrac12g_f^{\prime\prime}(0)-\tfrac16g_f^{\prime\prime\prime}(0)\Big].
\]
By Corollary \ref{cor:error-summary}, both terms on the right-hand side are bounded in absolute value by $C\,n^{-2}$, with $C$ not
depending on $f$. Taking the supremum over $f$ gives
$d_{TV}\big(Z^\ell,\pgl{I_m\otimes\bar\Si^\ell}{n\ka^\ell}{n}\big)\leq\frac76\,C\,n^{-2}$ for all $n\geq n_0$ as desired.
\end{proof}

\section{Derivation of recursion relations, Theorem \ref{thm:main}}
\label{sec:proof-main}

In this section we prove Theorem \ref{thm:main}. Throughout, we work under the assumptions of Lemma \ref{lem:hanin-estimates}. We
assume in addition that $\vp$ is continuous and take $n\geq n_0$, so that $\bar\Si^\ell$ is positive definite. We write
$Z^\ell_j=(z^\ell_{\ga,j})_{\ga\in A}$ for the $j$-th neuron of layer $\ell$. As in the statement of Theorem \ref{thm:main}, $\GaussExp{}$
denotes an expectation in which the neurons of layer $\ell$ are replaced by independent Gaussian vectors with covariance $\bar\Si^\ell$:
for a function $F$ of $m$ neurons,
\[
\GaussExp{}\big[F\big]:=\mathbb{E}_{X\sim\mathcal{N}(0,\,I_m\otimes\bar\Si^\ell)}\big[F(X)\big],
\]
so that for $m=1$ this is the expectation over a single Gaussian neuron with covariance $\bar\Si^\ell$, and for $m=2$ it is the
expectation over two independent such neurons $X_1,X_2$. Expectations written $\mathbb{E}$ are always over the true network. All Hermite
polynomials $H_{\ga_1\cdots\ga_k}$ of a single neuron are taken with respect to the covariance $\bar\Si^\ell$.

\subsection{Estimating expectation of polynomials under para-Gaussian law}

The first estimate we need says that expectations of polynomially bounded functions of finitely many neurons can be computed with the
\pg\ law, up to an error of order $n^{-2}$. It is a version of the $\frac1n$ expansion of \cite[Theorem 4]{Hanin2024}, which we
import in the language of this paper. (We need this for functions such as $\vp(z_\al)\vp(z_{\al'})\vp(z_\be)\vp(z_{\be'})$, which are not
bounded, so that Theorem \ref{thm:TV-gauss-pg} bounding in terms of $\norm{f}_\infty$ does not apply directly. An alternative approach would be to truncate these functions at some large value at separately argue about the truncation error.)
\begin{lem}[\Pg\ expectations of polynomially bounded functions]
\label{lem:PG-expectation-poly}
Fix $m\geq1$ and let $F:\bR^{A\times[m]}\to\bR$ be continuous and polynomially bounded. Then
\begin{equation}
\label{eq:PG-expectation-poly}
\mathbb{E}\big[F(Z^\ell)\big]=\GaussExp{}\big[F\big]+\frac1{8n}\sum_{i,k\in[m]}\ \sum_{\ga_1,\ga_2,\ga_3,\ga_4\in A}C^\ell(\ga_1,\ga_2;\ga_3,\ga_4)\,
\GaussExp{}\Big[F\,H_{(\ga_1,i)(\ga_2,i)(\ga_3,k)(\ga_4,k)}\big(\,\cdot\,;I_m\otimes\bar\Si^\ell\big)\Big]+O\left(\frac1{n^2}\right),
\end{equation}
that is, $\mathbb{E}[F(Z^\ell)]=\int F\,\d \pgl{I_m\otimes\bar\Si^\ell}{n\ka^\ell}{n}+O(n^{-2})$. In particular,
$\mathbb{E}[F(Z^\ell)]=\GaussExp{}[F]+O\big(\frac1n\big)$.
\end{lem}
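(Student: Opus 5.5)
The plan is to rerun the interpolation argument of Section \ref{sec:para-proof} with the bounded test function $f$ replaced by $F$. The only change is that every bound by $\norm{f}_\infty$ becomes a Cauchy--Schwarz bound involving a Gaussian moment of $F$. By conditional Gaussianity (Remark \ref{rem:cond-gauss}), $\mathbb{E}[F(Z^\ell)]=\mathbb{E}[g_F(1)]$ with $g_F(t)=G_F(S_t)$ and $S_t:=I_m\otimes(\bar\Si^\ell+t\Delta^\ell)$. Since $F$ is only continuous, I will never differentiate $F$ itself. Instead I use the Hermite form of the derivatives (Remark \ref{rem:G-derivatives-hermite}), which only needs $G_F$ smooth on positive definite matrices; that holds because the Gaussian density is smooth in $S$ and $F$ is polynomially bounded. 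Concretely,
\[
\frac{d^r}{dt^r}G_F(S_t)=\frac1{2^r}\sum\Big(\prod_{i=1}^r D_{a_ib_i}\Big)\,\mathbb{E}_{X\sim\mathcal N(0,S_t)}\big[F(X)H_{a_1b_1\cdots a_rb_r}(X;S_t)\big].
\]
This identity can be justified for continuous $F$ by approximating $F$ by smooth functions with uniform polynomial bounds and passing to the limit via dominated convergence.

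\emph{Derivative bounds.} By Cauchy--Schwarz, each expectation above is at most $(\mathbb{E}F(X)^2)^{1/2}(\mathbb{E}H(X;S_t)^2)^{1/2}$. The $L^2$ version of the proof of Lemma \ref{lem:hermite-moment-bound} gives $(\mathbb{E}H^2)^{1/2}\le c\,\lambda_{\min}(S_t)^{-r}$. Polynomial growth $|F(x)|\le c(1+|x|^p)$ gives $\mathbb{E}F(X)^2\le c(1+\norm{S_t}^p)$. So on the good event $\mathcal G:=\{\norm{\Delta^\ell}\le \lambda_*/2\}$, where $\lambda_*=\lambda_{\min}(\bar\Si^\ell)\ge\lambda_0$ by Lemma \ref{lem:hanin-estimates}(a), we have $\lambda_{\min}(S_t)\ge\lambda_*/2$ and $\norm{S_t}\le \norm{\bar\Si^\ell}+\lambda_*/2=O(1)$ for all $t\in[0,1]$. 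Hence $|g_F^{(r)}(t)|\le C_r\norm{\Delta^\ell}^r$ uniformly there. At $t=0$ the same bound holds on all of the probability space.

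\emph{Splitting the expectation.} I write
\[
\mathbb{E}[g_F(1)]=\sum_{r=0}^3\frac{1}{r!}\mathbb{E}[g_F^{(r)}(0)]+\mathbb{E}\Big[\Big(g_F(1)-\sum_{r\le3}\tfrac{g_F^{(r)}(0)}{r!}\Big)(\mathbf 1_{\mathcal G}+\mathbf 1_{\mathcal G^c})\Big].
\]
The four terms in the first sum are handled as follows.
\begin{itemize}
\item The $r=0$ term is $\GaussExp{}[F]$.
\item The $r=1$ term vanishes, because $g_F'(0)$ is linear in $\Delta^\ell$ with deterministic coefficients (as in Lemma \ref{lem:g-prime-mean-zero}).
\item The $r=2$ term equals $\frac18\sum T^\ell_{abcd}\,\GaussExp{}[F H_{abcd}]$, exactly as in Lemma \ref{lem:g-second-deriv} and Proposition \ref{prop:second-order-is-PG}. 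Substituting \eqref{eq:T-def} together with $\cov(\Si^\ell(\cdot),\Si^\ell(\cdot))=\frac1n C^\ell$ and collapsing the Kronecker deltas gives precisely the displayed $\frac1{8n}$ sum.
\item The $r=3$ term is $O(n^{-2})$ by the argument of Lemma \ref{lem:third-order}, using the Cauchy--Schwarz coefficient bound at $t=0$ and Lemma \ref{lem:hanin-estimates}(c).
\end{itemize}

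\emph{The remainder, which I expect to be the main obstacle.} This is where boundedness of $f$ was used before, and here it must be replaced by moment control. On $\mathcal G$, Taylor's theorem plus the derivative bound gives $C\norm{\Delta^\ell}^4$, whose expectation is $O(n^{-2})$ by Lemma \ref{lem:hanin-estimates}(b). On $\mathcal G^c$, I bound each piece by Cauchy--Schwarz against $\mathbb{P}(\mathcal G^c)^{1/2}$.
\begin{itemize}
\item For the $g_F(1)$ piece: $|g_F(1)|\le c(1+\norm{\Si^\ell}^{p})$ even when $\Si^\ell$ is singular, and $\mathbb{E}\norm{\Si^\ell}^{2p}$ is bounded uniformly in $n$ by Lemma \ref{lem:hanin-estimates}(b).
\item For the Taylor terms: $|g_F^{(r)}(0)|\le C\norm{\Delta^\ell}^r$, which also has bounded second moments.
\end{itemize}
Markov's inequality with $q=8$ in Lemma \ref{lem:hanin-estimates}(b) gives $\mathbb{P}(\mathcal G^c)\le (2/\lambda_0)^8\,\mathbb{E}\norm{\Delta^\ell}^8=O(n^{-4})$, so the bad-event contribution is $O(n^{-2})$. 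Collecting terms yields \eqref{eq:PG-expectation-poly}. The final claim $\mathbb{E}[F(Z^\ell)]=\GaussExp{}[F]+O(1/n)$ follows because the $\frac1{8n}$ term is $O(1/n)$: $C^\ell=O(1)$ by Lemma \ref{lem:hanin-estimates}(b), and the Gaussian--Hermite expectations are bounded.
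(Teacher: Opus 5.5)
Your proposal is correct, but it takes a different route from the paper.

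\textbf{The paper's route.} The paper does not rerun the interpolation for unbounded $F$. It imports \cite[Theorem 4]{Hanin2024} with $r=0$, $q_*=1$. That theorem already gives $\mathbb{E}[F(Z^\ell)]$, up to $O(n^{-2})$, as $\GaussExp{}[F]$, plus a term linear in $\Delta^\ell$ that vanishes, plus $\frac18\sum_{a,b,c,d}T^\ell_{abcd}\,\GaussExp{}[\partial_a\partial_b\partial_c\partial_dF]$ with weak derivatives. The paper's only work is to move these derivatives onto the Gaussian density via Lemma~\ref{lem:gaussian-ibp} and to collapse the Kronecker deltas in $T^\ell$.

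\textbf{Your route.} You instead make Section~\ref{sec:para-proof} self-sufficient for polynomially bounded $F$. Derivatives only ever act on the Gaussian density. You replace $\norm{f}_\infty$ by Cauchy--Schwarz, using the $L^2$ Hermite bound and Gaussian moments of $F$. Case~2 of Lemma~\ref{lem:taylor-remainder} used $\abs{g_f(1)}\leq\norm{f}_\infty$ in an essential way, so you switch to a probabilistic good/bad split. There you pay $\mathbb{P}(\mathcal G^c)^{1/2}=O(n^{-2})$ via Markov's inequality with $q=8$.

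\textbf{What your route buys.}
\begin{itemize}
\item It uses only the moment estimates of Lemma~\ref{lem:hanin-estimates}.
\item It avoids weak derivatives and the need to match the hypotheses of an external theorem.
\item It makes the dependence of the error constant on the growth of $F$ explicit.
\item It never actually uses continuity of $F$: measurability plus polynomial growth suffices.
\end{itemize}

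\textbf{What it costs.} You need higher moments of $\norm{\Delta^\ell}$: order $8$ for the bad event, plus an order comparable to the growth exponent of $F$ to bound $\mathbb{E}[g_F(1)^2]$. So you lose the economy noted in Remark~\ref{rem:localization-comparison}, where only $\mathbb{E}\norm{\Delta^\ell}^{N+1}$ was needed. The paper's route is much shorter, but entirely outsourced.

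\textbf{A minor simplification.} Your smoothing argument for the Hermite derivative formula is unnecessary. The identities $\partial_t\varphi_{S_t}=\frac12\sum_{a,b}D_{ab}\,\partial_a\partial_b\varphi_{S_t}$ and $\partial_{a_1}\cdots\partial_{a_{2r}}\varphi_S=H_{a_1\cdots a_{2r}}(\cdot\,;S)\,\varphi_S$ give it directly, by differentiating under the integral, for any measurable polynomially bounded $F$.
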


\begin{proof}
By \cite[Theorem 4]{Hanin2024} with $r=0$ and $q_*=1$, applied with the dictionary of the proof of Lemma \ref{lem:hanin-estimates},
$\mathbb{E}[F(Z^\ell)]$ is equal, up to an error of order $n^{-2}$, to the sum of three terms: 1. the Gaussian expectation $\GaussExp{}[F]$, 2. a
term linear in the fluctuation $\Delta^\ell$, which vanishes since $\mathbb{E}[\Delta^\ell]=0$, and 3. the term
$\frac18\sum_{a,b,c,d}T^\ell_{abcd}\,\GaussExp{}\big[\partial_a\partial_b\partial_c\partial_dF\big]$, where $T^\ell$ is the covariance tensor of
\eqref{eq:T-def} and the derivatives are taken in the weak sense. This last term is the same expression that appears in Lemma
\ref{lem:g-second-deriv}. Moving the derivatives onto the Gaussian density as in Lemma \ref{lem:gaussian-ibp} turns it into
$\frac18\sum_{a,b,c,d}T^\ell_{abcd}\,\GaussExp{}\big[F\,H_{abcd}(\,\cdot\,;I_m\otimes\bar\Si^\ell)\big]$, and collapsing the Kronecker deltas in
$T^\ell_{(\ga_1,i)(\ga_2,j)(\ga_3,k)(\ga_4,l)}=\frac1n\de_{ij}\de_{kl}C^\ell(\ga_1,\ga_2;\ga_3,\ga_4)$ gives \eqref{eq:PG-expectation-poly}. The
identification with the \pg\ law follows from Corollary \ref{cor:PG-two-forms}, as in the proof of Proposition
\ref{prop:second-order-is-PG}. Finally, $C^\ell$ is bounded uniformly in $n$ by Lemma \ref{lem:hanin-estimates}(b), so the correction
term is of order $\frac1n$.
\end{proof}

\subsection{Spliting Hermite polynomials}

 The covariance is $ I_m\otimes\bar\Si^\ell$ which is block
diagonal. Under the Gaussian law $\GaussExp{}$, this means the $m$ neurons are independent Gaussian vectors of size $|A|$. The Hermite polynomials inherit this structure, as the following lemma makes precise

\begin{lem}[Hermite polynomials split over independent neurons]
\label{lem:hermite-splits}
Let $S$ be a positive definite $A\times A$ matrix and $m\geq1$. For any indices $(\ga_1,i_1),\ldots,(\ga_k,i_k)\in A\times[m]$ and any
$x=(x_1,\ldots,x_m)$ with $x_i\in\bR^A$,
\[
H_{(\ga_1,i_1)\cdots(\ga_k,i_k)}\big(x;I_m\otimes S\big)=\prod_{i=1}^m H_{\set{\ga_r\,:\,i_r=i}}\big(x_i;S\big),
\]
where $H_{\set{\ga_r:i_r=i}}(x_i;S)$ is the Hermite polynomial in the variable $x_i$ of those indices that belong to neuron $i$, and is
equal to $1$ if there are none.
\end{lem}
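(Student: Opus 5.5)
The plan is to work directly from the Gaussian integral formula for the Hermite polynomials given in Definition \ref{def:PG}, together with the block structure of $I_m\otimes S$. With $V=I_m\otimes S$ we have $V^{-1}=I_m\otimes S^{-1}$. Hence:
\begin{itemize}
\item the vector $V^{-1}x$ has blocks $(S^{-1}x_1,\ldots,S^{-1}x_m)$, so its $(\ga,i)$ component is $(S^{-1}x_i)_\ga$;
\item a Gaussian $Z\sim\mathcal{N}(0,V^{-1})$ can be realized as $Z=(Z_1,\ldots,Z_m)$ with the $Z_i\sim\mathcal{N}(0,S^{-1})$ independent, since the covariance is block diagonal with zero off-diagonal blocks.
\end{itemize}
In the index-list notation $H_{(\ga_1,i_1)\cdots(\ga_k,i_k)}=H_{\vec e_{(\ga_1,i_1)}+\cdots+\vec e_{(\ga_k,i_k)}}$, the formula reads
\[
H_{(\ga_1,i_1)\cdots(\ga_k,i_k)}(x;V)=\mathbb{E}\Big[\prod_{r=1}^k\Big((S^{-1}x_{i_r})_{\ga_r}+\imath (Z_{i_r})_{\ga_r}\Big)\Big].
\]

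The key step is to regroup the product over $r$ according to the neuron label $i_r$. This writes it as $\prod_{i=1}^m\prod_{r:\,i_r=i}\big((S^{-1}x_i)_{\ga_r}+\imath(Z_i)_{\ga_r}\big)$. The $i$-th factor is a function of $Z_i$ alone, and the $Z_i$ are independent, so the expectation factors into a product of expectations. The $i$-th expectation is exactly the Gaussian integral formula for $H_{\set{\ga_r:i_r=i}}(x_i;S)$. When no index belongs to neuron $i$, the corresponding factor is an empty product, and its expectation is $1$. This gives the claimed identity.

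As a cross-check, the same identity follows from the derivative form $H_{\vec k}\vp_V=(-1)^{|\vec k|}\partial_{\vec k}\vp_V$ in \eqref{eq:H_is_iu_to_the_n}. The density factors as $\vp_{I_m\otimes S}(x)=\prod_i\vp_S(x_i)$, and the derivatives in block $i$ act only on the $i$-th factor, so both sides agree.

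The only point requiring care is bookkeeping rather than analysis. One must check that the index-list convention (repeated indices allowed, so that multiplicities become exponents) is compatible with the regrouping. It is, because the Gaussian integral formula is stated as a product over the list, so repeated indices automatically yield powers.
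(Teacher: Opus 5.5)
Your proposal is correct and follows the paper's proof: both apply the Gaussian integral formula with $V^{-1}=I_m\otimes S^{-1}$, use that the neuron blocks of $Z\sim\mathcal{N}(0,V^{-1})$ are independent, and regroup the product by neuron so the expectation factorizes. Your cross-check via the derivative form, using $\vp_{I_m\otimes S}(x)=\prod_i\vp_S(x_i)$, is an equally valid one-line alternative that the paper does not spell out.
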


\begin{proof}
By the Gaussian integral formula \cite[Definition 1.4, eq. (7)]{NicaOrtmann2026} with $V=I_m\otimes S$,
\[
H_{(\ga_1,i_1)\cdots(\ga_k,i_k)}(x;V)=\mathbb{E}_{Z\sim\mathcal{N}(0,V^{-1})}\Big[\prod_{r=1}^k\Big(\big(V^{-1}x\big)_{(\ga_r,i_r)}+\imath Z_{(\ga_r,i_r)}\Big)\Big].
\]
Since $V^{-1}=I_m\otimes S^{-1}$, we have $(V^{-1}x)_{(\ga,i)}=(S^{-1}x_i)_\ga$, and the blocks $(Z_{(\ga,i)})_{\ga\in A}$ of $Z$ belonging to
different neurons are independent, each with law $\mathcal{N}(0,S^{-1})$. Grouping the factors of the product according to their
neuron, the expectation factorizes into one factor for each neuron, and the factor of neuron $i$ is
$H_{\set{\ga_r:i_r=i}}(x_i;S)$, again by the Gaussian integral formula.
\end{proof}

\subsection{Expectations of one neuron and covariances of two neurons}

We now compute the three kinds of quantities that $\bar\Si^{\ell+1}$ and $C^{\ell+1}$ are made of. In the following, $f$, $f_1$ and $f_2$
denote continuous polynomially bounded functions on $\bR^A$, which we apply to single neurons; in the proof of Theorem \ref{thm:main}
they will be products of the form $\vp_\al\vp_\be$. We write $\sum_{\vec\ga}$ for the sum over $\ga_1,\ga_2,\ga_3,\ga_4\in A$ and
$C^\ell(\vec\ga)=C^\ell(\ga_1,\ga_2;\ga_3,\ga_4)$.

\begin{lem}[Expectations of one neuron]
\label{lem:one-neuron}
\[
\mathbb{E}\big[f(Z^\ell_1)\big]=\GaussExp{}\big[f\big]+\frac1{8n}\sum_{\vec\ga}C^\ell(\vec\ga)\,\GaussExp{}\big[f\,H_{\ga_1\ga_2\ga_3\ga_4}\big]+O\left(\frac1{n^2}\right).
\]
\end{lem}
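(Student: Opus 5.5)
This is the special case $m=1$ of Lemma \ref{lem:PG-expectation-poly}, so I will specialize that lemma rather than redo any analysis. Set $m=1$ and $F:=f$, viewed as a function on $\bR^{A\times[1]}\cong\bR^A$. It is continuous and polynomially bounded, so the hypotheses of Lemma \ref{lem:PG-expectation-poly} hold. The vector $Z^\ell$ with $m=1$ is exactly the first neuron $Z^\ell_1=(z^\ell_{\ga,1})_{\ga\in A}$, so the left-hand side of \eqref{eq:PG-expectation-poly} is $\mathbb{E}[f(Z^\ell_1)]$.

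On the right-hand side the double sum over $i,k\in[m]=\{1\}$ collapses to the single term $i=k=1$. The covariance $I_1\otimes\bar\Si^\ell$ is just $\bar\Si^\ell$. Hence $\GaussExp{}[F]$ is the single-neuron Gaussian expectation $\GaussExp{}[f]$, and the Hermite polynomial becomes
\[
H_{(\ga_1,1)(\ga_2,1)(\ga_3,1)(\ga_4,1)}(\,\cdot\,;I_1\otimes\bar\Si^\ell)=H_{\ga_1\ga_2\ga_3\ga_4}(\,\cdot\,;\bar\Si^\ell).
\]
This identification can also be read off from Lemma \ref{lem:hermite-splits} with $m=1$, where every index belongs to neuron $1$ and the product has a single factor. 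Substituting these gives exactly the stated formula, with the same $O(n^{-2})$ error.

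There is essentially no obstacle; the only care point is bookkeeping. I need to check that the prefactor $\frac1{8n}$ together with the normalization $C^\ell=n\cov[\Si^\ell,\Si^\ell]$ matches what Lemma \ref{lem:PG-expectation-poly} uses, namely $T^\ell=\frac1n\de\de C^\ell$. I also need to confirm that the error constant is uniform in $n$ for fixed $f$, which is inherited directly from that lemma (ultimately from \cite[Theorem 4]{Hanin2024}).
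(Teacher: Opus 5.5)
Your proposal is correct and is exactly the paper's argument: the paper also proves this lemma by setting $m=1$ in Lemma \ref{lem:PG-expectation-poly}, so that $i=k=1$ and the Hermite polynomial reduces to the single-neuron $H_{\ga_1\ga_2\ga_3\ga_4}$. The normalization you flag for checking is already built into that lemma, which is stated with the prefactor $\frac{1}{8n}$ in front of $C^\ell$, so nothing further needs to be verified.
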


\begin{proof}
This is Lemma \ref{lem:PG-expectation-poly} with $m=1$, in which case the neuron indices are $i=k=1$ and the Hermite polynomial is the
fourth order Hermite polynomial $H_{\ga_1\ga_2\ga_3\ga_4}$ of the single neuron.
\end{proof}

The leading term in Lemma \ref{lem:one-neuron} is what one would obtain if the neuron were exactly Gaussian, and the correction is
the fourth order Hermite term of the \pg\ law. For $f=\vp_\al\vp_\be$ this correction is the reason that, for a non-linear $\vp$ such
as ReLU, the mean covariance differs from the infinite width kernel at order $\frac1n$. For the identity, $\vp_\al\vp_\be$ is a polynomial
of degree two, which is orthogonal to the fourth order Hermite polynomials, and the correction vanishes.

\begin{lem}[Covariance within one neuron]
\label{lem:same-neuron}
\[
\cov\big[f_1(Z^\ell_1),\,f_2(Z^\ell_1)\big]=\GaussExp{}\big[f_1f_2\big]-\GaussExp{}\big[f_1\big]\,\GaussExp{}\big[f_2\big]+O\left(\frac1n\right).
\]
\end{lem}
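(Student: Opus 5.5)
The plan is to apply the one-neuron expansion, Lemma \ref{lem:one-neuron} (equivalently Lemma \ref{lem:PG-expectation-poly} with $m=1$), three times, and then collect the errors. We write
\[
\cov\big[f_1(Z^\ell_1),f_2(Z^\ell_1)\big]=\mathbb{E}\big[(f_1f_2)(Z^\ell_1)\big]-\mathbb{E}\big[f_1(Z^\ell_1)\big]\,\mathbb{E}\big[f_2(Z^\ell_1)\big].
\]
Each of $f_1f_2$, $f_1$ and $f_2$ is continuous and polynomially bounded, since products of such functions are again such. So the lemma applies to each of the three expectations on the right.

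Since only an $O(\frac1n)$ error is claimed, we use the weak form stated at the end of Lemma \ref{lem:PG-expectation-poly}:
\[
\mathbb{E}[F(Z^\ell_1)]=\GaussExp{}[F]+O\left(\tfrac1n\right).
\]
This holds because the Hermite correction term carries an explicit factor $\frac1{8n}$. Its coefficient $C^\ell(\vec\ga)$ is bounded uniformly in $n$ by Lemma \ref{lem:hanin-estimates}(b). The Gaussian expectations $\GaussExp{}[F H_{\ga_1\ga_2\ga_3\ga_4}]$ are bounded uniformly in $n\geq n_0$, by Cauchy--Schwarz together with two facts: the polynomial growth of $F$, and $\lambda_{\min}(\bar\Si^\ell)\geq\lambda_0$ from Lemma \ref{lem:hanin-estimates}(a). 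The latter also needs an upper bound on $\norm{\bar\Si^\ell}$, which holds because $\bar\Si^\ell\to K^\ell$.

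Substituting the three expansions, the product term becomes
\[
\big(\GaussExp{}[f_1]+O(\tfrac1n)\big)\big(\GaussExp{}[f_2]+O(\tfrac1n)\big)=\GaussExp{}[f_1]\,\GaussExp{}[f_2]+O\left(\tfrac1n\right).
\]
For this step we need $\GaussExp{}[f_i]$ to be bounded uniformly in $n$, which follows from the same covariance bounds. Subtracting the product term from the expansion of $\mathbb{E}[(f_1f_2)(Z^\ell_1)]$ gives the claim.

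The only real obstacle is making sure that every implicit constant is uniform in $n$. This comes down to the uniform two-sided control of $\bar\Si^\ell$ for $n\geq n_0$. Once that is in place, the rest is bookkeeping with the $O(\frac1n)$ terms.
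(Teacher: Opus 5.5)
Your proposal is correct and matches the paper's proof: the paper simply applies the weak form $\mathbb{E}[F(Z^\ell)]=\GaussExp{}[F]+O(\frac1n)$ of Lemma \ref{lem:PG-expectation-poly} with $m=1$ to $f_1f_2$, $f_1$ and $f_2$, and then subtracts. Your extra remarks on uniformity in $n$ only make explicit what the paper leaves implicit. These are the uniform boundedness of $C^\ell$, and the two-sided control of $\bar\Si^\ell$ for $n\geq n_0$, which keeps the Gaussian and Hermite expectations bounded.
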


\begin{proof}
Apply the last statement of Lemma \ref{lem:PG-expectation-poly}, with $m=1$, to the three functions $f_1f_2$, $f_1$ and $f_2$.
\end{proof}

Lemma \ref{lem:same-neuron} describes a covariance that is present even when the input to the layer is exactly Gaussian: two products
$f_1$ and $f_2$ of the same Gaussian neuron are in general correlated. In contrast, two \emph{different} Gaussian neurons are
independent, so the covariance between them vanishes to leading order; its value comes entirely from the \pg\ correction, and it is
here that the splitting of Lemma \ref{lem:hermite-splits} enters.

\begin{lem}[Covariance between two different neurons]
\label{lem:different-neurons}
\[
\cov\big[f_1(Z^\ell_1),\,f_2(Z^\ell_2)\big]=\frac1{4n}\sum_{\vec\ga}C^\ell(\vec\ga)\,\GaussExp{}\big[f_1H_{\ga_1\ga_2}\big]\,\GaussExp{}\big[f_2H_{\ga_3\ga_4}\big]+O\left(\frac1{n^2}\right).
\]
\end{lem}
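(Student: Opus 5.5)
The plan is to apply Lemma \ref{lem:PG-expectation-poly} three times: with $m=2$ to the product $F(x_1,x_2)=f_1(x_1)f_2(x_2)$, and with $m=1$ to $f_1$ and to $f_2$ separately (the latter two are exactly Lemma \ref{lem:one-neuron}). We then write $\cov[f_1(Z^\ell_1),f_2(Z^\ell_2)]=\mathbb{E}[f_1(Z^\ell_1)f_2(Z^\ell_2)]-\mathbb{E}[f_1(Z^\ell_1)]\,\mathbb{E}[f_2(Z^\ell_2)]$, expand each piece, and match terms. Since $F$ is continuous and polynomially bounded, the lemma applies, and all errors are $O(n^{-2})$.

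For the two-neuron expansion, the correction term is
\[
\frac1{8n}\sum_{i,k\in[2]}\sum_{\vec\ga}C^\ell(\vec\ga)\,\GaussExp{}\big[f_1f_2\,H_{(\ga_1,i)(\ga_2,i)(\ga_3,k)(\ga_4,k)}(\,\cdot\,;I_2\otimes\bar\Si^\ell)\big].
\]
By Lemma \ref{lem:hermite-splits}, each Hermite polynomial factors over the two neurons. Under $\GaussExp{}$ with covariance $I_2\otimes\bar\Si^\ell$, the two neurons are independent, so every expectation factors as a product of single-neuron expectations. We split the sum over $(i,k)$ into four cases:
\begin{itemize}
\item $(i,k)=(1,1)$ gives $\GaussExp{}[f_1H_{\ga_1\ga_2\ga_3\ga_4}]\,\GaussExp{}[f_2]$;
\item $(i,k)=(2,2)$ gives $\GaussExp{}[f_1]\,\GaussExp{}[f_2H_{\ga_1\ga_2\ga_3\ga_4}]$;
\item $(i,k)=(1,2)$ gives $\GaussExp{}[f_1H_{\ga_1\ga_2}]\,\GaussExp{}[f_2H_{\ga_3\ga_4}]$;
\item $(i,k)=(2,1)$ gives $\GaussExp{}[f_1H_{\ga_3\ga_4}]\,\GaussExp{}[f_2H_{\ga_1\ga_2}]$.
\end{itemize}
The symmetry $C^\ell(\ga_1,\ga_2;\ga_3,\ga_4)=C^\ell(\ga_3,\ga_4;\ga_1,\ga_2)$ from Definition \ref{def:C}, combined with relabeling the summation indices, shows the last two cases are equal. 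Together they contribute $\frac1{4n}\sum_{\vec\ga}C^\ell(\vec\ga)\GaussExp{}[f_1H_{\ga_1\ga_2}]\GaussExp{}[f_2H_{\ga_3\ga_4}]$. The leading term is $\GaussExp{}[f_1]\GaussExp{}[f_2]$, again by independence.

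For the product of means, Lemma \ref{lem:one-neuron} gives $\mathbb{E}[f_i(Z^\ell_i)]=\GaussExp{}[f_i]+\frac1{8n}\sum_{\vec\ga}C^\ell(\vec\ga)\GaussExp{}[f_iH_{\ga_1\ga_2\ga_3\ga_4}]+O(n^{-2})$; for $i=2$ this also uses exchangeability of neurons. Multiplying the two expansions, the cross terms of order $\frac1{8n}$ reproduce exactly the $(1,1)$ and $(2,2)$ contributions above. The product of the two corrections is $O(n^{-2})$, because $C^\ell$ is bounded uniformly in $n$ (Lemma \ref{lem:hanin-estimates}(b)) and the Gaussian expectations are finite, $\bar\Si^\ell$ having eigenvalues bounded below for $n\ge n_0$. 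The remaining error terms are $O(n^{-2})$ multiplied by quantities that are $O(1)$.

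Subtracting, everything cancels except the $(1,2)$ and $(2,1)$ terms, which is the claimed formula. The main point requiring care is the justification that Lemma \ref{lem:PG-expectation-poly} applies with $m=2$ to the product function and that the Hermite factorization is compatible with the pair structure $(\ga_1,i)(\ga_2,i)(\ga_3,k)(\ga_4,k)$. In the mixed case the factor on neuron $1$ is precisely the second-order Hermite polynomial $H_{\ga_1\ga_2}$, not a mixture of indices. This follows directly from Lemma \ref{lem:hermite-splits}, so I expect the obstacle to be bookkeeping rather than analysis.
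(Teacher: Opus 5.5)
Your proposal is correct and follows the paper's proof essentially step by step. You expand $\mathbb{E}[f_1(Z^\ell_1)f_2(Z^\ell_2)]$ via Lemma~\ref{lem:PG-expectation-poly} with $m=2$, split the Hermite polynomial over the two neurons with Lemma~\ref{lem:hermite-splits}, and merge the $(1,2)$ and $(2,1)$ cases using the pair symmetry of $C^\ell$. You then cancel the same-neuron terms against the product of the one-neuron expansions, exactly as the paper does. Your remark that the product of the two $\frac1{8n}$ corrections is $O(n^{-2})$ fills in a step the paper leaves implicit. For that step to be uniform in $n$, the Gaussian expectations also need $\bar\Si^\ell$ bounded above, which holds since $\bar\Si^\ell\to K^\ell$.
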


\begin{proof}
We apply Lemma \ref{lem:PG-expectation-poly} with $m=2$ to the function $F(x_1,x_2)=f_1(x_1)f_2(x_2)$ of two neurons. In the correction
term of \eqref{eq:PG-expectation-poly}, the four indices of the Hermite polynomial are $(\ga_1,i),(\ga_2,i),(\ga_3,k),(\ga_4,k)$, with neuron
labels $i,k\in\set{1,2}$. By Lemma \ref{lem:hermite-splits}, this Hermite polynomial splits according to which neuron each index belongs
to, and since $X_1$ and $X_2$ are independent, each expectation then factorizes over the two neurons. There are two kinds of terms.
\begin{itemize}
\item If $i=k$, all four indices belong to the same neuron, and the fourth order Hermite polynomial stays on that neuron. For
$i=k=1$ the term is $\GaussExp{}\big[f_1H_{\ga_1\ga_2\ga_3\ga_4}\big]\,\GaussExp{}\big[f_2\big]$, and for $i=k=2$ it is
$\GaussExp{}\big[f_1\big]\,\GaussExp{}\big[f_2H_{\ga_1\ga_2\ga_3\ga_4}\big]$.
\item If $i\neq k$, two indices belong to each neuron, and the fourth order Hermite polynomial splits into a product of two second
order Hermite polynomials, one on each neuron: for $(i,k)=(1,2)$,
\[
H_{(\ga_1,1)(\ga_2,1)(\ga_3,2)(\ga_4,2)}\big(X;I_2\otimes\bar\Si^\ell\big)=H_{\ga_1\ga_2}(X_1)\,H_{\ga_3\ga_4}(X_2),
\]
and the term is $\GaussExp{}\big[f_1H_{\ga_1\ga_2}\big]\,\GaussExp{}\big[f_2H_{\ga_3\ga_4}\big]$. The case $(i,k)=(2,1)$ gives the same contribution
after relabelling $(\ga_1,\ga_2)\leftrightarrow(\ga_3,\ga_4)$, by the symmetry $C^\ell(\ga_1,\ga_2;\ga_3,\ga_4)=C^\ell(\ga_3,\ga_4;\ga_1,\ga_2)$.
\end{itemize}
Altogether,
\begin{align*}
\mathbb{E}\big[f_1(Z^\ell_1)f_2(Z^\ell_2)\big]
&=\GaussExp{}[f_1]\,\GaussExp{}[f_2]
+\frac1{8n}\sum_{\vec\ga}C^\ell(\vec\ga)\Big(\GaussExp{}\big[f_1H_{\ga_1\ga_2\ga_3\ga_4}\big]\,\GaussExp{}[f_2]+\GaussExp{}[f_1]\,\GaussExp{}\big[f_2H_{\ga_1\ga_2\ga_3\ga_4}\big]\Big)\\
&\quad+\frac1{4n}\sum_{\vec\ga}C^\ell(\vec\ga)\,\GaussExp{}\big[f_1H_{\ga_1\ga_2}\big]\,\GaussExp{}\big[f_2H_{\ga_3\ga_4}\big]+O\left(\frac1{n^2}\right).
\end{align*}
On the other hand, multiplying the expansions of Lemma \ref{lem:one-neuron} for $f_1$ and for $f_2$ gives
\[
\mathbb{E}\big[f_1(Z^\ell_1)\big]\,\mathbb{E}\big[f_2(Z^\ell_2)\big]
=\GaussExp{}[f_1]\,\GaussExp{}[f_2]
+\frac1{8n}\sum_{\vec\ga}C^\ell(\vec\ga)\Big(\GaussExp{}\big[f_1H_{\ga_1\ga_2\ga_3\ga_4}\big]\,\GaussExp{}[f_2]+\GaussExp{}[f_1]\,\GaussExp{}\big[f_2H_{\ga_1\ga_2\ga_3\ga_4}\big]\Big)
+O\left(\frac1{n^2}\right).
\]
Subtracting, the terms in which the Hermite polynomial stays on one neuron cancel exactly, and only the terms coupling the two
neurons remain, which is the claim.
\end{proof}

The covariance in Lemma \ref{lem:different-neurons} is of order $\frac1n$, as expected for two neurons that are uncorrelated but not
independent. It is built from a pair of second order Hermite polynomials, one on each neuron, which arise from splitting the fourth
order Hermite polynomial of the \pg\ law between the two neurons.
\subsection{Proof of Theorem \ref{thm:main}}

\begin{proof}[Proof of Theorem \ref{thm:main}]
Recall the definition of the random covariance matrix of layer $\ell+1$,
\[
\Si^{\ell+1}(\al,\be)=\frac1n\sum_{j=1}^n\vp\big(z^\ell_{\al,j}\big)\vp\big(z^\ell_{\be,j}\big),
\]
and recall that the neurons $Z^\ell_1,\ldots,Z^\ell_n$ of layer $\ell$ are exchangeable.

\emph{Mean covariance.} Taking expectations, each of the $n$ terms in the definition of $\Si^{\ell+1}$ has the same expectation, so
\[
\bar\Si^{\ell+1}(\al,\be)=\mathbb{E}\big[\Si^{\ell+1}(\al,\be)\big]=\mathbb{E}\big[\vp(z^\ell_{\al,1})\vp(z^\ell_{\be,1})\big].
\]
This is the expectation of the function $f=\vp_\al\vp_\be$ of the single neuron $Z^\ell_1$, so by Lemma \ref{lem:one-neuron},
\[
\bar\Si^{\ell+1}(\al,\be)=\GaussExp{}\big[\vp_\al\vp_\be\big]+\frac1{8n}\sum_{\vec\ga}C^\ell(\vec\ga)\,\GaussExp{}\big[\vp_\al\vp_\be\,H_{\ga_1\ga_2\ga_3\ga_4}\big]
+O\left(\frac1{n^2}\right),
\]
which is \eqref{eq:mainSigmaExpansion}.

\emph{Fourth cumulant: splitting into one neuron and two neurons.} Write $f_1=\vp_\al\vp_{\al'}$ and $f_2=\vp_\be\vp_{\be'}$, so that
$\Si^{\ell+1}(\al,\al')=\frac1n\sum_jf_1(Z^\ell_j)$ and $\Si^{\ell+1}(\be,\be')=\frac1n\sum_kf_2(Z^\ell_k)$. By the definition of $C^{\ell+1}$ and
bilinearity of the covariance,
\[
C^{\ell+1}(\al,\al';\be,\be')=n\cov\big[\Si^{\ell+1}(\al,\al'),\Si^{\ell+1}(\be,\be')\big]
=\frac1n\sum_{j,k=1}^n\cov\big[f_1(Z^\ell_j),\,f_2(Z^\ell_k)\big].
\]
By exchangeability, the $n$ terms with $j=k$ are all equal to the covariance within the first neuron, and the $n(n-1)$ terms with
$j\neq k$ are all equal to the covariance between the first two neurons. Hence
\begin{equation}
\label{eq:C-split-proof}
C^{\ell+1}(\al,\al';\be,\be')=\underbrace{\cov\big[f_1(Z^\ell_1),\,f_2(Z^\ell_1)\big]}_{\text{one neuron}}
+(n-1)\underbrace{\cov\big[f_1(Z^\ell_1),\,f_2(Z^\ell_2)\big]}_{\text{two different neurons}}.
\end{equation}

\emph{The term within one neuron.} By Lemma \ref{lem:same-neuron},
\begin{equation}
\label{eq:proof-same}
\cov\big[f_1(Z^\ell_1),\,f_2(Z^\ell_1)\big]
=\GaussExp{}\big[\vp_\al\vp_{\al'}\vp_\be\vp_{\be'}\big]-\GaussExp{}\big[\vp_\al\vp_{\al'}\big]\,\GaussExp{}\big[\vp_\be\vp_{\be'}\big]+O\left(\frac1n\right).
\end{equation}

\emph{The term between two different neurons.} By Lemma \ref{lem:different-neurons},
\[
\cov\big[f_1(Z^\ell_1),\,f_2(Z^\ell_2)\big]
=\frac1{4n}\sum_{\vec\ga}C^\ell(\vec\ga)\,\GaussExp{}\big[\vp_\al\vp_{\al'}\,H_{\ga_1\ga_2}\big]\,\GaussExp{}\big[\vp_\be\vp_{\be'}\,H_{\ga_3\ga_4}\big]+O\left(\frac1{n^2}\right).
\]
In \eqref{eq:C-split-proof} this is multiplied by $n-1$. Since $\frac{n-1}{4n}=\frac14-\frac1{4n}$, and the sum over $\vec\ga$ is bounded
uniformly in $n$ (because $C^\ell$ is, by Lemma \ref{lem:hanin-estimates}(b)), the term $-\frac1{4n}$ only contributes at order $\frac1n$, and
so does $(n-1)\cdot O(n^{-2})$. Therefore
\begin{equation}
\label{eq:proof-different}
(n-1)\cov\big[f_1(Z^\ell_1),\,f_2(Z^\ell_2)\big]
=\frac14\sum_{\vec\ga}C^\ell(\vec\ga)\,\GaussExp{}\big[\vp_\al\vp_{\al'}\,H_{\ga_1\ga_2}\big]\,\GaussExp{}\big[\vp_\be\vp_{\be'}\,H_{\ga_3\ga_4}\big]+O\left(\frac1n\right).
\end{equation}

\emph{Combining.} Substituting \eqref{eq:proof-same} and \eqref{eq:proof-different} into \eqref{eq:C-split-proof},
\begin{align*}
C^{\ell+1}(\al,\al';\be,\be')
&=\GaussExp{}\big[\vp_\al\vp_{\al'}\vp_\be\vp_{\be'}\big]-\GaussExp{}\big[\vp_\al\vp_{\al'}\big]\,\GaussExp{}\big[\vp_\be\vp_{\be'}\big]\\
&\quad+\frac14\sum_{\ga_1,\ga_2,\ga_3,\ga_4\in A}C^\ell(\ga_1,\ga_2;\ga_3,\ga_4)\,\GaussExp{}\big[\vp_\al\vp_{\al'}\,H_{\ga_1\ga_2}\big]\,
\GaussExp{}\big[\vp_\be\vp_{\be'}\,H_{\ga_3\ga_4}\big]+O\left(\frac1n\right),
\end{align*}
which is \eqref{eq:mainC}, as desired. The first line comes from the covariance within one neuron and is present even when layer $\ell$ is
exactly Gaussian. The second line comes from the $n-1$ pairs of different neurons, each of which contributes a covariance of order
$\frac1n$ through a pair of second order Hermite polynomials. 
\end{proof}

%%%%%%%%%%%%%%%%%%%%%%%%%%%%%%%%%%%%%%%%%%%%%%%%%%%%%%%%%%%%%%%%%%%%%%%%%%%%%%%%%%%%%%%%%%%%%%%%%%%%%%%%%%%%%%%%%%%%%%%%%%%%%%%%%%%%%%%%%%%%%%%%%%%%%%%%%%%%%%%%%%%%%%%%%%%%%%%%%%%%%%%%%%%%%%%%%%%%%%%%%%%%%%%%%%%%%%%%%%%%%%%%%%%%%%%%%%%%%%%%%%%%%%%%%%%%%%%%%%%%%%%%%%%%%%%%%%%%%%%%%%%%%%%%

\appendix

    	\section{\Pg\ theory}
	
	\label{sec:pg-theory}
	
	Recall the \pg\ measure with scaling parameter $n$ and configuration ($\kappa$, $k$, $\Sigma$) as the measure $\pglk\Si\ka n k$ on $\bR^d$ with density $\pgdk \Si\ka n k$. Here we discuss some basic properties of this measure.% In Section \ref{subsec:second-proof-exact-general-cumulants} we give the second proof of Theorem \ref{thm:exact-general-cumulants}.
    
% \begin{rem}
%     \label{rem:PG-law}
%         In general the \pg\ measure is a finite signed measure. However,  orthogonality of the Hermite polynomials with respect to the Gaussian measure means that it always integrates to 1, see Proposition \ref{prop:integrability}. So if $\pgdk \Si \ka n k(x)\geq 0$ for all $x$ we get a probability measure. (For $n$ large enough this is always the case.) If so, we also talk about the \emph{\pg\ distribution} or \emph{law} that we denote by $\pglk\Si \ka n k$, or $\pgl\Si \ka n$ when $k=4$. 
%     \end{rem}

\label{sec:basic-pg}
        	\begin{rem}
	   	\label{rem:Fourier-PG}
    By the theory of multi-dimensional Hermite polynomials \cite{NicaOrtmann2026}, the \pg\ measure can be equivalently characterised as having Fourier transform (or characteristic function, if it is a probability measure)
\begin{align}
	 \label{eq:Fourier-PG}
	\pgfourierk \Si\ka n k (\imath \vec{u}) %&= \exp\left( -n \frac{1}{2} \left(\frac{\vec{u}}{\sqrt{n}}\right)^T \Sigma \left(\frac{\vec{u}}{\sqrt{n}}\right) + n \sum_{\substack{k_1,\ldots,k_d\\ \sum k_i = k}} \frac{\kappa_{k_1,\ldots,k_d}}{k_1!\cdots k_d!} \left(\imath \frac{u_1}{\sqrt{n}}\right)^{k_1}\cdots \left(\imath \frac{u_d}{\sqrt{n}}\right)^{k_d} + n \epsilon\left(\frac{\vec{u}}{\sqrt{n}}\right) \right) \nonumber \\ 
	&= \exp\left( -\frac{1}{2}\vec{u}^T \Sigma \vec{u} \right) \left(1+ \frac{1}{\sqrt{n}^{k-2}} \sum_{\substack{k_1,\ldots,k_d\\ \sum k_i = k}} \frac{\kappa_{k_1,\ldots,k_d}}{k_1!\cdots k_d!} (\imath \vec{u})^{\vec{k}} \right) 
\end{align}
where we use the notation $(i\vec u)^{\vec k}=\prod_{a=1}^d (iu_a)^{k_a}$.
	\end{rem}
	% [\textbf{TODO: } keep this remark or or remove it? If we use Hanin-style proof we might not need this.	Under mild conditions, such as existence of a second moment, this can be equivalently stated in terms of characteristic functions, see also Section  \ref{sec:pg-theory} below]

% ------
    
%         \textbf{TODO}: I took out the part about eventual uniform positivity, I don't think it holds in general: take $d=1$  and $k=4$ with $\kappa_4<0$. Then we get
%         $$\phi(x)\rb{1+\frac1{4!n} \kappa_4 H_4(x)}$$
%         and so the sign only depends on the term inside the bracket. Now $H_4(x)=x^4-6x^2+3$, so $1+\frac1n \kappa_4 H_4(x)$ is a quartic with negative coefficient.

%         Can go a bit further: let $Q(x)$ be the term inside the brackets, write $t=x^2$. Let's take a Bernoulli $\pm 1$ random variable which has $\ka_4=-2$. Then
%         $$
%         Q(x) = 1-\frac{t^2-6t+3}{12n} = 12n\rb{-t^2+6t+12n-3}
%         $$
%         which has roots at $t_\pm = 3\pm \sqrt{12n+6}$. In other words, the density is negative for $x\geq \sqrt{3+ \sqrt{12n+6}}$.

% -------        
    \begin{prop}
    \label{prop:integrability}
        Let $\Si$ be symmetric positive definite and $n,k\in\bN$. Then the function $\pgdk \Sigma\ka n k(\vec{x})$ is integrable and
            \begin{align}
            \label{eq:int-pg}
                \int_{\bR^d} \pgdk \Sigma\ka n k(x)\,\d x = 1.
            \end{align}
            Thus, if $\pgdk \Sigma\ka n k (\vec{x})\geq0$ a.e.  then it is the density of a probability measure (that we call the \pg\ distribution or law).
        %     \item For any configuration ($\kappa$, $k$, $\Sigma$) there exists $N$ such that $\pgdk \Sigma\ka n k$ is a probability measure for every $n\geq N$
        % \end{enumerate}
    \end{prop}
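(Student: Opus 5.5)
The plan is to write the density as the Gaussian density $\vp_\Si$ times a polynomial, and then treat integrability and total mass separately. By \eqref{eq:paraGaussian-density},
\[
\pgdk \Sigma\ka n k(\vec x)=\vp_\Si(\vec x)\big(1+P(\vec x)\big),\qquad P(\vec x):=\frac{1}{\sqrt n^{k-2}}\sum_{k_1+\cdots+k_d=k}\frac{\ka_{k_1,\ldots,k_d}}{k_1!\cdots k_d!}H_{k_1,\ldots,k_d}(\vec x;\Si),
\]
where $P$ is a polynomial because each multivariable Hermite polynomial is one (Definition \ref{def:PG}).

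\textbf{Integrability.} Since $\Si$ is positive definite, $\vp_\Si(\vec x)\le c\,e^{-\lambda_{\min}(\Si^{-1})\abs{\vec x}^2/2}$. Every polynomial is integrable against such a Gaussian weight, so $\int\abs{\vp_\Si(1+P)}\,\d\vec x<\infty$.

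\textbf{Total mass.} By linearity it suffices to show that $\int \vp_\Si(\vec x)\,\d\vec x=1$, which holds because it is a probability density, and that
\[
\int H_{\vec k}(\vec x;\Si)\,\vp_\Si(\vec x)\,\d\vec x=0\qquad\text{for every }\abs{\vec k}=k\geq 1 .
\]
For the second claim I use the derivative form in \eqref{eq:H_is_iu_to_the_n}: $H_{\vec k}\vp_\Si=(-1)^{\abs{\vec k}}\partial_{\vec k}\vp_\Si$. Since $\abs{\vec k}\geq1$, some coordinate derivative $\partial_{x_a}$ appears. Apply Fubini and integrate first in $x_a$. The inner integrand is $\partial_{x_a}$ of the function $\partial_{\vec k-\vec e_a}\vp_\Si$, which equals a polynomial times $\vp_\Si$ and so vanishes as $\abs{x_a}\to\infty$. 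The inner integral is therefore $0$, and Fubini applies because the integrand is absolutely integrable by the previous step.

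Alternatively, one can evaluate the Fourier transform \eqref{eq:paraGaussian-transform} at $\vec u=0$. Every monomial $(\imath\vec u)^{\vec k}$ with $\abs{\vec k}=k\ge1$ vanishes there, leaving the value $1$. This requires the Fourier formula from \cite{NicaOrtmann2026}, which the paper takes as given. I would present the integration-by-parts argument as the main proof and mention this one as a check.

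\textbf{Conclusion.} If the density is nonnegative almost everywhere, it is a nonnegative integrable function with total integral $1$, and hence the density of a probability measure.

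The only step needing care is the boundary-term and Fubini justification, and Gaussian decay handles it. I expect no real obstacle.
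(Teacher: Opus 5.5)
Your proof is correct. It differs from the paper's only in how the total mass is obtained.

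\emph{Integrability.} The paper argues this exactly as you do: the density is a polynomial times a Gaussian.

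\emph{Total mass.} For $\int \pgdk{\Sigma}{\ka}{n}{k}(\vec x)\,\d\vec x=1$, the paper just cites the orthogonality of the multivariable Hermite polynomials against the constant $\widetilde H_0=1$ (Theorem 2.4 of \cite{NicaOrtmann2026}). You instead prove this special case of orthogonality directly. The derivative representation $H_{\vec k}\varphi_\Sigma=(-1)^{\abs{\vec k}}\partial_{\vec k}\varphi_\Sigma$ from Definition \ref{def:PG} makes each Hermite term a derivative in some coordinate $x_a$ of a polynomial times $\varphi_\Sigma$. Fubini and the fundamental theorem of calculus on $[-R,R]$, with $R\to\infty$, then make that term vanish.

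\emph{Comparison.} The paper's route is a one-line citation, but it depends on an external theorem stated in terms of dual Hermite polynomials. Yours is self-contained: it uses only the definition given in the paper and Gaussian decay. Your Fourier check at $\vec u=0$ via \eqref{eq:paraGaussian-transform} is a third valid route. All three arguments need $\abs{\vec k}=k\geq1$, which you correctly made explicit.
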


    \begin{proof}\ 
    %\begin{enumerate}
    The fact that $\pgdk\Si\ka n k$ is integrable follows from the fact that can be written as a polynomial multiplied with a Gaussian density. Equation \eqref{eq:int-pg} follows from orthogonality of the Hermite polynomials with respect to the Gaussian measure, see Theorem 2.4 of \cite{NicaOrtmann2026}. Concretely, we are applying Theorem 2.4 with $\wt {H}_0=1$.
       % \item \textbf{TODO}. In \cite{NicaOrtmann2026}: (check oscillator): gaussian times Hermite is uniformly bounded.
    %\end{enumerate}
        
    \end{proof}

    \begin{prop}[Projection property]
    \label{prop:projPG}
        Let $I$ be some index set and suppose that  $(z_j\colon j \in I)$ have the \pg\ law of order $k$ with covariance kernel $\Si$, 4-cumulants $\ka$ and scale parameter $n$. Let further $J\subset I$. Then $(z_j\colon j\in J)$ are also \pg\ of order $k$ with kernel $\Si\vert_J$, 4-cumulants $\kappa_{j\colon j\in J}$ and scale parameter $n$
    \end{prop}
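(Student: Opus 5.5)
The natural route is through Fourier transforms. Write $\mu$ for the \pg\ measure $\pglk\Si\ka n k$ on $\bR^I$ and $\mu_J$ for its pushforward under the coordinate projection $\pi_J:\bR^I\to\bR^J$. Since $\mu$ has density a polynomial times a Gaussian, it is a finite signed measure, and so is $\mu_J$. A finite signed measure is determined by its Fourier transform. It therefore suffices to show that the Fourier transform of $\mu_J$ equals the right-hand side of \eqref{eq:paraGaussian-transform} with $\Si\vert_J$ and the restricted cumulants. We should also check that the density of $\mu_J$, namely the marginal integral $\int \pgdk\Si\ka n k(x_J,x_{I\setminus J})\,\d x_{I\setminus J}$, is indeed integrable; this holds by Fubini, using Proposition \ref{prop:integrability}.

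The key step is the identity
\[
\intop_{\bR^J} e^{\imath\langle x_J,u_J\rangle}\,\d\mu_J(x_J)=\intop_{\bR^I} e^{\imath\langle x,\iota(u_J)\rangle}\,\d\mu(x),
\]
where $\iota(u_J)\in\bR^I$ extends $u_J$ by zeros on $I\setminus J$. We evaluate the right-hand side by \eqref{eq:paraGaussian-transform} at $\vec u=\iota(u_J)$. This gives $\exp(-\frac12 u_J^T\Si\vert_J u_J)$ times $1$ plus the cumulant sum. In that sum, every monomial $(\imath\vec u)^{\vec k}$ with some $k_a>0$ for $a\notin J$ vanishes. The only surviving terms are those with multi-index supported on $J$, and their coefficients are exactly the $\kappa_{\vec k}$ restricted to $J$. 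The result is precisely the \pg\ Fourier transform with parameters $(\Si\vert_J,\ka\vert_J,n,k)$. We also need $\Si\vert_J$ to be positive definite, which holds because it is a principal submatrix of a positive definite matrix, so the target \pg\ law is well defined.

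To conclude, Fourier uniqueness for finite signed measures identifies $\mu_J$ with $\pglk{\Si\vert_J}{\ka\vert_J}nk$. If $\mu$ is a probability law, then $\mu_J$, being a marginal of a probability law, is one too.

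There is a direct alternative that avoids Fourier uniqueness: integrate the density \eqref{eq:paraGaussian-density} in the variables $x_{I\setminus J}$, using $H\varphi_\Si=(-1)^{|\vec k|}\partial_{\vec k}\varphi_\Si$. If some derivative falls on an integrated variable, the term integrates to zero, since $\varphi_\Si$ and its derivatives decay. Otherwise the derivatives pass through the integral, and $\int\varphi_\Si\,\d x_{I\setminus J}=\varphi_{\Si\vert_J}$.

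The main obstacle is not conceptual. It is the bookkeeping that the surviving coefficients are exactly the restricted cumulants with the same multinomial normalization $\frac{1}{k_1!\cdots k_d!}$. This holds because those factors only involve indices in $J$ once the others are zero. The other point to handle is justifying the Fourier uniqueness and Fubini steps for signed measures, which is routine given Gaussian decay.
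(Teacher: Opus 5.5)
Your proposal is correct and follows the paper's proof. Both arguments evaluate the characteristic function of the full \pg\ law at the zero-extended vector, note that every monomial $(\imath\vec u)^{\vec k}$ with a nonzero exponent outside $J$ vanishes, and read off the \pg\ transform with parameters $\Sigma|_J$ and the restricted cumulants. Your additions (Fourier uniqueness for finite signed measures, integrability of the marginal, positive-definiteness of the principal submatrix $\Sigma|_J$, and the density-level alternative via $H\varphi_\Sigma=(-1)^{|\vec k|}\partial_{\vec k}\varphi_\Sigma$) are valid details that the paper leaves implicit.
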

    \begin{proof}
        We will use the equivalent definition using the characteristic function, see Remark \ref{rem:Fourier-PG}. The result follows directly by noting that if $\phi$ is the characteristic function of $(z_j\colon j \in I)$, then the  following function is the characteristic function of $(z_j\colon j \in J)$:
        $$(\lambda_j\colon j\in J)\longmapsto \phi(\la^0(I)),$$
        where $\la^0(I)$ is the $I$-vector obtained by `filling in' 0 on every position indexed in $I\setminus J$.
    \end{proof}

        \begin{prop}
        \label{prop:lin-PG}
            Any linear combination of \pg\ random variables is itself \pg. More specifically,  let $\la\in\bR^d$ and suppose that $Z=(Z_1,\ldots,Z_d)$ is distributed according to $\pglk\Si\ka n k$. Then $\hat Z=\la_1Z_1+\ldots+\la_d Z_d$ is distributed according to $\pglk{\hat\Si}{\hat\ka}{n}{k}$, where
            $$\hat\Si=\vec\la^T\Si\vec\la\quad\text{and}\quad\hat\ka = k! \sum_{\substack{k_1,\ldots,k_d\\ \sum k_i = k}} \frac{\kappa_{k_1,\ldots,k_d}}{k_1!\cdots k_d!} \vec \la^{\vec{k}}.$$
        \end{prop}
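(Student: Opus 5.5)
The cleanest route goes through the Fourier characterization of the \pg\ measure in Remark \ref{rem:Fourier-PG}, equivalently \eqref{eq:paraGaussian-transform}, together with uniqueness of Fourier transforms of finite signed measures. The characteristic function of $\hat Z=\langle\vec\la,Z\rangle$ at a scalar frequency $s\in\bR$ is the characteristic function of $Z$ evaluated at $\vec u=s\vec\la$. Here we use that the \pg\ measure is a finite signed measure with total mass one (Proposition \ref{prop:integrability}), so pushforward under the linear map $x\mapsto\langle\vec\la,x\rangle$ is well defined and its Fourier transform is computed this way even when the density is not nonnegative. The Gaussian factor becomes $\exp(-\frac12 s^2\,\vec\la^T\Si\vec\la)=\exp(-\frac12 s^2\hat\Si)$.

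For the correction term, $(\imath s\vec\la)^{\vec k}=(\imath s)^{k}\vec\la^{\vec k}$ since $k_1+\cdots+k_d=k$. The bracket therefore becomes
\[
1+\frac1{\sqrt n^{\,k-2}}\,(\imath s)^k\sum_{\substack{k_1,\ldots,k_d\\ \sum k_i=k}}\frac{\kappa_{k_1,\ldots,k_d}}{k_1!\cdots k_d!}\vec\la^{\vec k}
=1+\frac1{\sqrt n^{\,k-2}}\,\frac{\hat\ka}{k!}(\imath s)^k,
\]
with $\hat\ka$ exactly as in the statement. This is precisely \eqref{eq:paraGaussian-transform} in dimension one with covariance $\hat\Si$ and $k$-cumulant $\hat\ka$, where the only multi-index is $(k)$, carrying the factor $1/k!$.

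It remains to identify the measure from its transform. The one-dimensional measure $\pglk{\hat\Si}{\hat\ka}{n}{k}$ has, by Definition \ref{def:PG}, the same Fourier transform. Since Fourier transforms determine finite signed measures uniquely, the law of $\hat Z$ equals $\pglk{\hat\Si}{\hat\ka}{n}{k}$.

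The only step needing care is the degenerate case $\hat\Si=0$, i.e.\ $\vec\la=0$, since $\Si$ is positive definite. There the one-dimensional \pg\ density is not defined, and the statement should be read with $\hat Z=0$ a point mass. We assume $\vec\la\neq0$, which makes $\hat\Si>0$ and the one-dimensional Hermite density well defined. An alternative check, avoiding Fourier uniqueness, is to complete $\vec\la$ to an invertible linear map and combine a change of variables with Proposition \ref{prop:projPG}. The Fourier argument is shorter, so we take it as the main line.
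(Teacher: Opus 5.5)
Your proposal is correct and is essentially the paper's proof: both compute the characteristic function of $\hat Z$ as that of $Z$ evaluated at $s\vec\lambda$, pull out $(\imath s)^k$ using $k_1+\cdots+k_d=k$, and match the result against the one-dimensional para-Gaussian transform. Your explicit appeal to uniqueness of Fourier transforms of finite signed measures, and your exclusion of the degenerate case $\vec\lambda=0$ (where $\hat\Sigma=0$ and the density is undefined), are small clarifications that the paper leaves implicit.
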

        \begin{proof}
           We will once again appeal to Remark \ref{rem:Fourier-PG}. The characteristic function of $\hat Z$ is given by
           \begin{align*}
               \phi_{\hat Z} &= \phi_Z\rb{t\vec \la} = \exp\set{ -\frac{1}{2}(t\vec \la)^T \Sigma t\vec \la } \left(1+ \frac{1}{\sqrt{n}^{k-2}} \sum_{\substack{k_1,\ldots,k_d\\ \sum k_i = k}} \frac{\kappa_{k_1,\ldots,k_d}}{k_1!\cdots k_d!} (\imath t\vec \la)^{\vec{k}} \right) \\
              & = \exp\set {-\frac{1}{2}\rb{\vec \la^T \Sigma \vec \la} t^2}  \left(1+ \frac{1}{\sqrt{n}^{k-2}}\frac{1}{k!} \rb{ k! \sum_{\substack{k_1,\ldots,k_d\\ \sum k_i = k}} \frac{\kappa_{k_1,\ldots,k_d}}{k_1!\cdots k_d!} \vec \la^{\vec{k}}} (\imath t)^k\right) 
           \end{align*}
           Comparing with the $d=1$ form of \eqref{eq:Fourier-PG} gives the result.
        \end{proof}

% \textbf{TODO:} Write relevant facts about PG distributions:
	
	% \begin{enumerate}[(i)]
	% 	\item integral formula just before Theorem 1.16, then refer to it.
	% %	\item\label{projPG} `projection property': if $z_j\colon j \in I$ are \pg\ with kernels $K$, $C$ then $z_j\colon j\in J$ are also paragaussian with $K\vert_J$, $C\vert_J$ for any $J\subset I$. The way to prove this is to use the characteristic function definition and set $\lambda_j=0$ for all $j\notin J$
	% %	\item Any linear combination of \pg\ is para-Gaussian
	% %	\item The Edgeworth theorem says: sums of i.i.d. random variables (and some others) are asymptotically para-Gaussian.
	% \end{enumerate} 

    We end this section with a remark about computing expectation of small-order polynomials with respect to the \pg\ law. This was exploited in \cite{NicaOrtmann2024M3L} to obtain the one-point distribution of the neural network.
    
    \begin{rem}
    \label{rem:expectation-low-order-poly}
        Suppose that $Z_1,\ldots,Z_d$ are distributed according to $\pglk\Si\ka n k$. For any polynomial $p$ in $d$ variables of degree less than $k$, the expectation of $p(Z)$ is simply the corresponding Gaussian integration:
        $$ \bE \ab{p(Z)} = \GaussExp\Si \ab{p(Z)}.$$
        This is because the polynomial $p$ can be written as a linear combination of $\wt H_\ell$ for $\ell<k$. We can then once more appeal to orthogonality of the Hermite polynomials with respect to the Gaussian measure (see Theorem 2.4 of \cite{NicaOrtmann2026}) to see that none of them contribute to the expecation, and only the Gaussian term remains.
    \end{rem}

\section{Alternate hands-on proof of Theorem \ref{thm:exact-general-cumulants}}
\label{subsec:second-proof-exact-general-cumulants}

This proof uses the hands-on direct definition of cumulants from the exponential generating function rather than the slick law of total cumulance.

    \begin{proof}[Second proof of Theorem \ref{thm:exact-general-cumulants}]
Let $\vec{\la} = \left( \la_{\al,j} \right)_{\al \in A, j\in [n]}$ be a set of coefficients that index all of $A \times [n]$, and consider the characteristic function $\Phi(\vec\la) := \mathbb{E}[e^{\imath \vec{\la} \cdot z^\ell}]$. On one hand, by the very definition of the multivariable cumulants, we can read the cumulants as the coefficients in the expansion of $\ln \Phi(\vec{\la})$, namely 
\begin{equation}
\label{eq:ln-char-func}
\ln\Phi(\vec\la)=\sum_{q\geq1}\frac{\imath^q}{q!}\sum_{(\alpha_1,j_1),\ldots,(\alpha_q,j_q)\in (A\times[n])^q}
\ka\rb{ \set{z^\ell_{\al_k,j_k} }_{k=1}^q }\la_{\al_1,j_1}\cdots\la_{\al_q,j_q},
\end{equation}
On the other hand, we will now calculate $\ln \Phi(\vec{\la})$ directly in terms of $\Si$. Comparing this calculation will yield the desired result by matching the terms with \eqref{eq:ln-char-func}. 

Firstly, we claim that the joint characteristic function of layer $\ell$ can be written as
$\Phi(\vec\la)=\bE\ab{e^{-Y_n}}$, where the random variable $Y_n$ is simply the quadratic form:

\begin{equation}
\label{eq:Y-as-SiHat}
Y_n
:=\frac12\,\vec\la^{T}\rb{I_n\otimes\Si^\ell}\vec\la = \frac12\sum_{(\al_1,j_1),(\al_2,j_2)\in(A\times[n])^2}
\delta_{j_1,j_2}\,\Si^\ell(\al_1,\al_2)\,\la_{\al_1,j_1}\la_{\al_2,j_2}.
\end{equation}
This follows directly from the the fact that $z^\ell$ is conditionally Gaussian with covariance $I_n \otimes \Si^\ell$, and invoking the charactersitic function of a Gaussian.

Next, we can compute $\ln \Phi(\vec{\la}) = \ln\mathbb{E}[e^{-Y_n}]$ by using the cumulant expansion for the scalar random variable $Y$, namely $\ln\bE\ab{e^{tY}}=\sum_{p\geq1}\frac{t^p}{p!}\ka_p(Y)$, where  $\ka_p(Y)=\kappa(\underbrace{Y,\ldots,Y}_p)$ is the $p$-th cumulant of $Y$. Taking $t=-1$, this gives
\begin{equation}
\label{eq:lnPhi-Y}
\ln\Phi(\vec\la)=\ln\bE\ab{e^{-Y_n}}
=\sum_{p\geq1}\frac{(-1)^p}{p!}\,\ka_p(Y_n).
\end{equation}

By multilinearity of the cumulants, and since $Y_n$ is a sum over $(A\times[n])^2$ random variables as in \eqref{eq:Y-as-SiHat}, the cumulant  $\ka_p(Y_n)=\ka(Y_n,\ldots,Y_n)$ is a sum over
$2p$-tuples in $(A\times[n])^{2p}$, namely:
\begin{equation}
\label{eq:kappa-p-Y}
\ka_p(Y_n)=\frac1{2^p}\sum_{(\al_1,j_1),\ldots,(\al_{2p},j_{2p})\in(A\times[n])^{2p}}
T\Big((\al_1,j_1),\ldots,(\al_{2p},j_{2p})\Big)\;\la_{\al_1,j_1}\cdots\la_{\al_{2p},j_{2p}},
\end{equation}
where
\[
T\Big((\al_1,j_1),\ldots,(\al_{2p},j_{2p})\Big)
:=\Big(\prod_{k=1}^{p}\delta_{j_{2k-1},j_{2k}}\Big)\,
\ka\rb{\Si^\ell(\al_1,\al_2),\Si^\ell(\al_3,\al_4),\ldots,\Si^\ell(\al_{2p-1},\al_{2p})} .
\]
That is, the $k$-th copy of $Y_n$ contributes the consecutive pair of indices $(\al_{2k-1},j_{2k-1}),(\al_{2k},j_{2k})$.
Substituting into \eqref{eq:lnPhi-Y},
\begin{equation}
\label{eq:lnPhi-SiHat}
\ln\Phi(\vec\la)=\sum_{p\geq1}\frac{(-1)^p}{2^p\,p!}
\sum_{(\al_1,j_1),\ldots,(\al_{2p},j_{2p})\in(A\times[n])^{2p}}
T\Big((\al_1,j_1),\ldots,(\al_{2p},j_{2p})\Big)\;\la_{\al_1,j_1}\cdots\la_{\al_{2p},j_{2p}} .
\end{equation}

\medskip
Finally, we can match the coefficents of \eqref{eq:ln-char-func} and \eqref{eq:lnPhi-SiHat}, which both express
$\ln\Phi$ as sums over ordered tuples of indices, to obtain the desired resul. We match them degree by degree.

\emph{Odd degrees.} \eqref{eq:lnPhi-SiHat} contains only even powers of $\vec\la$, so all terms with $q$ odd in
\eqref{eq:ln-char-func} must vanish: all odd cumulants of the $z^\ell_{\al,j}$ are $0$.

\emph{Even degrees.} For $q=2p$ we have $\imath^{2p}=(-1)^p$, and comparing the degree-$2p$ parts gives
\begin{equation}
\label{eq:degree-2p-match}
\frac{1}{(2p)!}\sum_{(A\times[n])^{2p}}
\ka\rb{\set{z^\ell_{\al_i,j_i}}_{i=1}^{2p}}\la_{\al_1,j_1}\cdots\la_{\al_{2p},j_{2p}}
=\frac{1}{2^p\,p!}\sum_{(A\times[n])^{2p}}T\Big((\al_1,j_1),\ldots,(\al_{2p},j_{2p})\Big)\la_{\al_1,j_1}\cdots\la_{\al_{2p},j_{2p}} .
\end{equation}
To read off the cumulant, notice that a given monomial $\la_{\al_1,j_1}\cdots\la_{\al_{2p},j_{2p}}$ is produced by every
rearrangement of the tuple $(\al_1,j_1),\ldots,(\al_{2p},j_{2p})$, i.e.\ by the tuples
$(\al_{\si(1)},j_{\si(1)}),\ldots,(\al_{\si(2p)},j_{\si(2p)})$ for permutations $\si \in S_{2p}$, the group of permutations on $\{1,\ldots,2p\}$; if some indices
are repeated, each distinct tuple arises from the same number of permutations $\si$ on both sides.

Hence comparing the coefficient of a fixed monomial on both sides of \eqref{eq:degree-2p-match} gives actually a sum over all the permutations $\si \in S_{2p}$
\[
\frac{1}{(2p)!}\sum_{\si \in S_{2p}}
\ka\rb{\set{z^\ell_{\si(\al_k),\si(j_k)}}_{k=1}^{2p}}
=\frac{1}{2^p\,p!}\sum_{\si \in S_{2p}}T\Big((\al_{\si(1)},j_{\si(1)}),\ldots,(\al_{\si(2p)},j_{\si(2p)})\Big),
\]
where both sums run over all $(2p)!$ permutations $\si$. The cumulant is symmetric in its arguments, so every term on
the left equals $\ka\rb{z^\ell_{\al_1,j_1},\ldots,z^\ell_{\al_{2p},j_{2p}}}$, and therefore
\begin{equation}
\label{eq:kappa-as-sum-over-perms}
\ka\rb{\set{z^\ell_{\al_k,j_k}}_{k=1}^{2p}}
=\frac{1}{2^p\,p!}\sum_{\si \in S_{2p}}T\Big((\al_{\si(1)},j_{\si(1)}),\ldots,(\al_{\si(2p)},j_{\si(2p)})\Big) .
\end{equation}

Finally, we claim that the sum over permutations $S_{2p}$ on the RHS of \eqref{eq:kappa-as-sum-over-perms} can actually be written as a sum over pairings $\mathcal{P}(2p)$, namely
\begin{equation}
\label{eq:permutations-to-pairing}
\frac{1}{2^p\,p!}\sum_{\si \in S_{2p}}T\Big(\al_{\si(1)},j_{\si(1)}),\ldots,(\al_{\si(2p)},j_{\si(2p)})\Big)  = \sum_{\set{\set{a_1,b_1},\ldots,\set{a_p,b_p}}\in\mathcal P(2p)}\Big(\prod_{k=1}^p\delta_{j_{a_k},j_{b_k}}\Big)
\ka\rb{\set{\Si^\ell(\al_{a_k},\al_{b_k})}_{k=1}^p},
\end{equation}

 To see \eqref{eq:permutations-to-pairing}, notice that the value of $T$ only depends on which positions are grouped into the consecutive pairs $\{1,2\},\{3,4\},\ldots,\{2p-1,2p\}$: it is unchanged if we swap the two entries of any
pair (both $\delta_{j,j'}$ and $\Si^\ell(\al,\al')$ are symmetric) or if we permute the $p$ pairs among themselves (the
joint cumulant is symmetric in its arguments). These operations form a group of $2^p\,p!$ permutations. Consequently
the $(2p)!$ permutations $\si \in S_{2p}$ in \eqref{eq:kappa-as-sum-over-perms} split into equivalence classes of exactly $2^p\,p!$
permutations each, all giving the same value of $T$, and each class corresponds to exactly one pairing
$\set{\set{a_1,b_1},\ldots,\set{a_p,b_p}}\in\mathcal P(2p)$, namely the pairing
$\set{\si(1),\si(2)},\ldots,\set{\si(2p-1),\si(2p)}$. (There are $(2p)!/(2^p\,p!)=(2p-1)!!$ classes, which is indeed
$\abs{\mathcal P(2p)}$.) The common value of $T$ on the class of the pairing $\set{\set{a_1,b_1},\ldots,\set{a_p,b_p}}$ is
$\big(\prod_{k=1}^p\delta_{j_{a_k},j_{b_k}}\big)\,\ka\rb{\set{\Si^\ell(\al_{a_k},\al_{b_k})}_{k=1}^p}$. The factor
$2^p\,p!$ cancels the prefactor on the LHS of\eqref{eq:permutations-to-pairing} and yields the RHS. 

The result follows by combining \eqref{eq:kappa-as-sum-over-perms} with \eqref{eq:permutations-to-pairing}.
\end{proof}

    \printbibliography

\end{document}